\documentclass{article}
\usepackage[margin=1in]{geometry}

\usepackage{mymath,natbib}
\bibliographystyle{plainnat}

\title{
\LARGE
Large Stepsize Gradient Descent for Non-Homogeneous\\ Two-Layer Networks: Margin Improvement and Fast Optimization
}

\author{
Yuhang Cai\thanks{UC Berkeley. Email: \texttt{willcai@berkeley.edu}}
  \and
Jingfeng Wu\thanks{UC Berkeley. Email: \texttt{uuujf@berkeley.edu}}
  \and
  Song Mei\thanks{UC Berkeley. Email: \texttt{songmei@berkeley.edu}}
  \and
  Michael Lindsey\thanks{UC Berkeley and Lawrence Berkeley National Laboratory. Email: \texttt{lindsey@berkeley.edu}}
  \and
  Peter L. Bartlett\thanks{UC Berkeley and Google DeepMind. Email: \texttt{peter@berkeley.edu}}
}

\date{\today}

\begin{document}

\maketitle

\begin{abstract}
The typical training of neural networks using large stepsize gradient descent (GD) under the logistic loss often involves two distinct phases, where the empirical risk oscillates in the first phase but decreases monotonically in the second phase. We investigate this phenomenon in two-layer networks 
that satisfy a near-homogeneity condition.
We show that the second phase begins once the empirical risk falls below a certain threshold, dependent on the stepsize. Additionally, we show that the normalized margin grows nearly monotonically in the second phase, 
demonstrating
an implicit bias of GD in training non-homogeneous predictors. If the dataset is linearly separable and the derivative of the activation function is bounded away from zero, we show that the average empirical risk decreases, implying that the first phase must stop in finite steps. Finally, we demonstrate that by choosing a suitably large stepsize, GD that undergoes this phase transition is more efficient than GD that monotonically decreases the risk.
Our analysis applies to networks of any width, beyond the well-known neural tangent kernel and mean-field regimes.

\end{abstract}

\section{Introduction}

Neural networks are mostly optimized by {\it gradient descent} (GD) or its variants. 
Understanding the behavior of GD is one of the key challenges in deep learning theory. 
However, there is a nonnegligible discrepancy between the GD setups in theory and in practice.
In theory, GD is mostly analyzed with relatively small stepsizes such that its dynamics are close to the continuous \emph{gradient flow} dynamics, although a few exceptions will be discussed later. 
However, in practice, GD is often used with a relatively large stepsize, with behaviors significantly deviating from that of small stepsize GD or gradient flow. 
Specifically, notice that small stepsize GD (hence also gradient flow) induces monotonically decreasing empirical risk, but in practice, good optimization and generalization performance is usually achieved when the stepsize is large and the empirical risk {oscillates} \citep[see][for example]{wu2018sgd,cohen2020gradient}.
Therefore, it is unclear which of the theoretical insights drawn from analyzing small stepsize GD apply to large stepsize GD used practically. 

The behavior of small stepsize GD is relatively well understood. For instance, classical optimization theory suggests that GD minimizes convex and $L$-smooth functions if the stepsize $\tilde\eta$ is well below $2/L$, with a convergence rate of $\Ocal(1/(\tilde\eta t))$, where $t$ is the number of steps \citep{nesterov2018lectures}. More recently, \citet{soudry2018implicit,ji2018risk} show an \emph{implicit bias} of small stepsize GD in logistic regression with separable data, where the direction of the GD iterates converges to the max-margin direction. 
Subsequent works extend their implicit bias theory from linear model to \emph{homogenous} networks \citep{lyu2020gradient,chizat2020implicit,ji2020directional}.
These theoretical results all assume the stepsize of GD is small (and even infinitesimal) such that the empirical risk decreases monotonically and, therefore cannot be directly applied to large stepsize GD used in practice.

More recently, large stepsize GD that induces oscillatory risk has been analyzed in simplified setups \citep[see][for an incomplete list of references]{ahn2023learning,zhu2022understanding,kreisler2023gradient,chen2023beyond,wang2022large,wu2023implicit,wu2024large}.
In particular, in logistic regression with linearly separable data, \citet{wu2023implicit} showed that the implicit bias of GD (that maximizes the margin) holds not only for small stepsizes \citep{soudry2018implicit,ji2018risk} but also for an arbitrarily large stepsize.
In the same problem, \citet{wu2024large} further showed that large stepsize GD that undergoes risk oscillation can achieve an $\tilde\Ocal(1/t^2)$ empirical risk, whereas small stepsize GD that monotonically decreases the empirical risk must suffer from a $\Omega(1/t)$ empirical risk. 
Nonetheless, these theories of large stepsize GD are limited to relatively simple setups such as linear models. The theory of large stepsize GD for non-linear networks is underdeveloped.



This work fills the gap by providing an analysis of large stepsize GD for non-linear networks. In the following, we set up our problem formally and summarize our contributions. 

\begin{figure}[t!]
\centering
\subfigure[Empirical risk]{
    \includegraphics[width=.3\linewidth]{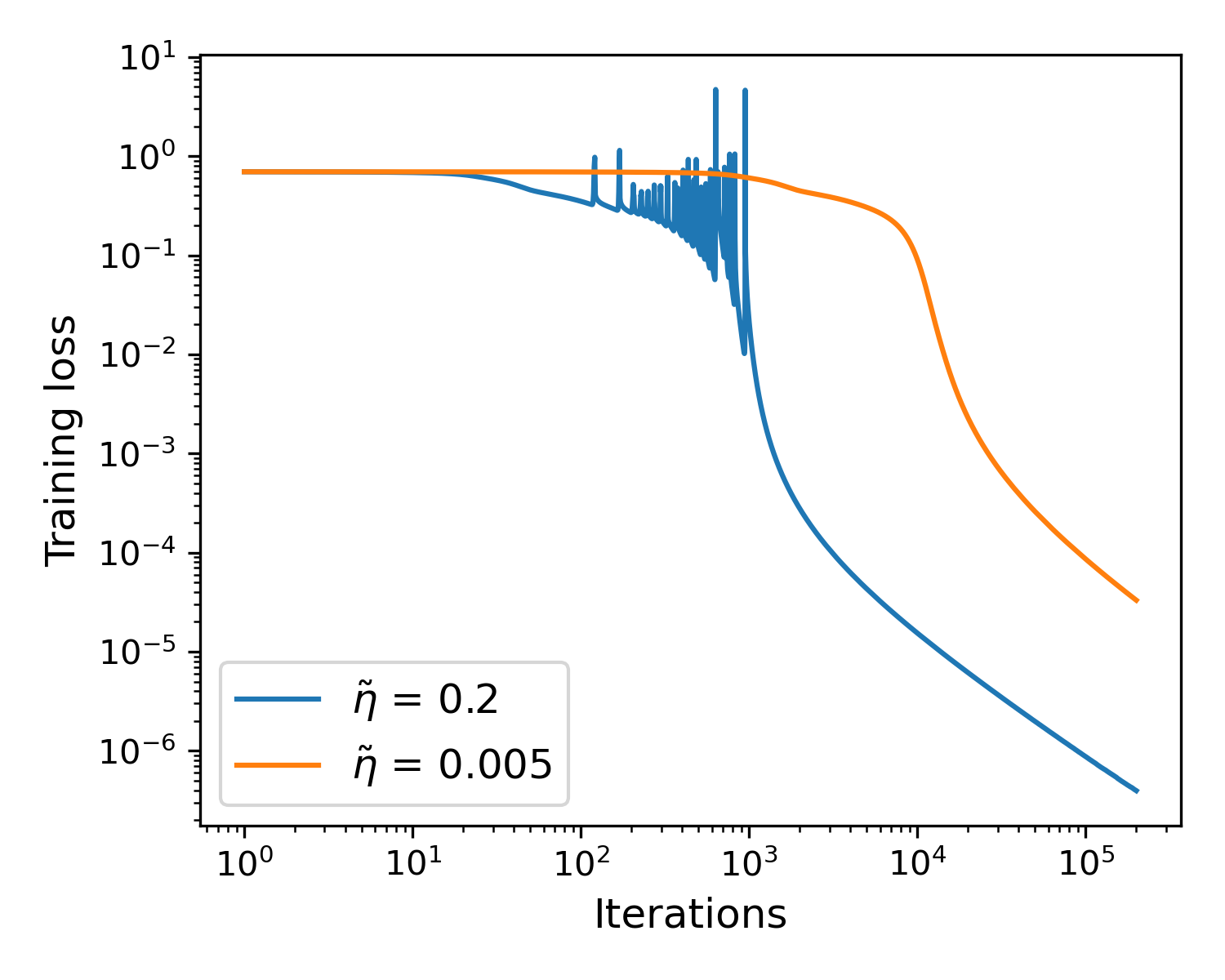}
    \label{fig:sfig1-loss}
}
\hfill 
\subfigure[Normalized margin]{
    \includegraphics[width=.3\linewidth]{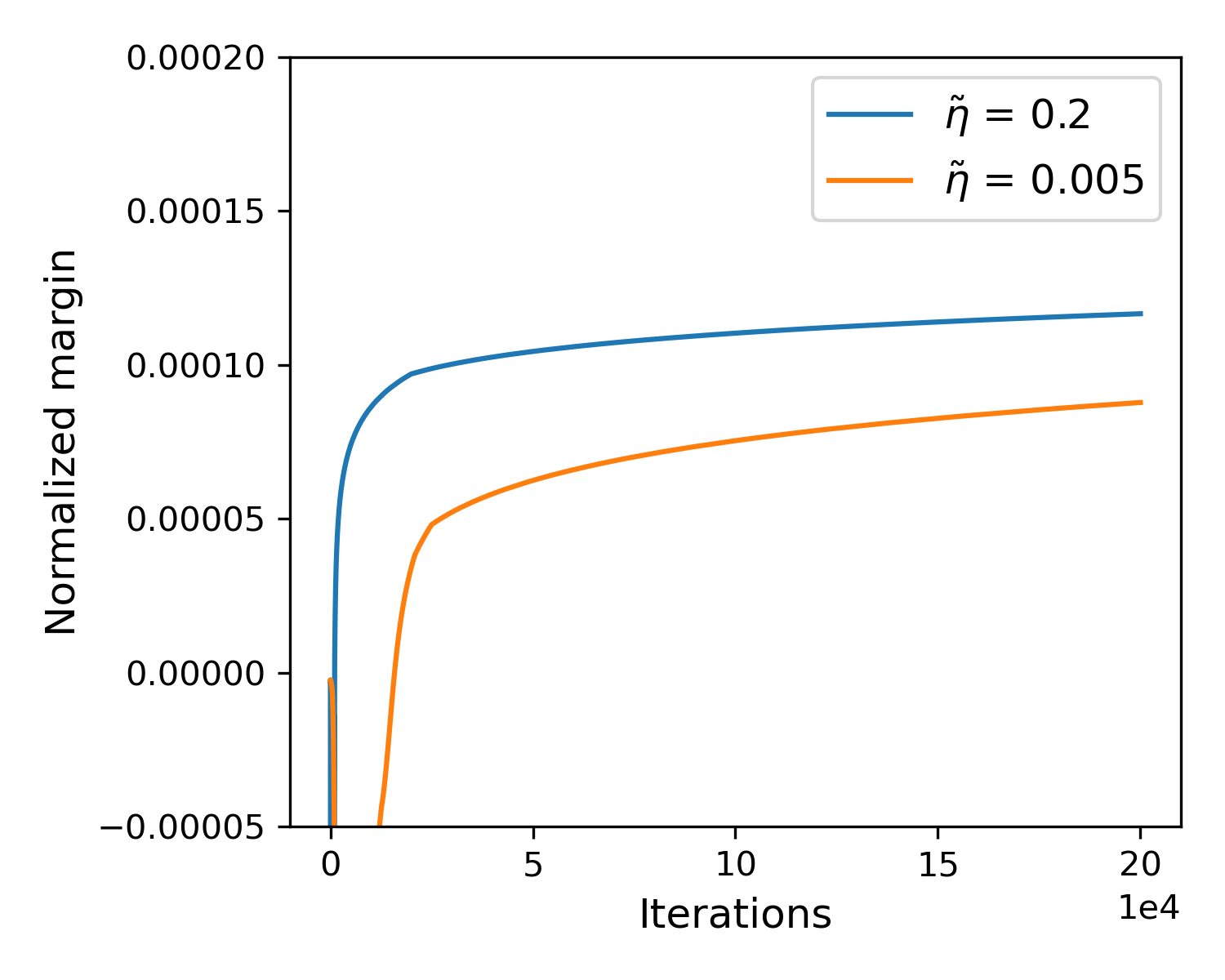}
    \label{fig:sfig1-margin}
}
\hfill 
\subfigure[Test accuracy]{
    \includegraphics[width=.3\linewidth]{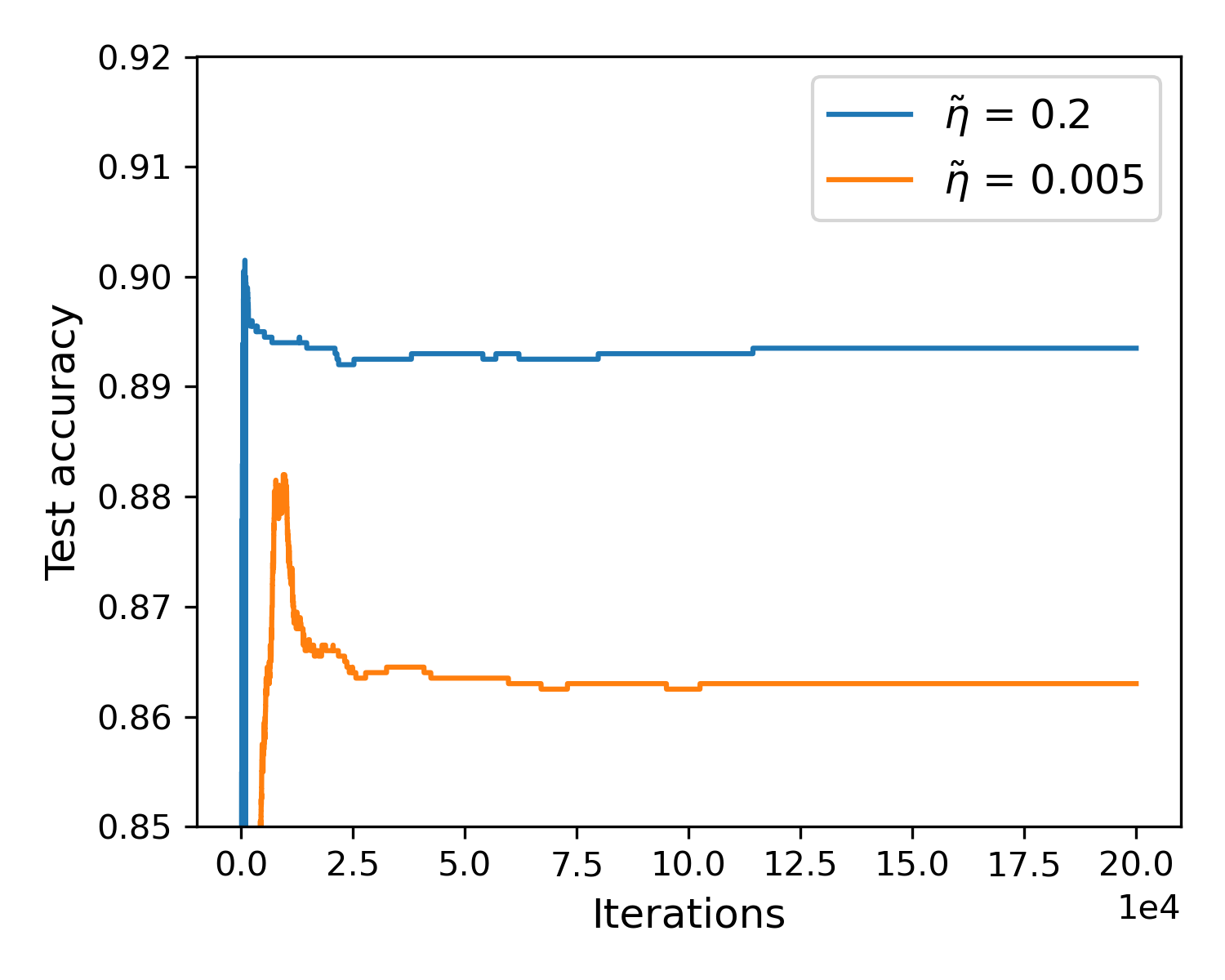}
    \label{fig:sfig1-acc}
}
\caption{The behavior of \eqref{eq: GD} for optimizing a non-homogenous four-layer MLP with GELU activation function on a subset of CIFAR-10 dataset. We randomly sample $6,000$ data with labels ``airplane'' and ``automobile'' from CIFAR-10 dataset. 
The normalized margin is defined as $\big(\arg\min_{i\in [n]} y_if (\wB_t;\xB_i)\big) / \|\wB_t\|^4$, which is close to  \Cref{eq: norm margin}.
The blue curves correspond to GD with a large stepsize $\tilde \eta=0.2$, where the empirical risk oscillates in the first phase but decreases monotonically in the second phase. 
The orange curves correspond to GD with a small stepsize $\tilde \eta =0.005$, where the empirical risk decreases monotonically.
Furthermore, \Cref{fig:sfig1-margin} suggests the normalized margins of both two curves increase and converge in the stable phases. 
Finally, \Cref{fig:sfig1-acc} suggests that large stepsize achieves a better test accuracy, consistent with larger-scale learning experiment \citep{hoffer2017train,goyal2017accurate}. 
More details can be found in \Cref{sec:experiments}.
}
\label{fig:sfig1}
\end{figure}

\paragraph{Setup.}
Consider a binary classification dataset $(\xB_i,y_i)_{i=1}^n$, where $\xB_i\in \Rbb^d$ is a feature vector and $y_i\in \{\pm 1\}$ is a binary label. For simplicity, we assume $\|\xB_i\|\le 1$ for all $i$ throughout the paper. 
For a predictor $f$, the empirical risk under logistic loss is defined as
\begin{equation}
\label{eq: logistic loss}
    L(\wB) := \frac{1}{n}\sum_{i=1}^n \ell( y_i f(\wB; \xB_i)), \quad \ell(t) := \log(1+e^{-t}).
\end{equation}
Here, the predictor $f(\wB; \cdot): \Rbb^d \mapsto \Rbb$ is parameterized by trainable parameters $\wB$ and is assumed to be continuously differentiable for $\wB$.
The predictor is initialized from $\wB_0$ and then trained by \emph{gradient descent} (GD) with a constant stepsize $\tilde\eta >0$, that is,
\begin{equation}
\label{eq: GD}
    \wB_{t+1} := \wB_{t} - \tilde \eta  \nabla L(\wB_t),\quad t\ge 0. \tag{\text{GD}}
\end{equation}
We are interested in a non-linear predictor $f$ and a large stepsize $\tilde\eta$. A notable example in our theory is two-layer networks with Lipschitz, smooth, and nearly homogenous activations (see \Cref{eq: 2nn}). Note that minimizing $L(\wB)$ is a non-convex problem in general. 

\paragraph{Observation.}
Empirically, large stepsize GD often undergoes a phase transition, where the empirical risk defined in \Cref{eq: logistic loss} oscillates in the first phase but decreases monotonically in the second phase (see empirical evidence in Appendix A in \citep{cohen2020gradient} and a formal proof in \citep{wu2024large} for linear predictors). This is illustrated in \Cref{fig:sfig1} (the experimental setup is described in \Cref{sec:experiments}). We follow \citet{wu2024large} and call the two phases the \emph{edge of stability} (EoS) phase \citep[name coined by][]{cohen2020gradient} and the \emph{stable} phase, respectively. 

\paragraph{Contributions.}
We prove the following results for large stepsize GD for training non-linear predictors under logistic loss.

\begin{enumerate}[leftmargin=*]
    \item 
    For Lipschitz and smooth predictor $f$ trained by GD with stepsize $\tilde\eta$, we show that as long as the empirical risk is below a threshold depending on $\tilde\eta$, GD monotonically decreases the empirical risk (see \Cref{thm: Implicit bias and bound of stable phase}). This result extends the stable phase result in \citet{wu2024large} from linear predictors to non-linear predictors, demonstrating the generality of the existence of a stable phase.

    \item Assuming that GD enters the stable phase, if in addition the preditor has a bounded homogenous error (see \Cref{assump:model:near-homogeneous}), we show that the normalized margin induced by GD, $\min_i y_i f(\wB_t;\xB_i)/ \|\wB_t\|$, nearly monotonically increases (see \Cref{thm: Implicit bias and bound of stable phase}). 
    This characterizes an implicit bias of GD for \emph{non-homogenous} predictors. 
    In particular, our theory covers two-layer networks with non-homogenous activations such as GeLU and SiLU that cannot be covered by existing results \citep{lyu2020gradient,ji2020directional,chizat2020implicit}.
    

    \item Under additional technical assumptions (the dataset is linearly separable and the derivative of the activation function is bounded away from zero), we show that the initial EoS phase must stop in $\Ocal(\tilde\eta)$ steps and GD transits to the stable phase afterwards. Furthermore, by choosing a suitably large stepsize, GD achieves a $\tilde\Ocal(1/t^2)$ empirical risk after $t$ steps. In comparison, GD that converges monotonically incurs an $\Omega(1/t)$ risk. This result indicates an optimization benefit of using large stepsize and generalizes the results in \citep{wu2024large} from linear predictors to neural networks.

    
    

\end{enumerate}


\section{Stable Phase and Margin Improvement}\label{sec:stable}
In this section, we present our results for the stable phase of large stepsize GD in training non-linear predictors.
Specifically, our results apply to non-linear predictors that are Lipschitz, smooth, and nearly homogeneous, as described by the following assumption.

\begin{assumption}[Model conditions]\label{assump:model}
Consider a predictor $f(\wB;\xB_i)$, where $\xB_i$ is one of the feature vectors in the training set.   
\begin{assumpenum}
    \item \label{assump:model:bounded-grad} \textbf{Lipschitzness}. Assume there exists $\rho>0$  such that for every $\wB$, $\sup_i \|\nabla_\wB f(\wB;\xB_i)\| \le \rho$.
    \item \label{assump:model:smooth} \textbf{Smoothness}.
    Assume there exists $\beta>0$ such that for all $\wB, \vB$, 
    \[ \|\grad f(\wB;\xB_i) - \grad f(\vB;\xB_i)\| \le \beta \|\wB -\vB\|,\quad i=1,\dots,n.\]
    \item \label{assump:model:near-homogeneous} \textbf{Near-homogeneity}. Assume there exists $\kappa>0$ such that for every $\wB$, 
    \[
        |f(\wB;\xB_i) - \langle \nabla_\wB f(\wB; \xB_i), \wB\rangle | \le \kappa,\quad i=1,\dots,n.
    \]
\end{assumpenum}
\end{assumption}
\Cref{assump:model:bounded-grad,assump:model:smooth} are commonly used conditions in the optimization literature. 
If $\kappa=0$, then \Cref{assump:model:near-homogeneous} requires the predictor to be exactly $1$-homogenous. Our \Cref{assump:model:near-homogeneous} allows the predictor to have a bounded homogenous error. 
It is clear that linear predictors $f(\wB;\xB) : = \wB^\top \xB$ satisfy \Cref{assump:model} with $\rho=\sup_{i}\|\xB_i\|\le 1$, $\beta=0$, and $\kappa = 0$.
Another notable example is \emph{two-layer networks} given by
\begin{equation}
   \label{eq: 2nn}
    f(\wB; \xB) := \frac{1}{m}\sum_{j=1}^m a_j \phi(\xB^\top \wB^{(j)}),  \quad \wB^{(j)} \in \Rbb^d, \quad j=1,\dots,m,
\end{equation}
where we assume $a_j \in \{\pm 1\}$ are fixed and $\wB = (\wB^{(j)} )_{j=1}^m \in \Rbb^{md}$ are the trainable parameters. 
We define two-layer networks with the mean-field scaling \citep{song2018mean,chizat2020implicit}. However, our results hold for any width.
The effect of rescaling the model will be discussed in \Cref{sec:transition}.
The following example shows that \Cref{assump:model} covers two-layer networks with many commonly used activations $\phi(\cdot)$. The proof is provided in \Cref{sec: eg homo act}.


\begin{example}[Two-layer networks]
    \label{eg: homo act}
    Two-layer networks defined in \eqref{eq: 2nn} with the following activation functions satisfy \Cref{assump:model} with the described constants:  
    \begin{itemize}[leftmargin=*]
         \item \textbf{GELU}. $\phi(x) :=  \frac{x}{2}\erf \big( 1+(x/ \sqrt{2})\big) $ with $\kappa= {e^{-1/2}}/{\sqrt{2\pi}}$, $\beta=2/m$, and $\rho =  (\sqrt{2\pi}+ e^{-1/2})/\sqrt{2\pi m}$.
    \item \textbf{Softplus}. $\phi(x) := \log(1+e^x)$ with $\kappa=\log{2}$, $\beta=1/m$, and $\rho = 1/\sqrt{m}$.
    \item \textbf{SiLU}. $\phi(x) := {x}/{(1+e^{-x})}$  with $\kappa=1$, $ \beta=4/m$, and $\rho =  2/\sqrt{m}$.
    \item \textbf{Huberized ReLU} \citep{chatterji2021does}. For a fixed $h>0$,
    \[
    \phi(x) \coloneqq 
    \begin{cases}
        0 & x<0, \\ 
        \frac{x^2}{2 h} &  0\le x \le h, \\ 
        x-\frac{h}{2} &x>h,
    \end{cases}
    \]
    with $\kappa=h/2$, $\beta=1/(hm)$, and $\rho=1/\sqrt{m}$. 
    \end{itemize}
\end{example}

\paragraph{Margin for nearly homogenous predictors.}
For a nearly homogenous predictor $f(\wB;\cdot)$ (see \Cref{assump:model:near-homogeneous}), we define its \emph{normalized margin} (or \emph{margin} for simplicity) as
\begin{equation}
\label{eq: norm margin}
    \bar \gamma(\wB) := \frac{\min_{i\in[n]} y_i f(\wB;\xB_i)}{\|\wB\|}. 
\end{equation}
A large normalized margin $\bar \gamma(\wB)$ guarantees the prediction of each sample is away from the decision boundary. 
The normalized margin \Cref{eq: norm margin} is introduced by \citet{lyu2020gradient} for homogenous predictors.
However, we show that the same notion is also well-defined for non-homogenous predictors that satisfy \Cref{assump:model:near-homogeneous}.
The next theorem gives sufficient conditions for large stepsize GD to enter the stable phase in training non-homogenous predictors and characterizes the increase of the normalized margin. The proof of \Cref{thm: Implicit bias and bound of stable phase} is deferred to \Cref{sec:stable-phase}.

\begin{theorem}
[Stable phase and margin improvement]
\label{thm: Implicit bias and bound of stable phase}
Consider \Cref{eq: GD} with stepsize $\tilde\eta$ on a predictor $f(\wB;\xB)$ that satisfies \Cref{assump:model:bounded-grad,assump:model:smooth}.
If there exists $r \ge 0$ such that 
\begin{equation}
\label{eq: loss thresh1}
    L(\wB_r) \le  \frac{1}{\tilde \eta (2\rho^2 +  \beta)} ,
\end{equation}
then GD is in the stable phase for $t\ge r$, that is, $(L(\wB_t))_{t\ge r}$ decreases monotonically. 
If additionally the predictor satisfies \Cref{assump:model:near-homogeneous} and there exists $s \ge 0$ such that 
\begin{equation}
\label{eq: loss thresh2}
    L(\wB_s) \le \min \Big\{\frac{1}{e^{\kappa+2}2n},\ \frac{1}{\tilde \eta (4\rho^2 +  2\beta)}\Big\} ,
\end{equation}
we have the following for $t \ge s$:
\begin{itemize}[leftmargin=*]
\item \textbf{Risk convergence}. $L(\wB_t) = \Theta(1/t)$, where $\Theta$ hides the dependence on $L(\wB_s), \|\wB_s\|, \tilde \eta$ and $\rho$. 
 \item \textbf{Parameter increase}. $\|\wB_{t+1}\|  \ge  \|\wB_t\|$ and $\|\wB_t\| = \Theta(\log  (t))$, where $\Theta$ hides the dependence on $n$, $\rho$, $\kappa$, $\tilde \eta$,  $\|\wB_s\|$, $L(\wB_s)$ and $f(\wB_s;\xB_i) $ for $i=1,\dots,n$.
\item \textbf{Margin improvement}. There exists a modified margin function $\gamma^c(\wB)$ such that 
\begin{itemize}
    \item $\gamma^c(\wB_t)$ is increasing and bounded.
    \item $\gamma^c(\wB_t)$ is a multiplicative approximiator of $\bar \gamma(\wB_t)$, that is,
    there exists $c>0$ such that 
    \[
    \gamma^c(\wB_t) \le  \bar \gamma(\wB_t) \le \bigg(1+\frac{c}{\log (1/{L(\wB_t))}} \bigg) \gamma^c(\wB_t),\quad t\ge s.
    \]
\end{itemize}
As a direct consequence, $\lim_{t\to\infty} \bar \gamma(\wB_t) =\lim_{t\to\infty} \gamma^c(\wB_t) $.
\end{itemize}
\end{theorem}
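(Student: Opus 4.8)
The argument combines three ingredients, which are the discrete-time, large-stepsize, and near-homogeneous versions of (i) the self-bounding descent analysis for the logistic loss (as for linear predictors in \citet{wu2024large}), (ii) a ``radial'' gradient-domination estimate obtained by pairing $\nabla L$ with the current iterate, and (iii) the monotone smoothed-margin analysis of \citet{lyu2020gradient,ji2020directional,chizat2020implicit} for homogeneous networks. Throughout I use the elementary facts $\ell''(t)\le|\ell'(t)|\le\ell(t)$, $\ell(a)\le e^{|a-b|}\ell(b)$, and $|\ell'(t)|\ge(1-e^{-t})\ell(t)$, together with the consequences $\|\nabla L(\wB)\|\le\rho L(\wB)$ and $\|\nabla^2 L(\wB)\|\le(\rho^2+\beta)L(\wB)$ of \Cref{assump:model:bounded-grad,assump:model:smooth}. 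The stable-phase claim is a self-bounding descent lemma: along one step each margin $y_i f(\cdot;\xB_i)$ moves by at most $\tilde\eta\rho\|\nabla L(\wB_t)\|\le\tilde\eta\rho^2 L(\wB_t)$, so the loss on the segment $[\wB_t,\wB_{t+1}]$ stays within a factor $e^{\tilde\eta\rho^2 L(\wB_t)}\le e^{1/2}$ of $L(\wB_t)$ when $L(\wB_t)\le 1/(\tilde\eta(2\rho^2+\beta))$, and a second-order Taylor expansion with the Hessian bound then gives $L(\wB_{t+1})\le L(\wB_t)-c\tilde\eta\|\nabla L(\wB_t)\|^2$ for an absolute $c>0$; inducting from step $r$ preserves the threshold and proves the monotone decrease.

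For the rest, $L(\wB_s)\le 1/(2ne^{\kappa+2})$ forces every per-example margin --- at step $s$, hence at every $t\ge s$ since $L$ is nonincreasing --- to satisfy $y_if(\wB_t;\xB_i)\ge M_t-\log 2\ge\kappa+2$, where $M_t:=-\log(nL(\wB_t))$. By \Cref{assump:model:near-homogeneous}, $y_i\langle\nabla f(\wB_t;\xB_i),\wB_t\rangle\ge y_if(\wB_t;\xB_i)-\kappa>0$, so $-\langle\nabla L(\wB_t),\wB_t\rangle=\frac1n\sum_i|\ell'(y_if(\wB_t;\xB_i))|\,y_i\langle\nabla f(\wB_t;\xB_i),\wB_t\rangle$ is at least $(\min_i y_if(\wB_t;\xB_i)-\kappa)\cdot\frac1n\sum_i|\ell'(y_if(\wB_t;\xB_i))|$, a constant multiple of $(M_t-\kappa)L(\wB_t)$. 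Expanding $\|\wB_{t+1}\|^2=\|\wB_t\|^2-2\tilde\eta\langle\nabla L(\wB_t),\wB_t\rangle+\tilde\eta^2\|\nabla L(\wB_t)\|^2$ and dropping the last term gives the parameter-increase claim $\|\wB_{t+1}\|\ge\|\wB_t\|$; and $L(\wB_{t+1})\ge L(\wB_t)-2\tilde\eta\|\nabla L(\wB_t)\|^2\ge L(\wB_t)-2\tilde\eta\rho^2 L(\wB_t)^2$, from the lower half of the descent lemma, telescopes to the easy bound $L(\wB_t)=\Omega(1/t)$.

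The crux is the margin improvement. I take $\gamma^c(\wB):=(M(\wB)-c_0)/\|\wB\|$ with $M(\wB):=-\log(nL(\wB))$ and $c_0=c_0(\kappa,n)$ slightly larger than $\kappa+\log 2$; this $c_0$ absorbs both the near-homogeneity defect and the gap $|M(\wB)-\min_i y_if(\wB;\xB_i)|\le\log(2n)$. The sandwich $M(\wB_t)-c_0\le\min_i y_if(\wB_t;\xB_i)\le M(\wB_t)+\log n$ yields $\gamma^c(\wB_t)\le\bar\gamma(\wB_t)$ and $\bar\gamma(\wB_t)-\gamma^c(\wB_t)\le(c_0+\log n)/\|\wB_t\|$, and since $M(\wB_t)=\log(1/L(\wB_t))-\log n$ this is the claimed multiplicative bound (with $c$ depending on $\kappa,n$); boundedness of $\gamma^c(\wB_t)$ is immediate from $\bar\gamma(\wB_t)\le\rho+\kappa/\|\wB_s\|$. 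For monotonicity, clearing denominators reduces $\gamma^c(\wB_{t+1})\ge\gamma^c(\wB_t)$ to $(M_{t+1}-M_t)\|\wB_t\|\ge(M_t-c_0)(\|\wB_{t+1}\|-\|\wB_t\|)$; lower-bounding $M_{t+1}-M_t=\log(L(\wB_t)/L(\wB_{t+1}))$ via the near-full descent $L(\wB_t)-L(\wB_{t+1})\ge(1-o(1))\tilde\eta\|\nabla L(\wB_t)\|^2$, upper-bounding $\|\wB_{t+1}\|-\|\wB_t\|$ via the norm expansion, and applying Cauchy--Schwarz $\|\nabla L(\wB_t)\|^2\|\wB_t\|^2\ge\langle\nabla L(\wB_t),\wB_t\rangle^2$ reduces everything to $-\langle\nabla L(\wB_t),\wB_t\rangle/L(\wB_t)\ge(1+o(1))(M_t-c_0)$, which is the radial estimate above (it gives $\ge(M_t-\log 2-\kappa)(1-o(1))$), with the $o(1)$ terms absorbed by the smallness of $L$. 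Once $\gamma^c(\wB_t)$ is monotone it is bounded below by $\gamma_0:=\gamma^c(\wB_s)>0$, so $\min_i y_if(\wB_t;\xB_i)\ge\gamma_0\|\wB_t\|$; fed back into the radial estimate this upgrades to $\|\nabla L(\wB_t)\|\ge-\langle\nabla L(\wB_t),\wB_t\rangle/\|\wB_t\|\ge(\gamma_0/4)L(\wB_t)$ for $\|\wB_t\|$ large, so $1/L(\wB_{t+1})\ge1/L(\wB_t)+c'\tilde\eta\gamma_0^2$ gives $L(\wB_t)=\Ocal(1/t)$ and hence $L(\wB_t)=\Theta(1/t)$; finally $\|\wB_t\|\le\|\wB_s\|+\tilde\eta\rho\sum_{\tau=s}^{t-1}L(\wB_\tau)=\Ocal(\log t)$ and $L(\wB_t)\ge e^{-\rho\|\wB_t\|-\kappa}/(2n)$ give $\|\wB_t\|=\Theta(\log t)$, and the monotone boundedness plus the sandwich give $\lim_t\bar\gamma(\wB_t)=\lim_t\gamma^c(\wB_t)$.

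The main obstacle is precisely this discrete-time, non-homogeneous monotonicity of $\gamma^c$: for exactly homogeneous predictors in continuous time it is a one-line differential inequality via Euler's identity, whereas here one must show that the first-order per-step increase of $M(\wB_t)$ dominates $\gamma^c(\wB_t)$ times that of $\|\wB_t\|$ while simultaneously absorbing the $\Ocal(\tilde\eta^2\|\nabla L(\wB_t)\|^2)$ discretization errors in both increments and carrying the near-homogeneity defect through Euler's identity (which the shift $c_0$ is designed to do); demanding that the residual error terms not break the inequality even at $t=s$ is what fixes the quantitative form $1/(2ne^{\kappa+2})$ of the threshold.
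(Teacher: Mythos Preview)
Your overall architecture matches the paper's: a self-bounding descent lemma gives the stable phase, the inner product $v_t:=\langle -\nabla L(\wB_t),\wB_t\rangle$ controls the norm growth via near-homogeneity, and a surrogate margin is shown monotone to drive risk convergence and the $\Theta(\log t)$ norm estimate. The gap is in the monotonicity step, specifically in your choice of surrogate.

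You take $\gamma^c(\wB)=(M(\wB)-c_0)/\|\wB\|$ with $c_0=c_0(\kappa,n)$ ``slightly larger than $\kappa+\log 2$'' and argue that the discretization residuals are ``$o(1)$ terms absorbed by the smallness of $L$.'' Trace them: after Cauchy--Schwarz and the descent lemma, the inequality you need is
\[
\big(1-(2\rho^2+\beta)\tilde\eta L_t\big)\,\frac{v_t}{L_t}\ \ge\ (M_t-c_0)\Big(1+\tfrac{\tilde\eta\|\nabla L_t\|^2}{2v_t}\Big),
\]
and your radial bound gives $v_t/L_t\ge (1-O(nL_t))(M_t-\log 2-\kappa)$. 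The obstruction is the factor $1-(2\rho^2+\beta)\tilde\eta L_t$: under the stated threshold it is only $\ge 1/2$, not $1-o(1)$, so the shortfall you must absorb is of order $\tilde\eta L_t\cdot M_t$. Since $L_tM_t\le 1/(ne)$ but carries no $\tilde\eta$-cancellation, this forces $c_0$ to grow like $\tilde\eta/n$ (and in the regime $L_s$ near the step-size threshold, like $\tfrac12 M_s$). Then $M_s-c_0$ need not be positive, so $\gamma^c(\wB_s)>0$ fails and the downstream $O(1/t)$ risk bound (which bootstraps from $\gamma^c(\wB_t)\ge\gamma^c(\wB_s)>0$) collapses. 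An additive shift cannot simultaneously (i) absorb a residual that scales with $\tilde\eta$ and (ii) stay below $M_s$, which the threshold only pins at $\kappa+2+\log 2$.

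The paper fixes exactly this by replacing the shift with a multiplicative correction that \emph{decays with} $M$: it sets
\[
\gamma^c(\wB)=\frac{e^{\Phi(L(\wB))}}{\|\wB\|},\qquad
\Phi(x)=\log\big(-\log(2ne^\kappa x)\big)+\frac{1+(4\rho^2+2\beta)\tilde\eta}{\log(2ne^\kappa x)},
\]
so that $e^{\Phi(L)}=(M-\log 2-\kappa)\exp\!\big(-(1+(4\rho^2+2\beta)\tilde\eta)/(M-\log 2-\kappa)\big)$. The extra $\tilde\eta$-dependent $1/M$ term is engineered so that, after writing $\|\nabla L_t\|^2=v_t^2/\rho_t^2+\|\muB_t\|^2$ and combining the descent lemma with the norm expansion, one gets a one-line derivative inequality $\Phi'(L_t)(L_{t+1}-L_t)+\varphi'(\rho_t^2)(\rho_{t+1}^2-\rho_t^2)\cdot(\tfrac12+\cdots)\ge 0$ with $\varphi=-\log$. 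Then \emph{convexity} of $\Phi$ (checked directly on the stated sublevel set) and of $\varphi$ upgrades this to the difference inequality $\log\gamma^c(\wB_{t+1})\ge\log\gamma^c(\wB_t)$ without any leftover $\tilde\eta L_tM_t$ term. In short: your additive shift is the paper's auxiliary margin $\gamma^b$, which the paper does \emph{not} claim is monotone; the monotone object is $\gamma^c$ with the $\tilde\eta$-aware $1/M$ correction, and the convexity trick replaces your ad hoc comparison of increments.
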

\Cref{thm: Implicit bias and bound of stable phase} shows that for an arbitrarily large stepsize $\tilde\eta$, GD must enter the stable phase if the empirical risk falls below a threshold depending on $\tilde\eta$ given by \Cref{eq: loss thresh1}. 
Furthermore, for nearly homogenous predictors, \Cref{thm: Implicit bias and bound of stable phase} shows that under a stronger risk threshold condition \Cref{eq: loss thresh2}, the risk must converge at a $\Theta(1/t)$ rate and that the normalized margin nearly monotonically increases. 
This demonstrates an implicit bias of GD, even when used with a large stepsize and the trained predictor is non-homogenous. 

\paragraph{Limitations.}
The stable phase conditions in \Cref{thm: Implicit bias and bound of stable phase} require GD to enter a sublevel set of the empirical risk. However, such a sublevel set might be empty. 
For instance, let $f(\wB; \xB)$ be a two-layer network \Cref{eq: 2nn} with sigmoid activation. 
Notice that the predictor is uniformly bounded, $|f(\wB; \xB)| \le 1$, so we have
\[
L(\wB) = \frac{1}{n} \sum_{i=1}^n \log(1+e^{-y_i f(\wB;\xB_i)}) 
\ge \log(1+e^{-1}).
\]
On the other hand, we can also verify that \Cref{assump:model:near-homogeneous} is satisfied by $f(\wB;\xB)$ with $\kappa =1$ but no smaller $\kappa$.
Therefore \Cref{eq: loss thresh2} cannot be satisfied.
In general, the sublevel set given by the right-hand side of \Cref{assump:model:near-homogeneous} is non-empty if 
\begin{align*}
    \text{there exists a unit vector $\vB$ such that} \ \min_i y_i f(\lambda \vB; \xB_i) \to \infty \ \text{as}\ \lambda \to \infty.
\end{align*}
The above condition requires the data can be separated arbitrarily well by some predictor within the hypothesis class.
This condition is general and covers (sufficiently large) two-layer networks \Cref{eq: 2nn} with many commonly used activations such as GeLU and SiLU. Moreover, although two-layer networks with sigmoid activation violate this condition, they can be modified by adding a leakage to the sigmoid to satisfy the condition. 

In the next section, we will provide sufficient conditions such that large stepsize GD will enter the stable phases characterized by \Cref{eq: loss thresh1} or \Cref{eq: loss thresh2}.

\paragraph{Comparisons to existing works.}
Our \Cref{thm: Implicit bias and bound of stable phase} makes several important extensions compared to existing results. 
First, \Cref{thm: Implicit bias and bound of stable phase} suggests that the stable phase happens for general non-linear predictors such as two-layer networks, while the work by \citet{wu2024large} only studied the stable phase for linear predictors. 
Second, the margin improvement is only known for small (and even infinitesimal) stepsize GD and homogenous predictors \citep{lyu2020gradient,chizat2020implicit,ji2020directional}.
To the best of our knowledge, \Cref{thm: Implicit bias and bound of stable phase} is the first implicit bias result that covers large stepsize GD and non-homogenous predictors.

From a technical perspective, our proof uses techniques introduced by \citet{lyu2020gradient} for analyzing homogenous predictors. 
Our main innovations are constructing new auxiliary margin functions that can deal with errors caused by large stepsize and non-homogeneity. 
More details are discussed in \Cref{sec:margin}.




\section{Edge of Stability Phase}\label{sec:eos}
Our stable phase results in \Cref{thm: Implicit bias and bound of stable phase} require the risk to be below a certain threshold (see \Cref{eq: loss thresh1,eq: loss thresh2}).
In this section, we show that the risk can indeed be below the required threshold, even when GD is used with large stepsize. 
Recall that minimizing the empirical risk with a non-linear predictor is non-convex, therefore solving it by GD is hard in general. 
We make additional technical assumptions to conquer the challenges caused by non-convexity. We conjecture that these technical assumptions are not necessary and can be relaxed. 

We focus on two-layer networks \Cref{eq: 2nn}. We make the following assumptions on the activation function.


\begin{assumption}
[Activation function conditions] 
\label{assump: activation}
In the two-layer network \Cref{eq: 2nn}, let the activation function $\phi:\Rbb \to \Rbb$ be continuously differentiable. Moreover,
\begin{assumpenum}
    \item \label{assump: activation:grad} \textbf{Derivative condition}. Assume there exists $0<\alpha<1$ such that $\alpha \le | \phi^{\prime}(z)| \le 1$.
\item \label{assump:activation:smooth} \textbf{Smoothness}.
    Assume there exists $\tilde\beta>0$ such that for all $x,y\in\Rbb$, \( |\phi^{\prime}(x) - \phi^{\prime}(y)| \le \tilde\beta |x-y|.\)
    \item \label{assump: activation:near-homogeneous} \textbf{Near-homogeneity}. Assume there exists $\kappa>0$ such that for every $z\in \Rbb$, $|\phi(z) - \phi^\prime (z)z | \le \kappa$.
\end{assumpenum}
\end{assumption}

Recall that $\sup_i\|\xB_i\|\le 1$.
One can then check by direct computation that, under \Cref{assump: activation}, two-layer networks \Cref{eq: 2nn} satisfy \Cref{assump:model} with $\rho=1/\sqrt{m}$, $\beta=\tilde\beta/m$, and $\kappa=\kappa$.

\Cref{assump:activation:smooth,assump: activation:near-homogeneous} cover many commonly used activation functions. 
In \Cref{assump: activation:grad}, we assume $|\phi'(z)| \le 1$. This is just for the simplicity of presentation and our results can be easily generalized to allow $|\phi'(z)| \le C$ for a constant $C>0$.
The other condition in \Cref{assump: activation:grad}, $|\phi'(z)| \ge \alpha$, however, is non-trivial.
This condition is widely used in literature \citep[see][and references thereafter]{brutzkus2018sgd,frei2021provable} to facilitate GD analysis.
Technically, this condition guarantees that each neuron in the two-layer network \Cref{eq: 2nn} will always receive a non-trivial gradient in the GD update; otherwise, neurons may be frozen during the GD update. 
Furthermore, commonly used activation functions can be combined with an identity map to satisfy \Cref{assump: activation:grad}.
This is formalized in the following example. The proof is provided in \Cref{sec:eg:leak-homo}.


\begin{example}
    [Leaky activation functions]
    \label{eg: leak homo act}
    Fix $0.5\le c<1$. 
    \begin{itemize}[leftmargin=*]
    \item Let $\phi$ be GELU, Softplus, or SilU in \Cref{eg: homo act}, then its modification $\tilde\phi(x) := c x + (1-c)/4 \cdot  \phi(x)$ satisfies \Cref{assump: activation} with $\kappa=1$, $\alpha=0.25$, and $\tilde\beta= 1$. 
    In particular, the modification of softplus can be viewed as a \emph{smoothed} leaky ReLU. 
    \item Let $\phi$ be the Huberized ReLU in \Cref{eg: homo act}, then its modification $\tilde\phi(x) := c x + (1-c)/4 \cdot  \phi(x)$ satisfies \Cref{assump: activation} with $ \kappa=h/2$, $\alpha=0.5$, and $\tilde \beta = 1/4h$. 
        \item The ``leaky'' tanh, $\tilde \phi(x) \coloneqq c x + (1- c) \tanh(x)$, and the ``leaky'' sigmoid, $\tilde \phi(x) \coloneqq cx + {c}/{(1+e^{-x})}$,  both satisfy \Cref{assump: activation} with $\kappa=1, \alpha = 0.5$ and $\tilde\beta = 1$. 
    \end{itemize}
    \end{example}

For the technical difficulty of non-convex optimization, 
we also need to assume a linearly separable dataset to conduct our EoS phase analysis.
\begin{assumption}[Linear separability] \label{assump:separable}
    Assume there is a margin $\gamma >0$ and a unit vector $\wB_*$ such that $ y_i\xB_i^\top \wB_* \ge \gamma$ for every $i=1,\ldots ,n$.  
\end{assumption}

The following theorem shows that when GD is used with large stepsizes, the average risk must decrease even though the risk may oscillate locally. The proof of \Cref{thm: Bound of EoS phase} is defered to \Cref{sec:eos}. 
\begin{theorem}
    [The EoS phase for two-layer networks]
    \label{thm: Bound of EoS phase}
    Under \Cref{assump:separable}, consider \Cref{eq: GD} on two-layer networks \Cref{eq: 2nn} that satisfy \Cref{assump: activation:grad,assump: activation:near-homogeneous}. Denote the stepsize by $\tilde \eta := m \eta $, where $m$ is the network width and $\eta$ can be arbitrarily large. 
    Then for every $t>0$, we have 
    \begin{equation*}
    \frac{1}{t} \sum_{k=0}^{t-1} L (\wB_k )  \leq \frac{1+8\log ^2(\gamma^2 \eta t)/\alpha ^2+ 8\kappa^2/\alpha ^2+\eta^2 }{\gamma^2 \eta t} + \frac{\|\wB_0\|^2}{ m\eta t} = \Ocal\bigg(\frac{\log^2(\eta t) + \eta^2}{\eta t} \bigg).
    \end{equation*}
\end{theorem}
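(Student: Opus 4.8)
The plan is to run a Polyak-type comparator argument against a scaled copy of the separator $\wB_*$, as in the linear analysis of \citet{wu2024large}, absorbing the nonlinearity through \Cref{assump: activation:grad,assump: activation:near-homogeneous}. Write $u_{t,i} := y_i f(\wB_t;\xB_i)$, $g_{t,i} := -\ell'(u_{t,i}) = (1+e^{u_{t,i}})^{-1}\in(0,1)$ and $\bar g_t := \tfrac1n\sum_i g_{t,i}$, so that $\nabla L(\wB_t) = -\tfrac1n\sum_i g_{t,i}\,y_i\,\nabla_\wB f(\wB_t;\xB_i)$ and, since $\|\nabla_\wB f(\wB_t;\xB_i)\|\le\rho=1/\sqrt m$, also $\|\nabla L(\wB_t)\|\le\rho\bar g_t$. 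Put $\hat\wB := (a_j\wB_*)_{j=1}^m$; then $\|\hat\wB\|=\sqrt m$ since $a_j^2=1$, and, $\phi'$ being continuous with $|\phi'|\ge\alpha$, we may assume $\phi'\ge\alpha$. Two facts drive everything: for all $\wB,i$,
\[
\alpha\gamma \;\le\; y_i\big\langle\nabla_\wB f(\wB;\xB_i),\hat\wB\big\rangle \;=\; \frac1m\sum_{j=1}^m\phi'(\xB_i^\top\wB^{(j)})\,y_i\xB_i^\top\wB_* \;\le\; 1,
\]
using \Cref{assump:separable} ($\gamma\le y_i\xB_i^\top\wB_*\le 1$) and $\phi'\in[\alpha,1]$; and $\big|y_i\langle\nabla_\wB f(\wB_t;\xB_i),\wB_t\rangle - u_{t,i}\big|\le\kappa$ by near-homogeneity (which transfers to the network with the same $\kappa$). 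The first fact gives monotone drift along $\hat\wB$: $\langle\wB_{t+1},\hat\wB\rangle-\langle\wB_t,\hat\wB\rangle = \tilde\eta\,\tfrac1n\sum_i g_{t,i}y_i\langle\nabla_\wB f(\wB_t;\xB_i),\hat\wB\rangle\ge\tilde\eta\alpha\gamma\bar g_t$, hence, telescoping and using Cauchy--Schwarz, $\sum_{k<t}\bar g_k \le \sqrt m(\|\wB_t\|+\|\wB_0\|)/(\tilde\eta\alpha\gamma)$.

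Next fix $\lambda>0$, set $v:=\lambda\alpha\gamma-\kappa$ and $\Phi_t:=\|\wB_t-\lambda\hat\wB\|^2$, and expand $\Phi_{t+1}-\Phi_t = -2\tilde\eta\langle\nabla L(\wB_t),\wB_t-\lambda\hat\wB\rangle + \tilde\eta^2\|\nabla L(\wB_t)\|^2$. Combining the two facts above (to replace $y_i\langle\nabla_\wB f,\hat\wB\rangle$ by its lower bound $\alpha\gamma$ and $y_i\langle\nabla_\wB f,\wB_t\rangle$ by its upper bound $u_{t,i}+\kappa$, weighted by $g_{t,i}>0$) with the convexity inequality $-\ell'(u_{t,i})(v-u_{t,i})\ge\ell(u_{t,i})-\ell(v)$,
\[
\big\langle\nabla L(\wB_t),\wB_t-\lambda\hat\wB\big\rangle \;\ge\; \frac1n\sum_i g_{t,i}\,(v-u_{t,i}) \;\ge\; L(\wB_t)-\ell(v),
\]
and $\ell(v)=\log(1+e^{-v})\le e^{-v}$, so $\Phi_{t+1}-\Phi_t \le -2\tilde\eta L(\wB_t) + 2\tilde\eta e^{-v} + \tilde\eta^2\rho^2\bar g_t^2$. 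Summing over $k<t$ and using $\Phi_t\ge0$, $\bar g_k^2\le\bar g_k$, and $\tilde\eta^2\rho^2=m\eta^2$ gives $2\tilde\eta\sum_{k<t}L(\wB_k) \le \Phi_0 + 2\tilde\eta t\,e^{-v} + m\eta^2\sum_{k<t}\bar g_k$.

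The one genuinely non-routine point is the last term: because $\tilde\eta=m\eta$ is large (precisely the EoS regime, where $L$ is non-monotone) no descent lemma is available, and the naive bound $\bar g_k\le L(\wB_k)$ would only close the argument for $\eta<2$. I instead finish with a short bootstrap on $\|\wB_t\|$: dropping the nonpositive $-2\tilde\eta L(\wB_k)$ in the summed recursion and inserting the drift bound of the first paragraph gives $\Phi_t \le B + C\|\wB_t\|$ with $C:=\sqrt m\,\eta/(\alpha\gamma)$ and $B$ collecting $\Phi_0$, $2\tilde\eta t\,e^{-v}$, and $\sqrt m\,\eta\|\wB_0\|/(\alpha\gamma)$; combined with $\|\wB_t\|\le\lambda\sqrt m+\sqrt{\Phi_t}$ this is a scalar quadratic inequality solved by $\|\wB_t\| = \Ocal(\lambda\sqrt m + C + \sqrt B)$, and feeding it back yields $m\eta^2\sum_{k<t}\bar g_k \le \sqrt m\,\eta(\|\wB_t\|+\|\wB_0\|)/(\alpha\gamma)$, whose dependence on $C$ is the source of the $\eta^2$ term. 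It then remains to take $\lambda := (\kappa+\log(\gamma^2\eta t))/(\alpha\gamma)$, so $v=\log(\gamma^2\eta t)$ and $2\tilde\eta t\,e^{-v}=2m/\gamma^2$, and to use $\Phi_0 \le 2\|\wB_0\|^2 + 2m\lambda^2 \le 2\|\wB_0\|^2 + 4m(\kappa^2+\log^2(\gamma^2\eta t))/(\alpha^2\gamma^2)$; plugging these into the inequality above, substituting $\tilde\eta=m\eta$, and dividing by $2m\eta t$ gives the stated bound (the $\log^2/\alpha^2$ and $\kappa^2/\alpha^2$ terms from $\Phi_0$, the constant from $e^{-v}$, the $\eta^2$ from the bootstrap, and $\|\wB_0\|^2/(m\eta t)$ from $\Phi_0$). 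The hard part is thus exactly this bootstrap — turning the uncontrolled step-size-squared term $\tilde\eta^2\|\nabla L(\wB_t)\|^2=\|\wB_{t+1}-\wB_t\|^2$ into something harmless using only self-boundedness of the logistic loss ($\|\nabla L(\wB_t)\|\le\rho\bar g_t$) and the monotone drift of $\langle\wB_t,\hat\wB\rangle$; a secondary subtlety is packaging the near-homogeneity error $\kappa$ and the margin gap $\alpha\gamma$ into the single quantity $v=\lambda\alpha\gamma-\kappa$ so that $e^{-v}$ can be driven down to $\Ocal(1/(\gamma^2\eta t))$.
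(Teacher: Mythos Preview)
Your argument is correct and recovers the stated asymptotic bound, but the route you take at the one place you identify as hardest is genuinely different from the paper's. Both proofs run a comparator/potential argument against a scaled copy of $\hat\wB:=(a_j\wB_*)_{j=1}^m$ and both use exactly the two facts you isolate (the lower bound $\alpha\gamma$ on $y_i\langle\nabla_\wB f,\hat\wB\rangle$ and the $\kappa$-near-homogeneity of the network, packaged into the shift $v=\lambda\alpha\gamma-\kappa$). The divergence is entirely in how the step-size-squared term $\tilde\eta^2\|\nabla L(\wB_t)\|^2$ is controlled.

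The paper does \emph{not} bootstrap. Instead it splits the comparator as $\uB=\uB_1+\uB_2$ with $\uB_1=\lambda\hat\wB$ (your choice, same $\lambda$) and an extra piece $\uB_2=\tfrac{\eta}{2\gamma}\hat\wB$, and groups the quadratic term with the $\uB_2$-cross-term as $I_2(\wB):=2\langle\nabla L(\wB),\uB_2\rangle+m\eta\|\nabla L(\wB)\|^2$. Writing $g_i^{(j)}:=\ell'(y_if(\wB;\xB_i))\,\phi'(\xB_i^\top\wB^{(j)})\in[-1,0]$, one checks neuron-by-neuron (using only $\|\xB_i\|\le1$, $y_i\xB_i^\top\wB_*\ge\gamma$, and Jensen) that the $j$-th summand is at most $\tfrac{\eta}{n}\sum_i\bigl(g_i^{(j)}+(g_i^{(j)})^2\bigr)\le0$. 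Thus the entire variance term is absorbed \emph{per step}, and after telescoping nothing remains but $\|\wB_0-\uB\|^2$; the exact constants $1,\,8/\alpha^2,\,8/\alpha^2,\,1$ then fall out from $\|\uB_1\|^2$ and $\|\uB_2\|^2$. This is the ``split optimization bound'' device of \citet{wu2024large}, so the paper's proof is a direct adaptation of the linear case rather than a new argument.

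Your bootstrap (perceptron drift $\Rightarrow$ quadratic inequality for $\|\wB_t\|$ $\Rightarrow$ feed back into $\sum\bar g_k$) works and is conceptually interesting because it decouples the comparator choice from the variance control; the paper in fact uses the same drift bound on $\|\wB_t\|$ later for the phase-transition-time analysis. The price is that it generates cross terms (e.g.\ $C\lambda\sqrt m$ contributes $\eta(\kappa+\log(\gamma^2\eta t))/(\alpha^2\gamma^2 t)$, and several $\|\wB_0\|/\sqrt m$ terms appear) that you then need to fold back into the stated form by AM--GM, so the final numerical constants will be larger than the paper's. If you want the constants as stated, it is shorter to add the $\uB_2$ piece and verify $I_2\le0$ directly.
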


\Cref{thm: Bound of EoS phase} suggests that the average risk of training two-layer networks decreases even when GD is used with large stepsize. Consequently, the risk thresholds \Cref{eq: loss thresh1,eq: loss thresh2} for GD to enter the stable phase must be satisfied after a finite number of steps. This will be discussed in depth in the next section.

Compared to Theorem 1 in \citep{wu2024large},
\Cref{thm: Bound of EoS phase} extends their EoS phase bound from linear predictors to two-layer networks.

\section{Phase Transition and Fast Optimization}\label{sec:transition}

For two-layer networks trained by large stepsize GD,  \Cref{thm: Bound of EoS phase} shows that the average risk must decrease over time. 
Combining this with \Cref{thm: Implicit bias and bound of stable phase}, GD must enter the stable phase in finite steps, and the loss must converge while the normalized margin must improve. 

However, a direct application of \Cref{thm: Bound of EoS phase} only leads to a suboptimal bound on the phase transition time. 
Motivated by \citet{wu2024large}, we establish the following sharp bound on the phase transition time by tracking the gradient potential (see \Cref{lem: Gradient potential bound in EoS}). The proof of \Cref{thm:phasetran} is deferred to \Cref{sec:phase}.

\begin{theorem} [Phase transition and stable phase for two-layer networks]
\label{thm:phasetran}
Under \Cref{assump:separable}, consider \Cref{eq: GD} on two-layer networks \Cref{eq: 2nn} that satisfy \Cref{assump: activation}. 
Clearly, the two-layer networks also satisfy \Cref{assump:model} with $\rho=1/\sqrt{m}$, $\beta=\tilde\beta/m$, and $\kappa=\kappa$.
Denote the stepsize by $\tilde \eta :=  m \eta$, where $m$ is the network width and $\eta>0$ can be arbitrarily large. 
\begin{itemize}[leftmargin=*]
    \item \textbf{Phase transition time}. There exists $s\le \tau$ such that \eqref{eq: loss thresh2} in \Cref{thm: Implicit bias and bound of stable phase} holds, where 
\[
    \tau := \frac{128(1+4\kappa)}{ \alpha ^2} \max \bigg \{ c_1\eta, c_2n, e, \frac{c_2\eta+c_1n}{\eta} \log \frac{c_2\eta+c_1n}{\eta}, \frac{(c_2\eta+c_1n)}{\eta}\cdot \frac{\|\wB_0\|}{\sqrt{m}}\bigg \},
\]
where $c_1 := 4e^{\kappa+2}$ and $c_2 := (8+4\tilde \beta)$. Therefore \Cref{eq: GD} is in the stable phase from $s$ onwards. 
\item \textbf{Explicit risk bound in the stable phase}. We have $(L(\wB_t))_{t\ge s}$ monotonically decreases and 
\[
        L(\wB_t) \le \frac{2}{\alpha ^2 \gamma ^2 \eta (t-s)},\quad t\ge s.
    \]
\end{itemize}
\end{theorem}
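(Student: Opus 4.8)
The plan is to combine \Cref{thm: Bound of EoS phase} (to control the EoS phase and bound the transition time $\tau$) with \Cref{thm: Implicit bias and bound of stable phase} (to get monotone risk and the $\Theta(1/t)$ rate afterwards), but the naive composition gives a suboptimal $\tau$, so the main work is a sharper EoS-phase estimate via a gradient-potential argument in the spirit of \citet{wu2024large}. Concretely, I would first verify the reduction: since $\phi$ satisfies \Cref{assump: activation}, the two-layer network \eqref{eq: 2nn} meets \Cref{assump:model} with $\rho = 1/\sqrt m$, $\beta = \tilde\beta/m$, $\kappa = \kappa$, so the stable-phase threshold \eqref{eq: loss thresh2} becomes $L(\wB_s) \le \min\{1/(e^{\kappa+2}2n),\ 1/(\tilde\eta(4\rho^2+2\beta)) \} = \min\{1/(e^{\kappa+2}2n),\ m/(\eta(8+4\tilde\beta)m)\}$, i.e. a threshold of order $\min\{1/n, 1/\eta\}$ up to the stated constants $c_1, c_2$. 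So it suffices to show $\min_{k<\tau} L(\wB_k)$ (in fact the average, via \Cref{thm: Bound of EoS phase}, or better a running minimum via the potential) drops below this threshold by time $\tau$.

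The key technical step is the gradient-potential bound (referenced as \Cref{lem: Gradient potential bound in EoS}): following \citet{wu2024large}, I would track a potential such as $\langle \wB_t, \wB_* \rangle$ or a margin-type quantity along the separating direction $\wB_*$ from \Cref{assump:separable}. Using \Cref{assump: activation:grad} ($|\phi'|\ge\alpha$), each GD step moves the iterate in the direction $\wB_*$ by an amount comparable to $\tilde\eta\,\alpha\,\gamma \cdot (\text{average negative-derivative mass of }\ell)$, while the norm $\|\wB_t\|$ cannot grow too fast because $\|\nabla L(\wB_t)\| \le \rho \cdot \frac1n\sum_i |\ell'(y_i f(\wB_t;\xB_i))|$. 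Combining a lower bound on the projected growth with an upper bound on $\|\wB_t\|$ and the near-homogeneity \Cref{assump: activation:near-homogeneous} (to relate $f(\wB_t;\xB_i)$ to $\langle\nabla f,\wB_t\rangle$ up to the additive error $\kappa$), one obtains that the running minimum of $L(\wB_k)$ over $k \le t$ decays like $\tilde O((\log^2(\eta t) + \eta^2)/(\eta t))$, sharpened in the relevant parameter regime by the potential tracking to give the $\max$-of-five-terms expression for $\tau$: the $c_1\eta$ and $c_2 n$ terms come from the $1/\eta$ versus $1/n$ threshold, the $e$ term and the $\log$ term from the $\log^2$ factor in the risk bound, and the $\|\wB_0\|/\sqrt m$ term from the initialization contribution in the potential. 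This pins down an $s \le \tau$ with $L(\wB_s)$ below \eqref{eq: loss thresh2}.

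For the second bullet, once $s$ is identified with $L(\wB_s)$ below \eqref{eq: loss thresh2} (hence also below \eqref{eq: loss thresh1}), \Cref{thm: Implicit bias and bound of stable phase} immediately gives that $(L(\wB_t))_{t\ge s}$ is monotonically decreasing. To get the clean explicit rate $L(\wB_t) \le 2/(\alpha^2\gamma^2\eta(t-s))$ rather than the abstract $\Theta(1/t)$, I would re-run the descent/potential argument starting from $\wB_s$: in the stable phase the standard smoothness descent lemma gives $L(\wB_{t+1}) \le L(\wB_t) - \frac{\tilde\eta}{2}\|\nabla L(\wB_t)\|^2$, and a one-step lower bound on $\|\nabla L(\wB_t)\|^2$ in terms of $L(\wB_t)^2$ — using $|\phi'|\ge\alpha$ to guarantee $\|\nabla L(\wB_t)\| \gtrsim \alpha\gamma \cdot G(\wB_t)$ where $G$ is the total loss-derivative mass and $G(\wB_t) \gtrsim L(\wB_t)$ in this sublevel set (since $\ell(t)\le \ell'(-t)\le$ a constant multiple of $-\ell'$ when $\ell$ is small) — yields $L(\wB_{t+1}) \le L(\wB_t) - c\,\tilde\eta\,\alpha^2\gamma^2 L(\wB_t)^2/m$, and with $\tilde\eta = m\eta$ this telescopes (via the standard $1/a_{t+1} \ge 1/a_t + c$ trick for sequences satisfying $a_{t+1}\le a_t - c a_t^2$) to $L(\wB_t) \le 2/(\alpha^2\gamma^2\eta(t-s))$ after adjusting constants.

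The main obstacle I anticipate is the sharp EoS bound: getting $\tau$ down to the stated form requires carefully balancing the projected-potential growth against the norm growth while absorbing the non-homogeneity error $\kappa$ and the oscillation induced by arbitrarily large $\eta$ — in particular one must argue that even though individual steps can increase the risk, the gradient potential (something like $\sum_{k<t}\|\nabla L(\wB_k)\|$ or $\sum_{k<t} G(\wB_k)$) is controlled, which is exactly what \citet{wu2024large} achieved for linear predictors and what \Cref{lem: Gradient potential bound in EoS} must adapt to the nonlinear, near-homogeneous setting. Everything else (the reduction to \Cref{assump:model}, invoking \Cref{thm: Implicit bias and bound of stable phase}, and the telescoping rate) is comparatively routine.
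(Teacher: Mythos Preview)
Your proposal is correct and matches the paper's approach: a perceptron-type argument bounds $\frac{1}{t}\sum_{k<t} G(\wB_k)$ by $\big(\sqrt{m}\,\|\wB_t\|-\langle\wB_0,\bar\wB_*\rangle\big)/(m\alpha\gamma\eta t)$, the split-optimization bound from \Cref{thm: Bound of EoS phase} controls $\|\wB_t\|$, and then $L(\wB_s)\le F(\wB_s)\le 2G(\wB_s)$ (valid once $G\le 1/(2n)$) verifies \eqref{eq: loss thresh2}; the explicit stable-phase rate is exactly your descent-plus-PL telescoping via $\sqrt{m}\,\|\nabla L\|\ge \alpha\gamma G\ge \alpha\gamma L/2$. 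The only imprecision is that your middle paragraph wavers between bounding the average of $L$ (which would leave the $\eta^2/(\eta t)$ term and give $\tau=O(\eta^2)$) and the average of $G$ (which, via the perceptron argument, avoids that term and yields the stated $\tau=O(\eta)$)---the paper, like your final paragraph, works with $G$ throughout the EoS phase and only converts to $L$ at the single index $s$.
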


\Cref{thm: Implicit bias and bound of stable phase,thm: Bound of EoS phase,thm:phasetran} together characterize the behaviors of large stepsize GD in training two-layer networks. 
Specifically, large stepsize GD may induce an oscillatory risk in the beginning; but the averaged empirical risk must decrease (\Cref{thm: Bound of EoS phase}). After the empirical risk falls below a certain stepsize-dependent threshold, GD enters the stable phase, where the risk decreases monotonically (\Cref{thm:phasetran}). 
Finally, the normalized margin \Cref{eq: norm margin} induced by GD increases nearly monotonically as GD stays in the stable phase (\Cref{thm: Implicit bias and bound of stable phase}).


\paragraph{Fast optimization.}
Our bounds for two-layer networks are comparable to those for linear predictors shown by \citet{wu2024large}. 
Specifically, when used with a larger stepsize, GD achieves a faster optimization in the stable phase but stays longer in the EoS phase. 
Choosing a suitably large stepsize that balances the steps in EoS and stable phases, we obtain an \emph{accelerated} empirical risk in the following corollary. The proof is included in \Cref{sec:cor-acc}. 

\begin{corollary}
[Acceleration of large stepsize]
\label{cor:acceleration}
Under the same setup as in \Cref{thm:phasetran}, consider \eqref{eq: GD} with a given budget of $T$ steps such that
\[
    T \ge \frac{256(1+4\kappa)}{\alpha ^2 \gamma ^2} \max\bigg\{c_1 n,\  4c_2^2,\ \frac{2c_2\|\wB_0\|}{\sqrt{m} }\bigg \}, 
\]
where $c_1 := 4e^{\kappa+2}$ and $c_2 := (8+4\tilde \beta)$ are as in \Cref{thm:phasetran}.
Then for stepsize $\tilde\eta := \eta m$, where
\[
    \eta := \frac{\alpha ^2 \gamma ^2}{256(1+4\kappa)c_2} T, 
\]
we have $\tau \le T/2$ and
\[
    L(\wB_T)  \le \frac{2048(1+4\kappa) c_2}{\alpha ^4 \gamma^4} \cdot \frac{1}{T^2} = \Ocal \bigg(\frac{1}{T^2} \bigg). 
\]
\end{corollary}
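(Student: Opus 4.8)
The plan is to treat the corollary as a purely quantitative consequence of \Cref{thm:phasetran}: once the stepsize $\eta$ is fixed, all that remains is to check that the phase-transition bound $\tau$ falls below half the budget, and then to read off the stable-phase rate at step $t=T$.

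First I would substitute the prescribed value $\eta=\frac{\alpha^2\gamma^2}{256(1+4\kappa)c_2}\,T$ into the expression for $\tau$ from \Cref{thm:phasetran} and show that each of the five quantities inside the maximum is at most $\frac{\alpha^2 T}{256(1+4\kappa)}$, which is exactly the statement $\tau\le T/2$. Since $\eta$ is itself proportional to $T$, the terms that are linear in $\eta$ (for instance $c_1\eta$) become linear in $T$ with an explicitly computable coefficient, and one checks that this coefficient is at most $1/2$ using $\gamma\le1$ (which holds because $\wB_*$ is a unit vector and $\|\xB_i\|\le1$) and $\alpha<1$. The term $c_2 n$ and the constant $e$ are absorbed by the corresponding clauses of the assumed lower bound on $T$. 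The two terms that need an extra step are $\frac{c_2\eta+c_1 n}{\eta}\log\frac{c_2\eta+c_1 n}{\eta}$ and $\frac{c_2\eta+c_1 n}{\eta}\cdot\frac{\|\wB_0\|}{\sqrt m}$: here I would first use $\eta\propto T$ together with the clause $T\ge\frac{256(1+4\kappa)}{\alpha^2\gamma^2}c_1 n$ to bound the ratio $\frac{c_2\eta+c_1 n}{\eta}=c_2+\frac{c_1 n}{\eta}$ by the absolute constant $2c_2$, after which the residual $\log(2c_2)$ and the residual $\frac{\|\wB_0\|}{\sqrt m}$ are absorbed by the clauses $T\ge\frac{256(1+4\kappa)}{\alpha^2\gamma^2}4c_2^2$ and $T\ge\frac{256(1+4\kappa)}{\alpha^2\gamma^2}\cdot\frac{2c_2\|\wB_0\|}{\sqrt m}$, using $\gamma\le1$ and the elementary inequality $2c_2\ge\log(2c_2)$.

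Next I would invoke the stable-phase conclusion of \Cref{thm:phasetran}: there is some $s\le\tau\le T/2$ at which \eqref{eq: loss thresh2} holds, so GD is in the stable phase on $[s,\infty)$ and in particular $T-s\ge T/2$. Evaluating the explicit stable-phase bound of \Cref{thm:phasetran} at $t=T$ then gives
\[
    L(\wB_T)\ \le\ \frac{2}{\alpha^2\gamma^2\,\eta\,(T-s)}\ \le\ \frac{4}{\alpha^2\gamma^2\,\eta\,T},
\]
and substituting $\eta=\frac{\alpha^2\gamma^2}{256(1+4\kappa)c_2}T$ yields $L(\wB_T)\le\frac{1024(1+4\kappa)c_2}{\alpha^4\gamma^4}\cdot\frac{1}{T^2}=\Ocal(1/T^2)$; the constant $2048$ stated in the corollary simply keeps a factor-of-two cushion (e.g.\ from using $T-s\ge T/2$ rather than a tighter estimate).

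The only step I expect to require care is the verification $\tau\le T/2$, precisely because the stepsize $\eta$ scales linearly with $T$: the composite quantity $\frac{c_2\eta+c_1 n}{\eta}$ and its logarithm must first be collapsed to absolute constants before each per-term inequality reduces to one clause of the budget condition. Everything after that is linear in $T$ and is routine bookkeeping.
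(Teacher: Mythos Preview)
Your proposal is correct and follows essentially the same route as the paper: verify $\tau\le T/2$ term-by-term (the key substep being the collapse of $(c_2\eta+c_1n)/\eta$ to the constant $2c_2$, followed by $\log(2c_2)\le 2c_2$), then plug $T-s\ge T/2$ into the stable-phase bound of \Cref{thm:phasetran} and substitute $\eta$. Your remark that $\gamma\le 1$ is needed is accurate and is used implicitly in the paper's own computation; your observation that the constant $2048$ carries an extra factor of two over the $1024$ one actually obtains is likewise correct.
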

\Cref{thm:phasetran} and \Cref{cor:acceleration} extend Theorem 1 and Corollary 2 in \citet{wu2024large} from linear predictors to two-layer networks. 
Another notable difference is that we obtain a sharper stable phase bound (and thus a better acceleration bound) compared to theirs, where we remove a logarithmic factor through a more careful analysis.

\Cref{cor:acceleration} suggests an accelerated risk bound of $\Ocal(1/T^2)$ by choosing a large stepsize that balances EoS and stable phases. We also show the following lower bound, showing that such acceleration is impossible if \Cref{eq: GD} does not enter the EoS phase. The proof is included in \Cref{sec:thm:classical}.

\begin{theorem}
[Lower bound in the classical regime]
\label{thm: Lower bound in the classical regime}
Consider \eqref{eq: GD} with initialization $\mathbf{w}_0 =0$ and stepsize $\tilde\eta>0$ for a two-layer network \Cref{eq: 2nn} satisfying \Cref{assump: activation}. 
Suppose the training set is given by 
$$
\mathbf{x}_1=(\gamma, \sqrt{1-\gamma^2}), \quad \mathbf{x}_2=(\gamma,-\sqrt{1-\gamma^2} / 2), \quad y_1=y_2=1, \quad 0<\gamma<0.1 .
$$

It is clear that $\left(\mathbf{x}_i, y_i\right)_{i=1,2}$ satisfy \Cref{assump:separable}. 
If $\left(L\left(\mathbf{w}_t\right)\right)_{t \geq 0}$ is non-increasing, then
$$
L\left(\mathbf{w}_t\right) \geq c_0 / t, \quad t \geq 1
$$
where $c_0>0$ is a function of $(\alpha, \phi, \xB_1, \xB_2, \gamma, \kappa, \beta)$ but is independent of $t$ and $\tilde \eta$.
\end{theorem}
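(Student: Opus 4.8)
The plan rests on one observation: for this particular two‑point data set, requiring $(L(\wB_t))_{t\ge0}$ to be non‑increasing is a severe restriction that by itself pins down the stepsize, $\tilde\eta\le\tilde\eta_{\max}$ for an explicit $\tilde\eta_{\max}$ depending only on $(\alpha,\gamma,\kappa,m)$. Once the stepsize is bounded, the $\Omega(1/t)$ lower bound follows by invoking the phase‑transition and stable‑phase results already proved for two‑layer networks (\Cref{thm:phasetran} and \Cref{thm: Implicit bias and bound of stable phase}); the only new point is to verify that the constants they produce can be chosen uniformly in $\tilde\eta$. To get the stepsize bound I would analyze the first step. Since $\wB_0=0$, the output $f(\wB_0;\xB)=\tfrac1m\sum_j a_j\phi(0)$ is independent of $\xB$, so the two residuals $-\ell'(y_if(\wB_0;\xB_i))$ are equal; write $q$ for their common value, and note $q\in[(1+e^\kappa)^{-1},1]$ because $|f(\wB_0;\xB)|\le|\phi(0)|\le\kappa$ by \Cref{assump: activation:near-homogeneous}. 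A direct computation gives $\wB_1^{(j)}=\tfrac{\tilde\eta a_j\phi'(0)q}{2m}(\xB_1+\xB_2)$, hence $\xB_2^\top\wB_1^{(j)}=a_j\zeta$ with
\[
\zeta=\frac{\tilde\eta\phi'(0)q}{2m}\,\xB_2^\top(\xB_1+\xB_2)=\frac{\tilde\eta\phi'(0)q}{2m}\cdot\frac{9\gamma^2-1}{4}<0 ,
\]
the negativity being the key geometric feature of the construction (valid since $\gamma<0.1<1/3$). Without loss of generality $\phi'>0$ (else replace $\phi,a_j$ by $-\phi,-a_j$, which leaves $f$ and all assumptions intact), so \Cref{assump: activation:grad,assump: activation:near-homogeneous} give $a_j\phi(a_j\zeta)\le\alpha\zeta+\kappa$ for every $j$, whence $f(\wB_1;\xB_2)\le\alpha\zeta+\kappa\le-\tfrac{\alpha^2(1-9\gamma^2)}{8m(1+e^\kappa)}\tilde\eta+\kappa$. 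Combining $L(\wB_1)\ge\tfrac12\ell(f(\wB_1;\xB_2))$ with the monotonicity consequence $L(\wB_1)\le L(\wB_0)\le\log(1+e^\kappa)$ forces $\tilde\eta\le\tilde\eta_{\max}$. (Intuitively: a too‑large stepsize overshoots $\xB_2$ and the risk increases; only this one step of monotonicity is used.)

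\textbf{Deploying the stable‑phase machinery with uniform constants.} With $\tilde\eta\le\tilde\eta_{\max}$ (so $\eta:=\tilde\eta/m$ bounded) and $\wB_0=0$, \Cref{thm:phasetran}, whose proof uses \Cref{thm: Bound of EoS phase}, provides a time $s$ at which \eqref{eq: loss thresh2} holds, and \Cref{thm: Implicit bias and bound of stable phase} then gives $L(\wB_t)=\Theta(1/t)$ for $t\ge s$ with constants depending only on $L(\wB_s),\|\wB_s\|,\tilde\eta,\rho$. To make these uniform I would bound $\|\wB_s\|$: from $|\ell'|\le\ell$ one has $\|\wB_{k+1}\|-\|\wB_k\|\le\tilde\eta\|\nabla L(\wB_k)\|\le\tilde\eta\rho L(\wB_k)$, hence $\|\wB_s\|\le\tilde\eta\rho\sum_{k<s}L(\wB_k)$; feeding in the averaged‑risk estimate of \Cref{thm: Bound of EoS phase}, the explicit $\tilde\eta$ cancels the $1/\eta$ appearing there, and, using $L(\wB_{s-1})>$ threshold together with monotonicity to bound $\gamma^2\eta s$ (which controls the residual $\log^2(\gamma^2\eta s)$ factor), one gets $\|\wB_s\|\le R_\ast$ with $R_\ast$ independent of $\tilde\eta$. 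By near‑homogeneity and Lipschitzness, $y_if(\wB_s;\xB_i)\le\rho\|\wB_s\|+\kappa$, so $L(\wB_s)\ge\tfrac1{2e^\kappa}e^{-\rho R_\ast}>0$, while $L(\wB_s)$ is bounded above by the right‑hand side of \eqref{eq: loss thresh2}; and $s$ is bounded accordingly.

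\textbf{Conclusion.} Since $L(\wB_s)$ and $\|\wB_s\|$ now lie in a fixed positive‑bounded range and $\tilde\eta\in(0,\tilde\eta_{\max}]$ — and decreasing $\tilde\eta$ only enlarges the lower‑bound constant in \Cref{thm: Implicit bias and bound of stable phase}, so there is no degeneracy as $\tilde\eta\to0$ — the $\Theta(1/t)$ lower bound reads $L(\wB_t)\ge c_0'/t$ for $t\ge s$, with $c_0'$ independent of $t$ and $\tilde\eta$. For $t\le s$, monotonicity gives $L(\wB_t)\ge L(\wB_s)\ge c_0''>0\ge c_0''/t$. Taking $c_0=\min\{c_0',c_0''\}$ finishes the proof, and tracking the constants shows $c_0$ is a function of $(\alpha,\phi,\xB_1,\xB_2,\gamma,\kappa,\beta)$ only.

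\textbf{Main obstacle.} I expect the second step to be the crux. The bounds furnished by \Cref{thm: Bound of EoS phase} and \Cref{thm:phasetran} carry $\tilde\eta$‑dependence, and the argument succeeds only because every factor of $\eta$ ultimately cancels in $\|\wB_s\|\le\tilde\eta\rho\sum_{k<s}L(\wB_k)$ and because the crossing time $s$ cannot be simultaneously large and paired with a large $\|\wB_s\|$; verifying this cancellation and extracting genuinely $\tilde\eta$‑uniform constants is the delicate part. The first‑step computation is elementary, but the near‑homogeneity slack $\kappa$ must be carried through correctly.
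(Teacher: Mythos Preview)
Your overall architecture matches the paper's proof exactly: (i) exploit the first GD step together with the non-increasing constraint to force $\tilde\eta\le\tilde\eta_{\max}$ (equivalently $\eta\le c_3$), then (ii) feed this stepsize bound into the stable-phase lower bound $L_t\ge\big(1/L_s+3\eta(t-s)\big)^{-1}$ from \Cref{lem: Order of Loss in general model}, taking care that the constants are uniform in $\eta$. Your first-step computation via $\xB_2$ is correct and arguably cleaner than the paper's, which works with $\xB_1$ along the same lines.

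The substantive difference is in how you obtain a uniform lower bound on $L(\wB_s)$ at the crossing time. The paper does this in one line: since $\eta\le c_3$, the one-step change satisfies $|q_i(t{+}1)-q_i(t)|\le\eta L_0\le c_3L_0$, so by the elementary inequality $\ell(x+c)\ge e^{-c}\ell(x)$ one gets $L_s\ge e^{-c_3L_0}L_{s-1}\ge e^{-c_3L_0}\,l_b$, and then $\eta\le c_3$ again replaces $l_b$ by the $\eta$-free constant $l_c:=\min\{1/(4e^{\kappa+2}n),\,1/(c_3(8+4\tilde\beta))\}$. No parameter-norm bookkeeping is needed.

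Your route through $\|\wB_s\|\le\tilde\eta\rho\sum_{k<s}L(\wB_k)$ and \Cref{thm: Bound of EoS phase} is more circuitous and, as written, has a gap. You propose to control the residual $\log^2(\gamma^2\eta s)$ by ``bounding $\gamma^2\eta s$'' via $L(\wB_{s-1})>l_b$ and monotonicity; but that combination only yields an \emph{upper} bound on $\gamma^2\eta s$ (through $s\,l_b\le\sum_{k<s}L(\wB_k)\le$ the \Cref{thm: Bound of EoS phase} estimate). If $\gamma^2\eta s$ can be small---which happens when $\eta$ is small and $s$ is moderate---then $\log^2(\gamma^2\eta s)$ blows up and your bound on $\rho\|\wB_s\|$ with it. A lower bound on $\eta s$ does hold, but establishing it requires exactly the one-step loss-decrement estimate ($L_k-L_{k+1}\le C\eta$) that the paper uses directly; once you have that, it is simpler to bypass \Cref{thm: Bound of EoS phase} altogether and argue as the paper does. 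Alternatively, use the trivial bound $\sum_{k<s}L(\wB_k)\le sL_0$ together with your upper bound on $\eta s$, which gives $\rho\|\wB_s\|\le \eta s\,L_0\le C$ without any logarithmic term.
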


\paragraph{Effect of model rescaling.}  
We conclude this section by discussing the impact of rescaling the model. Specifically, we replace the two-layer network in the mean-field scaling \Cref{eq: 2nn} by the following 
\[
    f(\wB;\xB) := b\cdot \frac{1}{m} \sum_{j=1}^m a_j \phi(\xB^\top \wB^{(j)}),
\]
and evaluate the impact of the scaling factor $b$ on our results. 
By choosing the optimal stepsize that balances the EoS and stable phases as in \Cref{cor:acceleration}, we optimize the risk bound obtained by GD with a fixed budget of $T$ steps and get the following bound. Detailed derivations are deferred to \Cref{sec:scaling}. 
\[
    L(\wB_T) = \begin{cases}
        \Ocal( 1/T^2) & \text{if } b \ge 1,\\
        \Ocal(b^{-3}/T^2 ) & \text{if } b < 1.
    \end{cases}
\]
This suggests that as long as $b\ge 1$, we get the same acceleration effect. In particular, the mean-field scaling $b=1$ \citep{song2018mean,chizat2020implicit} and the \emph{neural tangent kernel} (NTK) scaling $b=\sqrt{m}$ \citep{du2018gradient,jacot2018neural} give the same $\Ocal(1/ T^2)$ acceleration effect. 
An NTK analysis of large stepsize is included in \citep{wu2024large} and their conclusion is consistent with ours. 
Finally, we remark that our analysis holds for any width $m$ and uses techniques different from the mean-field or NTK methods. However, our acceleration analysis only allows linearly separable datasets.

\section{Experiments}\label{sec:experiments}
We conduct three sets of experiments to validate our theoretical insights. In the first set, we use a subset of the CIFAR-10 dataset \citep{krizhevsky2009learning}, which includes 6,000 randomly selected samples from the ``airplane'' and ``automobile'' classes. Our model is a multilayer perceptron (MLP) with four trainable layers and GELU activation functions, with a hidden dimension of 200 for each hidden layer. The MLP is trained using gradient descent with random initialization, as described in \Cref{eq: GD}. The results are shown in \Cref{fig:sfig1-loss,fig:sfig1-acc,fig:sfig1-margin}.

In the second set of experiments, we consider an XOR dataset consisting of four samples:
\begin{align*}
\xB_1=(-1,-1), y_1=1;\  \xB_2=(1,1), y_2=1;\ \xB_3=(1,-1),y_3=-1;\  \xB_4=(-1,1), y_4= -1.
\end{align*}
The above XOR dataset is not linearly separable. 
We test \Cref{eq: GD} with different stepsizes on a two-layer network \Cref{eq: 2nn} with the leaky softplus activation (see \Cref{eg: leak homo act} with $c=0.5$). 
The network width is $m=20$. The initialization is random.
The results are presented in \Cref{fig:sfig211-loss,fig:sfig212-rate,fig:sfig213-margin}.

In the third set of experiments, we consider the same task as in the first set of experiments, but we test \Cref{eq: GD} with different stepsizes on a two-layer network \Cref{eq: 2nn} with the softplus activation. 
The network width is $m=40$. The initialization is random.
The results are presented in \Cref{fig:sfig221-loss,fig:sfig222-rate,fig:sfig223-margin}.

\paragraph{Margin improvement.}
 \Cref{fig:sfig1-margin,fig:sfig213-margin,fig:sfig223-margin} show that the normalized margin nearly monotonically increases once gradient descent (GD) enters the stable phase, regardless of step size. This observation aligns with our theoretical findings in \Cref{thm: Implicit bias and bound of stable phase}.

\paragraph{Fast optimization.}
From \Cref{fig:sfig1-loss,fig:sfig211-loss,fig:sfig221-loss}, we observe that after GD enters the stable phase, a larger stepsize consistently leads to a smaller empirical risk compared to the smaller stepsizes, which is consistent with our \Cref{thm:phasetran} and \Cref{cor:acceleration}. 
Besides, \Cref{fig:sfig212-rate,fig:sfig222-rate} suggest that, asymptotically, GD converges at a rate of $\Ocal(1/(\tilde\eta t)) = \Ocal(1/(\eta t))$ (The width of networks is fixed), which verifies the sharpness of our stable phase bound in \Cref{thm:phasetran}.

\begin{figure}[t!]
\centering
\subfigure[Empirical risk, XOR.]{
    \includegraphics[width=.3\linewidth]{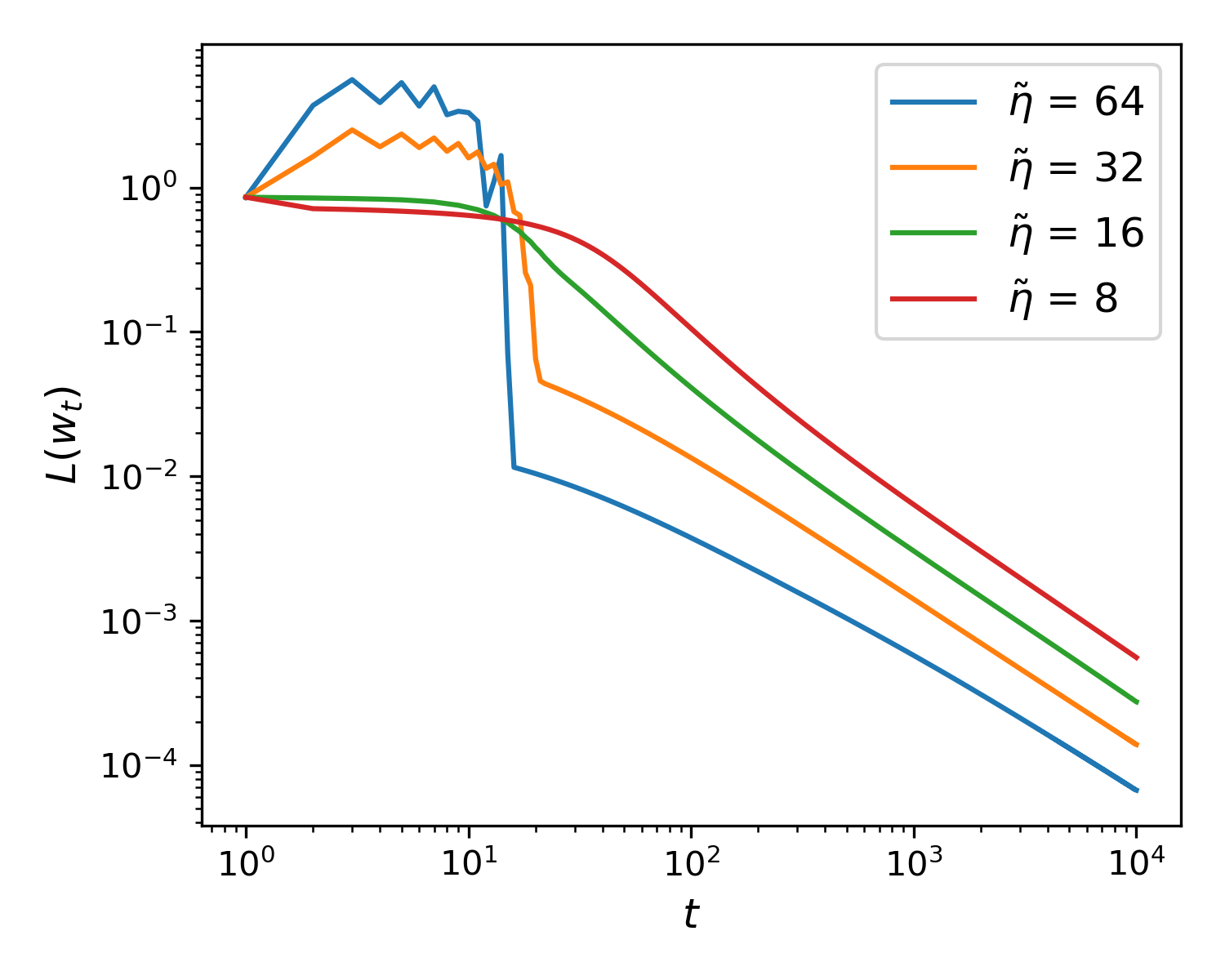}
    \label{fig:sfig211-loss}
}
\hfill 
\subfigure[Asymptotic rate, XOR.]{
    \includegraphics[width=.3\linewidth]{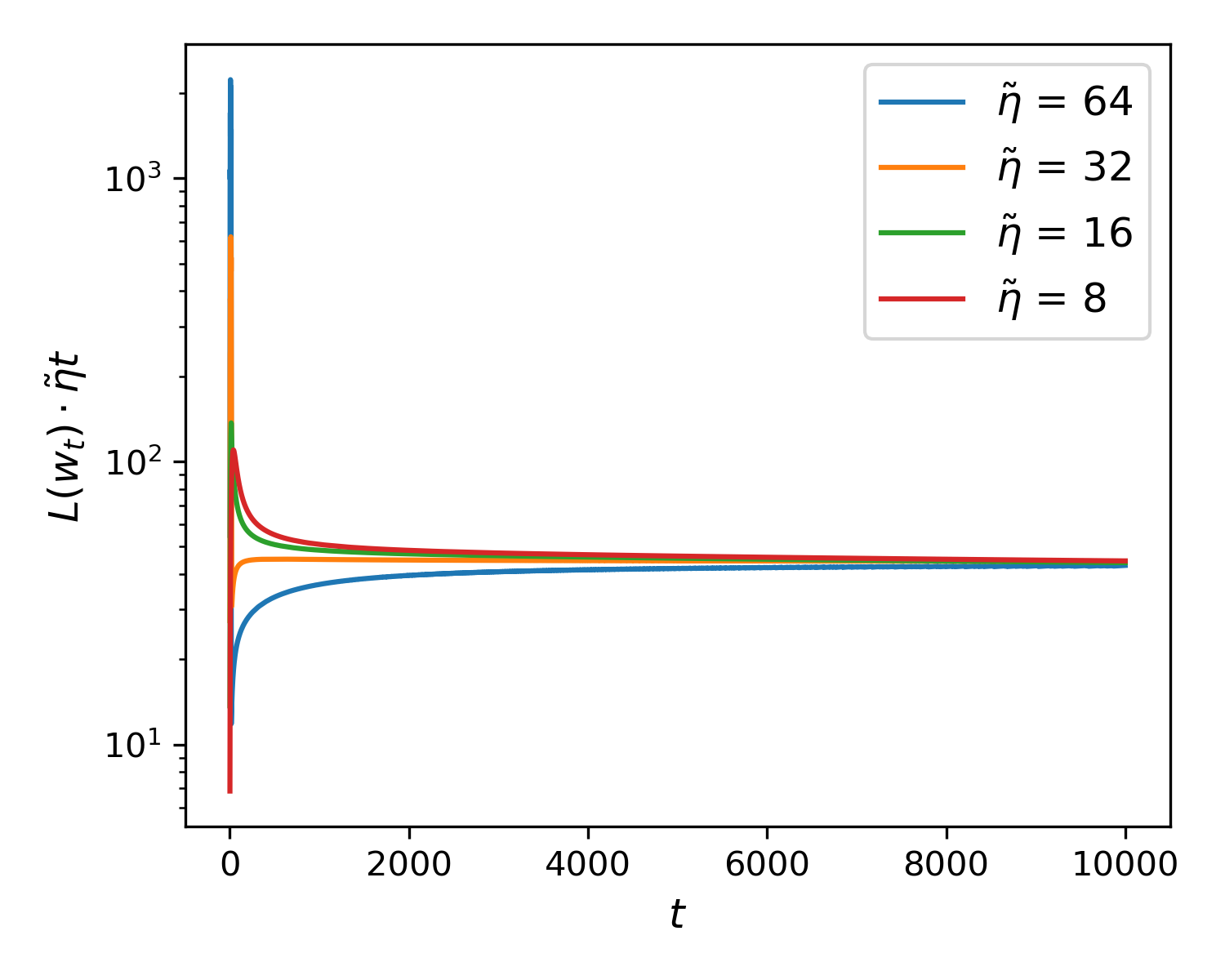}
    \label{fig:sfig212-rate}
}
\hfill 
\subfigure[Normalized margin, XOR. ]{
    \includegraphics[width=.3\linewidth]{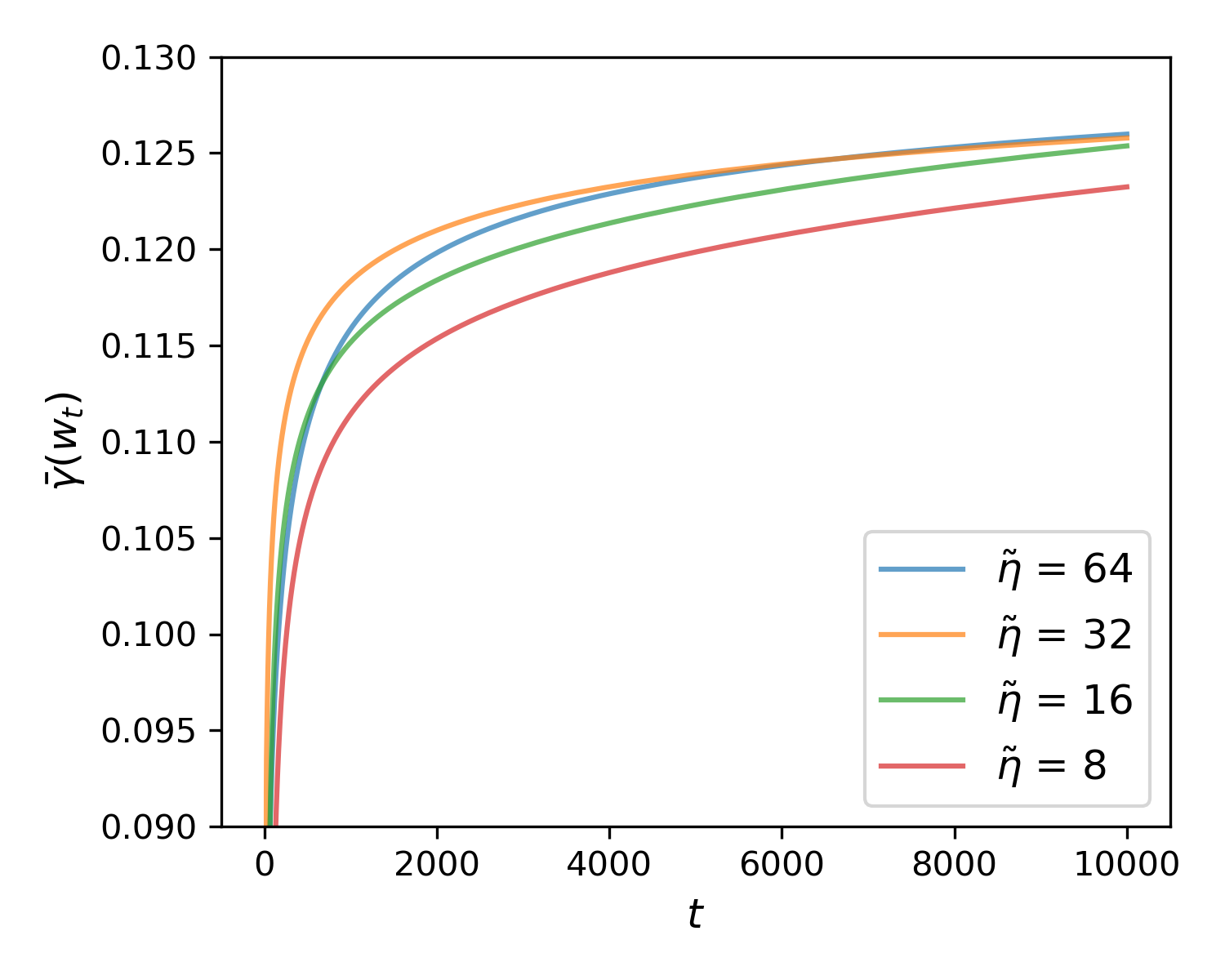}
    \label{fig:sfig213-margin}
}
\subfigure[Empirical risk, CIFAR-10.]{
    \includegraphics[width=.3\linewidth]{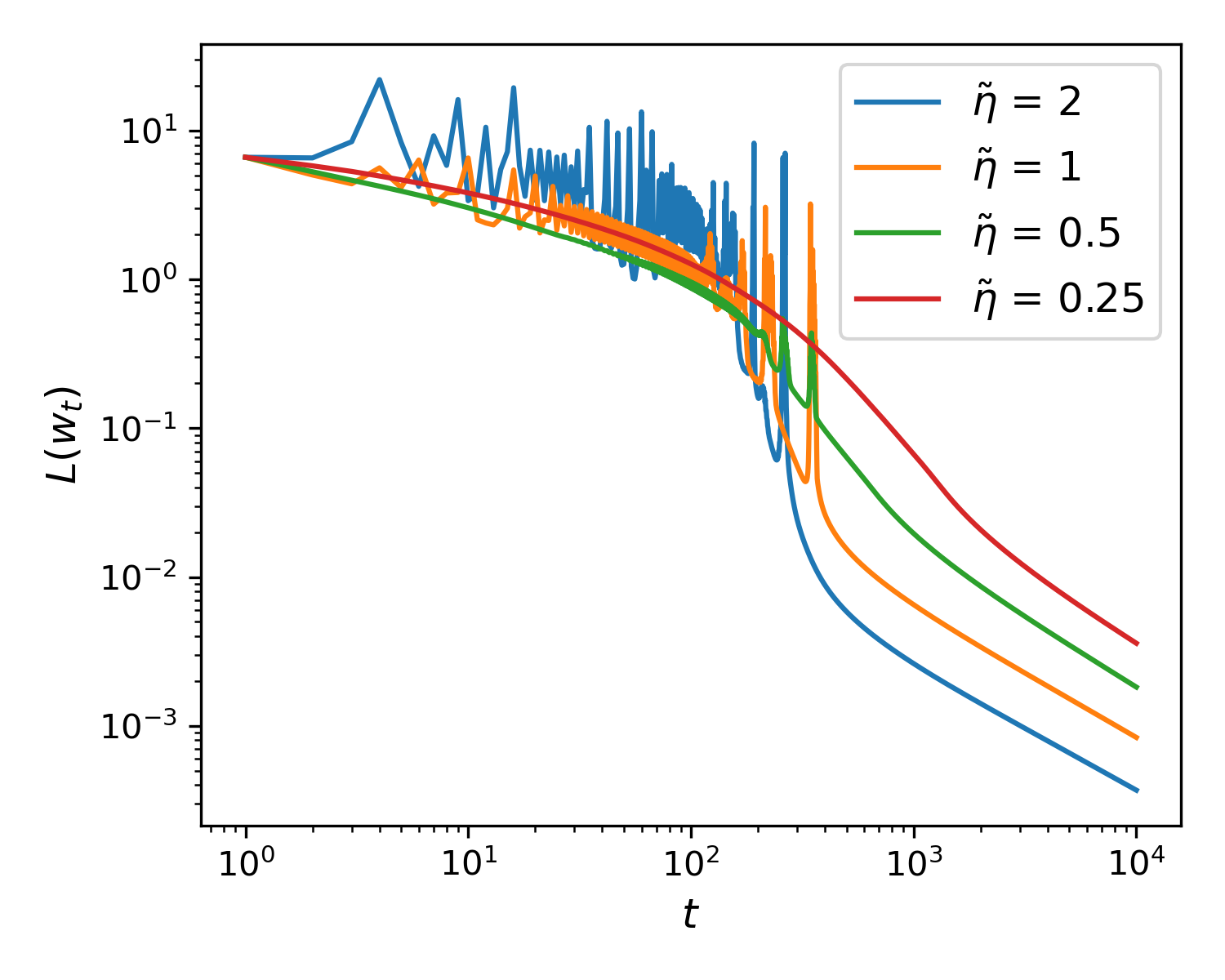}
    \label{fig:sfig221-loss}
}
\hfill 
\subfigure[Asymptotic rate, CIFAR-10.]{
    \includegraphics[width=.3\linewidth]{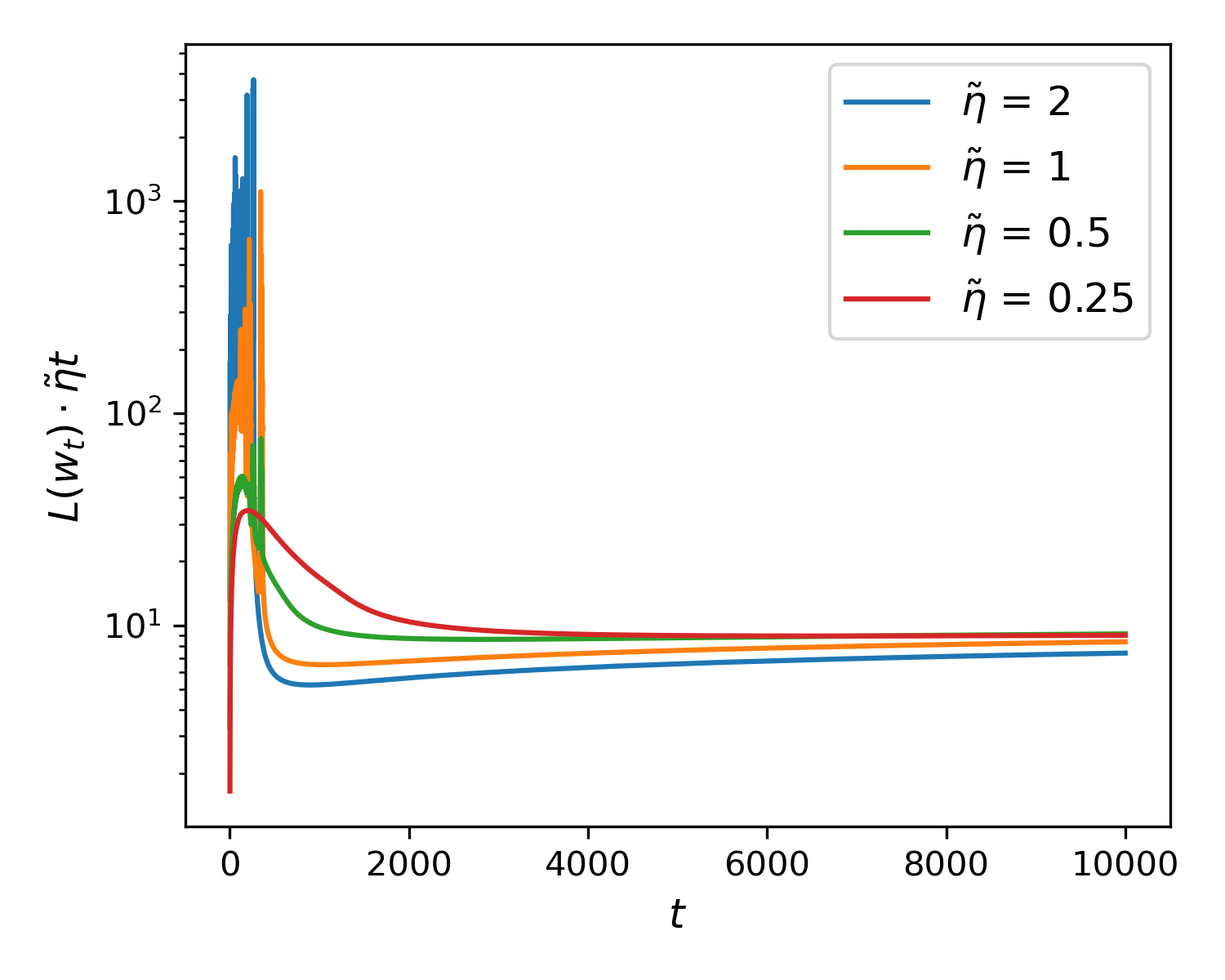}
    \label{fig:sfig222-rate}
}
\hfill 
\subfigure[Normalized margin, CIFAR-10.]{
    \includegraphics[width=.3\linewidth]{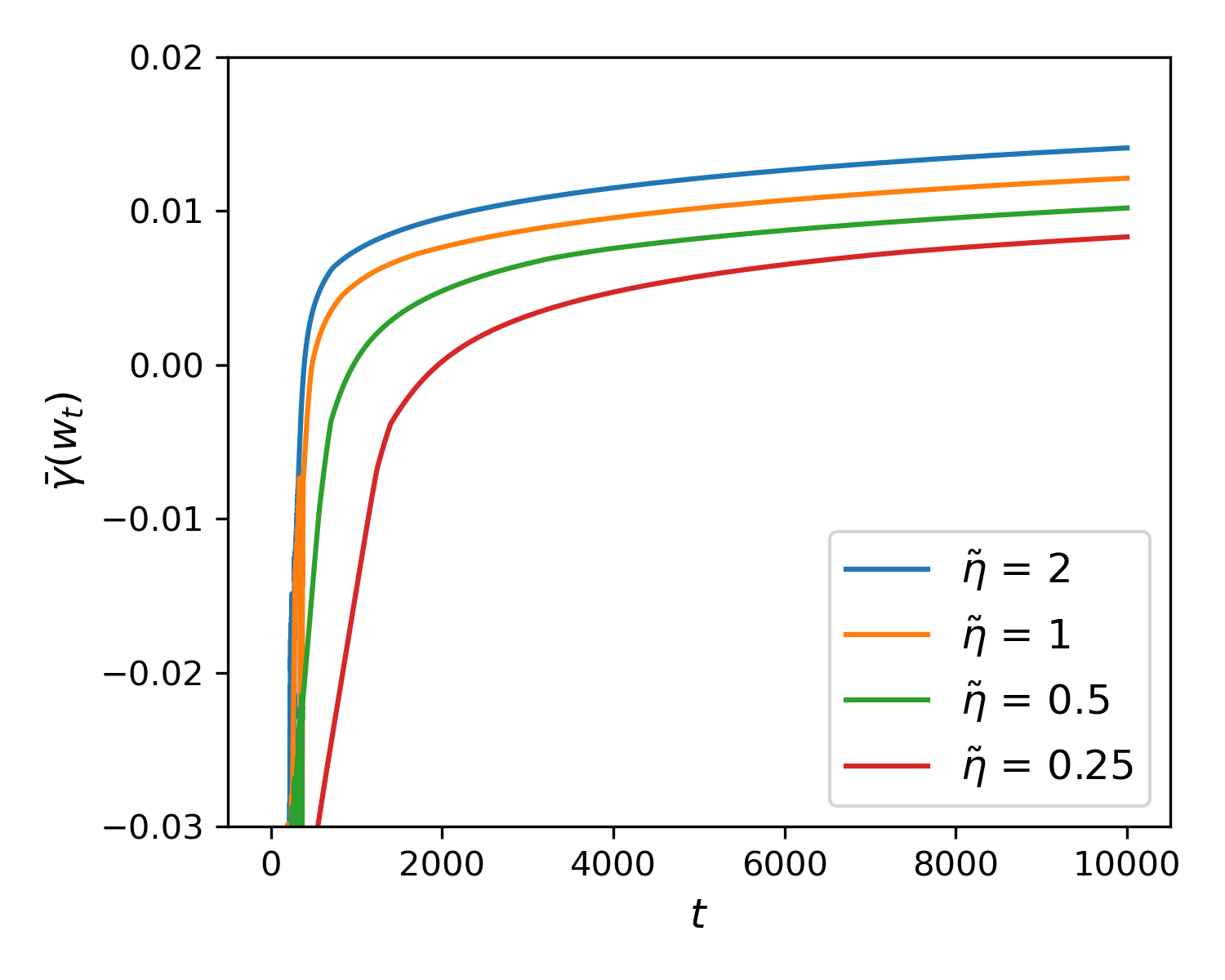}
    \label{fig:sfig223-margin}
}

\caption{Behavior of \eqref{eq: GD} for two-layer networks \Cref{eq: 2nn} with leaky softplus activation function (see \Cref{eg: leak homo act} with $c=0.5$).
We consider an XOR dataset and a subset of CIFAR-10 dataset. 
In both cases, we observe that (1) GD with a large stepsize achieves a faster optimization compared to GD with a small stepsize, (2) the asymptotic convergence rate of the empirical risk is $\Ocal(1/ (\tilde\eta t))$, and (3) in the stable phase, the normalized margin (nearly) monotonically increases.
These observations are consistent with our theoretical understanding of large stepsize GD.
More details about the experiments are explained in \Cref{sec:experiments}.
}
\label{fig:sfig2}
\end{figure}


\section{Related Works}\label{sec:related}
In this section, we discuss related papers.

\paragraph{Small stepsize and implicit bias.} 
For logistic regression on linearly separable data, \citet{soudry2018implicit,ji2018risk} showed that the direction of small stepsize GD converges to the max-margin solution. 
Their results were later extended by \citet{gunasekar2017implicit,gunasekar2018implicit,nacson2019stochastic,nacson2019lexicographic, nacson2019convergence,ji2021fast,lyu2020gradient,ji2020directional,chizat2020implicit,chatterji2021does,kunin2022asymmetric} to other algorithms and non-linear models.
However, in all of their analysis, the stepsize of GD needs to be small such that the empirical risk decreases monotonically. 
In contrast, our focus is GD with a large stepsize that induces non-monotonic risk. 

Two papers \citep{nacson2019lexicographic,kunin2022asymmetric} studied margin maximization theory for a special form of non-homogenous models. Specifically, when viewed in terms of different subsets of the trainable parameters, the model is homogeneous, although the order of homogeneity may vary. 
Compared to their setting, our non-homogenous models only require a bounded homogenous error (see \Cref{assump:model:near-homogeneous}). 
Therefore, our theory can cover two-layer networks \Cref{eq: 2nn} with non-homogeneous activations such as GELU and SiLU that cannot be covered by \citep{nacson2019lexicographic,kunin2022asymmetric}. 


\paragraph{Large stepsize and EoS.} 
In practice, large stepsizes are often preferred when using GD to train neural networks to achieve effective optimization and generalization performance \citep[see][and references therein]{wu2018sgd,cohen2020gradient}. 
In such scenarios, the empirical risk often oscillates in the beginning. This phenomenon is named \emph{edge of stability} (EoS) by \citet{cohen2020gradient}.
The theory of EoS is mainly studied in relatively simplified cases such as one- or two-dimensional functions \citep{zhu2022understanding, chen2023beyond, ahn2022understanding,kreisler2023gradient,wang2023good}, linear model \citep{wu2023implicit,wu2024large}, matrix factorization \citep{wang2022large,chen2023beyond}, scale-invariant networks \citep{lyu2022understanding}, for an incomplete list of references. 
Compared to them, we focus on a more practical setup of training two-layer non-linear networks with large stepsize GD. 
There are some general theories of EoS subject to subtle assumptions \citep[for example,][]{kong2020stochasticity,ahn2022understanding,ma2022beyond, damian2022self,wang2022analyzing,lu2023benign}, which are not directly comparable to ours.

In what follows, we make a detailed discussion about papers that directly motivate our work \citep{lyu2020gradient,ji2020directional,chatterji2021does,wu2023implicit,wu2024large}.

\paragraph{Comparison with \citet{lyu2020gradient,ji2020directional}.}
Both results in \citep{lyu2020gradient,ji2020directional} focused on $L$-homogenous networks. 
Specifically, \citet{lyu2020gradient} showed that a modified version of normalized margin (see \Cref{eq: norm margin}) induced by GD with small stepsize (such that the risk decreases monotonically) increases, with limiting points of $(\wB_t/\|\wB_t\|)_{t=1}^{\infty}$ converging to KKT points of a margin-maximization problem. 
Under additional o-minimal conditions, \citet{ji2020directional} showed that gradient flow converges in direction. 
Our work is different from theirs in two aspects. 
First, we allow GD with a large stepsize that may cause risk oscillation. 
Second, our theory covers non-homogenous predictors, which include two-layer networks with many commonly used activation functions beyond the scope of \citep{lyu2020gradient,ji2020directional}.
Compared to \citet{lyu2020gradient,ji2020directional}, we only show the improvement of the margin, and our theory is limited to nearly $1$-homogenous predictors (\Cref{assump: activation:near-homogeneous}).
It remains open to show directional convergence and to extend our near $1$-homogenity condition to a ``near $L$-homogeneity'' condition.


\paragraph{Comparison with \citet{chatterji2021does}.} 
The work by \citet{chatterji2021does} studied the convergence of GD in training deep networks under logistic loss. 
Their results are related to ours as we both consider networks with nearly homogeneous activations and we both have a stable phase analysis (although this is not explicitly mentioned in their paper). 
However, our results are significantly different from theirs. 
Specifically, in our notation, they require the homogenous error $\kappa$ (see \Cref{assump: activation:near-homogeneous}) to be smaller than $\Ocal(\log(1/L(\wB_s)) / \|\wB_s\|)\approx \Ocal(\bar \gamma(\wB_s))$, where $s$ is the time for GD to enter the stable phase. 
Note that the margin when GD enters the stable phase could be arbitrarily small. 
In comparison, we only require the homogenous error to be bounded by a constant. 
As a consequence, we can handle many commonly used activation functions (see \Cref{eg: homo act}) while they can only handle the Huberized ReLU with a small $h$ in \Cref{eg: homo act}.
Moreover, they require the stepsize $\tilde\eta$ to be smaller than $\Ocal(\kappa / \|\wB_s\|^8)$, thus they only allow very small stepsize. In contrast, we allow $\tilde \eta$ to be an arbitrarily large constant.

\paragraph{Comparison with \citet{wu2023implicit,wu2024large}.}
The works by \citet{wu2023implicit,wu2024large} directly motivate our paper. 
In particular, for logistic regression on linearly separable data, \citet{wu2023implicit} showed margin maximization of GD with large stepsize and \citet{wu2024large} showed fast optimization of GD with large stepsize. 
Our work can be viewed as an extension of \citep{wu2023implicit,wu2024large} from linear predictors to non-linear predictors such as two-layer networks. 
Besides, our results for margin improvement and convergence within the stable phase (\Cref{thm: Implicit bias and bound of stable phase}) hold for the general dataset, while their results strongly rely on the linear separability of the dataset.



\section{Conclusion}\label{sec:conclusion}
We provide a theory of large stepsize gradient descent (GD) for training non-homogeneous predictors such as two-layer networks using the logistic loss function. 
Our analysis explains the empirical observations: large stepsize GD often reveals two distinct phases in the training process, where the empirical risk oscillates in the beginning but decreases monotonically subsequently. 
We show that the phase transition happens because the average empirical risk decreases despite the risk oscillation.
In addition, we show that large stepsize GD improves the normalized margin in the long run, which extends the existing implicit bias theory for homogenous predictors to non-homogenous predictors. 
Finally, we show that large stepsize GD, by entering the initial oscillatory phase, achieves acceleration when minimizing the empirical risk. 

\section*{Acknowledgements}
We thank Fabian Pedregosa for his suggestions on an early draft and Jason D. Lee and Kaifeng Lyu for their comments clarifying the applicability of our result to sigmoid networks. 
We gratefully acknowledge the support of the NSF for FODSI through grant DMS-2023505, of the NSF and the Simons Foundation for the Collaboration on the Theoretical Foundations of Deep Learning through awards DMS-2031883 and \#814639, and of the ONR through MURI award N000142112431.

\bibliography{ref}

\begin{thebibliography}{40}
\providecommand{\natexlab}[1]{#1}
\providecommand{\url}[1]{\texttt{#1}}
\expandafter\ifx\csname urlstyle\endcsname\relax
  \providecommand{\doi}[1]{doi: #1}\else
  \providecommand{\doi}{doi: \begingroup \urlstyle{rm}\Url}\fi

\bibitem[Ahn et~al.(2022)Ahn, Zhang, and Sra]{ahn2022understanding}
Kwangjun Ahn, Jingzhao Zhang, and Suvrit Sra.
\newblock Understanding the unstable convergence of gradient descent.
\newblock In \emph{International Conference on Machine Learning}, pages 247--257. PMLR, 2022.

\bibitem[Ahn et~al.(2023)Ahn, Bubeck, Chewi, Lee, Suarez, and Zhang]{ahn2023learning}
Kwangjun Ahn, Sebastien Bubeck, Sinho Chewi, Yin~Tat Lee, Felipe Suarez, and Yi~Zhang.
\newblock Learning threshold neurons via edge of stability.
\newblock In \emph{Thirty-seventh Conference on Neural Information Processing Systems}, 2023.

\bibitem[Brutzkus et~al.(2018)Brutzkus, Globerson, Malach, and Shalev-Shwartz]{brutzkus2018sgd}
Alon Brutzkus, Amir Globerson, Eran Malach, and Shai Shalev-Shwartz.
\newblock Sgd learns over-parameterized networks that provably generalize on linearly separable data.
\newblock In \emph{International Conference on Learning Representations}, 2018.

\bibitem[Chatterji et~al.(2021)Chatterji, Long, and Bartlett]{chatterji2021does}
Niladri~S Chatterji, Philip~M Long, and Peter Bartlett.
\newblock When does gradient descent with logistic loss interpolate using deep networks with smoothed relu activations?
\newblock In \emph{Conference on Learning Theory}, pages 927--1027. PMLR, 2021.

\bibitem[Chen and Bruna(2023)]{chen2023beyond}
Lei Chen and Joan Bruna.
\newblock Beyond the edge of stability via two-step gradient updates.
\newblock In \emph{International Conference on Machine Learning}, pages 4330--4391. PMLR, 2023.

\bibitem[Chizat and Bach(2020)]{chizat2020implicit}
Lenaic Chizat and Francis Bach.
\newblock Implicit bias of gradient descent for wide two-layer neural networks trained with the logistic loss.
\newblock In \emph{Conference on learning theory}, pages 1305--1338. PMLR, 2020.

\bibitem[Cohen et~al.(2020)Cohen, Kaur, Li, Kolter, and Talwalkar]{cohen2020gradient}
Jeremy Cohen, Simran Kaur, Yuanzhi Li, J~Zico Kolter, and Ameet Talwalkar.
\newblock Gradient descent on neural networks typically occurs at the edge of stability.
\newblock In \emph{International Conference on Learning Representations}, 2020.

\bibitem[Damian et~al.(2022)Damian, Nichani, and Lee]{damian2022self}
Alex Damian, Eshaan Nichani, and Jason~D Lee.
\newblock Self-stabilization: The implicit bias of gradient descent at the edge of stability.
\newblock In \emph{The Eleventh International Conference on Learning Representations}, 2022.

\bibitem[Du et~al.(2018)Du, Zhai, Poczos, and Singh]{du2018gradient}
Simon~S Du, Xiyu Zhai, Barnabas Poczos, and Aarti Singh.
\newblock Gradient descent provably optimizes over-parameterized neural networks.
\newblock In \emph{International Conference on Learning Representations}, 2018.

\bibitem[Frei et~al.(2021)Frei, Cao, and Gu]{frei2021provable}
Spencer Frei, Yuan Cao, and Quanquan Gu.
\newblock Provable generalization of sgd-trained neural networks of any width in the presence of adversarial label noise.
\newblock In \emph{International Conference on Machine Learning}, pages 3427--3438. PMLR, 2021.

\bibitem[Goyal et~al.(2017)Goyal, Doll{\'a}r, Girshick, Noordhuis, Wesolowski, Kyrola, Tulloch, Jia, and He]{goyal2017accurate}
Priya Goyal, Piotr Doll{\'a}r, Ross Girshick, Pieter Noordhuis, Lukasz Wesolowski, Aapo Kyrola, Andrew Tulloch, Yangqing Jia, and Kaiming He.
\newblock Accurate, large minibatch {SGD}: Training {ImageNet} in 1 hour.
\newblock \emph{arXiv preprint arXiv:1706.02677}, 2017.

\bibitem[Gunasekar et~al.(2017)Gunasekar, Woodworth, Bhojanapalli, Neyshabur, and Srebro]{gunasekar2017implicit}
Suriya Gunasekar, Blake~E Woodworth, Srinadh Bhojanapalli, Behnam Neyshabur, and Nati Srebro.
\newblock Implicit regularization in matrix factorization.
\newblock \emph{Advances in Neural Information Processing Systems}, 30, 2017.

\bibitem[Gunasekar et~al.(2018)Gunasekar, Lee, Soudry, and Srebro]{gunasekar2018implicit}
Suriya Gunasekar, Jason~D Lee, Daniel Soudry, and Nati Srebro.
\newblock Implicit bias of gradient descent on linear convolutional networks.
\newblock \emph{Advances in Neural Information Processing Systems}, 31, 2018.

\bibitem[Hoffer et~al.(2017)Hoffer, Hubara, and Soudry]{hoffer2017train}
Elad Hoffer, Itay Hubara, and Daniel Soudry.
\newblock Train longer, generalize better: Closing the generalization gap in large batch training of neural networks.
\newblock \emph{Advances in Neural Information Processing Systems}, 30, 2017.

\bibitem[Jacot et~al.(2018)Jacot, Gabriel, and Hongler]{jacot2018neural}
Arthur Jacot, Franck Gabriel, and Cl{\'e}ment Hongler.
\newblock Neural tangent kernel: Convergence and generalization in neural networks.
\newblock \emph{Advances in Neural Information Processing Systems}, 31, 2018.

\bibitem[Ji and Telgarsky(2018)]{ji2018risk}
Ziwei Ji and Matus Telgarsky.
\newblock Risk and parameter convergence of logistic regression.
\newblock \emph{arXiv preprint arXiv:1803.07300}, 2018.

\bibitem[Ji and Telgarsky(2020)]{ji2020directional}
Ziwei Ji and Matus Telgarsky.
\newblock Directional convergence and alignment in deep learning.
\newblock \emph{Advances in Neural Information Processing Systems}, 33:\penalty0 17176--17186, 2020.

\bibitem[Ji et~al.(2021)Ji, Srebro, and Telgarsky]{ji2021fast}
Ziwei Ji, Nathan Srebro, and Matus Telgarsky.
\newblock Fast margin maximization via dual acceleration.
\newblock In \emph{International Conference on Machine Learning}, pages 4860--4869. PMLR, 2021.

\bibitem[Kong and Tao(2020)]{kong2020stochasticity}
Lingkai Kong and Molei Tao.
\newblock Stochasticity of deterministic gradient descent: Large learning rate for multiscale objective function.
\newblock \emph{Advances in Neural Information Processing Systems}, 33:\penalty0 2625--2638, 2020.

\bibitem[Kreisler et~al.(2023)Kreisler, Nacson, Soudry, and Carmon]{kreisler2023gradient}
Itai Kreisler, Mor~Shpigel Nacson, Daniel Soudry, and Yair Carmon.
\newblock Gradient descent monotonically decreases the sharpness of gradient flow solutions in scalar networks and beyond.
\newblock In \emph{Proceedings of the 40th International Conference on Machine Learning}, volume 202 of \emph{Proceedings of Machine Learning Research}, pages 17684--17744. PMLR, 23--29 Jul 2023.

\bibitem[Krizhevsky et~al.(2009)Krizhevsky, Hinton, et~al.]{krizhevsky2009learning}
Alex Krizhevsky, Geoffrey Hinton, et~al.
\newblock Learning multiple layers of features from tiny images, 2009.

\bibitem[Kunin et~al.(2022)Kunin, Yamamura, Ma, and Ganguli]{kunin2022asymmetric}
Daniel Kunin, Atsushi Yamamura, Chao Ma, and Surya Ganguli.
\newblock The asymmetric maximum margin bias of quasi-homogeneous neural networks.
\newblock In \emph{The Eleventh International Conference on Learning Representations}, 2022.

\bibitem[Lu et~al.(2023)Lu, Wu, Yang, and Zou]{lu2023benign}
Miao Lu, Beining Wu, Xiaodong Yang, and Difan Zou.
\newblock Benign oscillation of stochastic gradient descent with large learning rate.
\newblock In \emph{The Twelfth International Conference on Learning Representations}, 2023.

\bibitem[Lyu and Li(2020)]{lyu2020gradient}
Kaifeng Lyu and Jian Li.
\newblock Gradient descent maximizes the margin of homogeneous neural networks.
\newblock In \emph{International Conference on Learning Representations}, 2020.

\bibitem[Lyu et~al.(2022)Lyu, Li, and Arora]{lyu2022understanding}
Kaifeng Lyu, Zhiyuan Li, and Sanjeev Arora.
\newblock Understanding the generalization benefit of normalization layers: Sharpness reduction.
\newblock \emph{Advances in Neural Information Processing Systems}, 35:\penalty0 34689--34708, 2022.

\bibitem[Ma et~al.(2022)Ma, Kunin, Wu, and Ying]{ma2022beyond}
Chao Ma, Daniel Kunin, Lei Wu, and Lexing Ying.
\newblock Beyond the quadratic approximation: The multiscale structure of neural network loss landscapes.
\newblock \emph{arXiv preprint arXiv:2204.11326}, 2022.

\bibitem[Nacson et~al.(2019{\natexlab{a}})Nacson, Gunasekar, Lee, Srebro, and Soudry]{nacson2019lexicographic}
Mor~Shpigel Nacson, Suriya Gunasekar, Jason Lee, Nathan Srebro, and Daniel Soudry.
\newblock Lexicographic and depth-sensitive margins in homogeneous and non-homogeneous deep models.
\newblock In \emph{International Conference on Machine Learning}, pages 4683--4692. PMLR, 2019{\natexlab{a}}.

\bibitem[Nacson et~al.(2019{\natexlab{b}})Nacson, Lee, Gunasekar, Savarese, Srebro, and Soudry]{nacson2019convergence}
Mor~Shpigel Nacson, Jason Lee, Suriya Gunasekar, Pedro Henrique~Pamplona Savarese, Nathan Srebro, and Daniel Soudry.
\newblock Convergence of gradient descent on separable data.
\newblock In \emph{The 22nd International Conference on Artificial Intelligence and Statistics}, pages 3420--3428. PMLR, 2019{\natexlab{b}}.

\bibitem[Nacson et~al.(2019{\natexlab{c}})Nacson, Srebro, and Soudry]{nacson2019stochastic}
Mor~Shpigel Nacson, Nathan Srebro, and Daniel Soudry.
\newblock Stochastic gradient descent on separable data: Exact convergence with a fixed learning rate.
\newblock In \emph{The 22nd International Conference on Artificial Intelligence and Statistics}, pages 3051--3059. PMLR, 2019{\natexlab{c}}.

\bibitem[Nesterov(2018)]{nesterov2018lectures}
Yurii Nesterov.
\newblock \emph{Lectures on Convex Optimization}.
\newblock Springer Publishing Company, Incorporated, 2nd edition, 2018.
\newblock ISBN 3319915770.

\bibitem[Novikoff(1962)]{novikoff1962convergence}
Albert~BJ Novikoff.
\newblock On convergence proofs for perceptrons.
\newblock In \emph{Proceedings of the Symposium on the Mathematical Theory of Automata}. New York, NY, 1962.

\bibitem[Song et~al.(2018)Song, Montanari, and Nguyen]{song2018mean}
Mei Song, Andrea Montanari, and P~Nguyen.
\newblock A mean field view of the landscape of two-layers neural networks.
\newblock \emph{Proceedings of the National Academy of Sciences}, 115\penalty0 (33):\penalty0 E7665--E7671, 2018.

\bibitem[Soudry et~al.(2018)Soudry, Hoffer, Nacson, Gunasekar, and Srebro]{soudry2018implicit}
Daniel Soudry, Elad Hoffer, Mor~Shpigel Nacson, Suriya Gunasekar, and Nathan Srebro.
\newblock The implicit bias of gradient descent on separable data.
\newblock \emph{The Journal of Machine Learning Research}, 19\penalty0 (1):\penalty0 2822--2878, 2018.

\bibitem[Wang et~al.(2022{\natexlab{a}})Wang, Chen, Zhao, and Tao]{wang2022large}
Yuqing Wang, Minshuo Chen, Tuo Zhao, and Molei Tao.
\newblock Large learning rate tames homogeneity: Convergence and balancing effect.
\newblock In \emph{International Conference on Learning Representations}, 2022{\natexlab{a}}.

\bibitem[Wang et~al.(2023)Wang, Xu, Zhao, and Tao]{wang2023good}
Yuqing Wang, Zhenghao Xu, Tuo Zhao, and Molei Tao.
\newblock Good regularity creates large learning rate implicit biases: edge of stability, balancing, and catapult.
\newblock \emph{arXiv preprint arXiv:2310.17087}, 2023.

\bibitem[Wang et~al.(2022{\natexlab{b}})Wang, Li, and Li]{wang2022analyzing}
Zixuan Wang, Zhouzi Li, and Jian Li.
\newblock Analyzing sharpness along {GD} trajectory: Progressive sharpening and edge of stability.
\newblock \emph{Advances in Neural Information Processing Systems}, 35:\penalty0 9983--9994, 2022{\natexlab{b}}.

\bibitem[Wu et~al.(2023)Wu, Braverman, and Lee]{wu2023implicit}
Jingfeng Wu, Vladimir Braverman, and Jason~D. Lee.
\newblock Implicit bias of gradient descent for logistic regression at the edge of stability.
\newblock In \emph{Thirty-seventh Conference on Neural Information Processing Systems}, 2023.

\bibitem[Wu et~al.(2024)Wu, Bartlett, Telgarsky, and Yu]{wu2024large}
Jingfeng Wu, Peter~L Bartlett, Matus Telgarsky, and Bin Yu.
\newblock Large stepsize gradient descent for logistic loss: Non-monotonicity of the loss improves optimization efficiency.
\newblock \emph{Conference on Learning Theory}, 2024.

\bibitem[Wu and Ma(2018)]{wu2018sgd}
Lei Wu and Chao Ma.
\newblock How {SGD} selects the global minima in over-parameterized learning: A dynamical stability perspective.
\newblock \emph{Advances in Neural Information Processing Systems}, 31, 2018.

\bibitem[Zhu et~al.(2022)Zhu, Wang, Wang, Zhou, and Ge]{zhu2022understanding}
Xingyu Zhu, Zixuan Wang, Xiang Wang, Mo~Zhou, and Rong Ge.
\newblock Understanding edge-of-stability training dynamics with a minimalist example.
\newblock In \emph{The Eleventh International Conference on Learning Representations}, 2022.

\end{thebibliography}
\appendix


\section{Stable Phase Analysis}\label{sec:stable-phase}

In this section, we will prove results for a general smooth predictor $f(\wB;\xB)$ under the logistic loss in the stable phase. Before the proof, we introduce some notations here. 

\paragraph{Notation.} We use the following notation to simplify the presentation. 
\begin{itemize}
    \item $    q_i(t) \coloneqq y_i f(\wB_t; \xB_i),\; q_{\min}(t) \coloneqq \min_{i\in [n]} q_i(t).$
    \item $L_t \coloneqq L(\wB_t), \;\rho_t := \|\wB_t\|_2.$
\end{itemize}
Then, we have the following expression: 
\[
    L(\wB_t) = \frac{1}{n}\sum_{i=1}^n \ell(q_i(t)). 
\]

Here, we give a summary of this section. The proofs are organized into 5 parts. 
\begin{itemize}
    \item In \Cref{sec:decrease-of-loss}, we characterize the decrease of loss $L_t$. 
    \item In \Cref{sec:increase-of-param}, we characterize the change of the parameter norm $\rho_t$. 
    \item In \Cref{sec:margin}, we show the convergence of the normalized margin $\bar \gamma(\wB_t)$. 
    \item In \Cref{sec:sharp-rate}, we characterize the sharp rates of loss $L_t$ and parameter norm $\rho_t$.
     \item In \Cref{sec:thm-stable-phase}, we give the proof of \Cref{thm: Implicit bias and bound of stable phase}. 
\end{itemize}

\subsection{Decrease of the Loss} \label{sec:decrease-of-loss}

In this section, we will show that the loss $L_t$ decreases monotonically in the stable phase. To begin with, we introduce the following definition which is another characterization of $\beta$-smoothness. 
\begin{definition}
[Linearization error]
\label{def: Linearized error}
Given a continuously differentiable function $f: \Rbb^d \to \Rbb$ and two points $\wB,\vB \in \Rbb^d$, the linearization error of $f(\vB)$ with respect to $\wB$ is: 
\[
    \xi[f](\wB,\vB) := f(\vB) - f(\wB)  - \nabla f(\wB)^\top (\vB-\wB).
\]
\end{definition}
For a $\beta$-smooth function, standard convex optimization theory gives the following linearization error bound.
\begin{fact}
    [Linearization error of $\beta$-smooth function]
    \label{lem: Linearized error of beta-smooth function} 
    For a $\beta$-smooth function $f:\Rbb^d \to \Rbb$, we have
  \[ \xi[f](\wB,\vB) :=  f(\vB) - f(\wB) -  \nabla f(\wB)^\top  (\vB-\wB) \le  \frac{\beta}{2}\|\vB -\wB\|_2^2,\quad \text{for every $\wB$ and $\vB$}. \]
 \end{fact}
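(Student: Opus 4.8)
The plan is to prove this bound via the integral form of the fundamental theorem of calculus along the line segment joining $\wB$ and $\vB$; note that convexity of $f$ is not actually needed here, only that $\nabla f$ is $\beta$-Lipschitz, which is the meaning of ``$\beta$-smooth'' used in \Cref{assump:model:smooth}. Since $f$ is continuously differentiable, the scalar function $g(t) := f\big(\wB + t(\vB - \wB)\big)$ is continuously differentiable on $[0,1]$ with $g'(t) = \big\langle \nabla f(\wB + t(\vB-\wB)),\, \vB - \wB\big\rangle$, and in particular $t \mapsto g'(t)$ is continuous, hence integrable.

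First I would write $f(\vB) - f(\wB) = g(1) - g(0) = \int_0^1 g'(t)\,dt$ and subtract the constant $\langle \nabla f(\wB),\, \vB - \wB\rangle = \int_0^1 \langle \nabla f(\wB),\, \vB - \wB\rangle\,dt$ to obtain
\[
\xi[f](\wB,\vB) = \int_0^1 \big\langle \nabla f(\wB + t(\vB-\wB)) - \nabla f(\wB),\; \vB - \wB \big\rangle\,dt.
\]
Then, applying the Cauchy--Schwarz inequality inside the integral followed by the $\beta$-Lipschitz bound $\|\nabla f(\wB + t(\vB-\wB)) - \nabla f(\wB)\| \le \beta t\,\|\vB - \wB\|$ yields
\[
\xi[f](\wB,\vB) \;\le\; \int_0^1 \beta t\, \|\vB - \wB\|_2^2 \, dt \;=\; \frac{\beta}{2}\,\|\vB - \wB\|_2^2,
\]
which is the claimed inequality.

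There is essentially no obstacle here: this is the textbook ``descent lemma'' (see, e.g., \citet{nesterov2018lectures}), and the only point that needs a line of justification is that the composition $t \mapsto \nabla f(\wB + t(\vB-\wB))$ is well-behaved enough to integrate, which is immediate from the assumed continuous differentiability of $f$. One could alternatively package the same computation through a mean-value inequality rather than an explicit integral, but the integral form above is the cleanest route and makes the role of the $\beta$-Lipschitz hypothesis transparent.
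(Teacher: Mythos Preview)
Your proof is correct and is exactly the standard integral-form argument for the descent lemma. The paper itself does not give a proof of this statement at all: it is labeled a \emph{Fact} and attributed to ``standard convex optimization theory,'' so there is nothing to compare against---your write-up simply fills in what the paper takes for granted.
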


We first show a stable phase bound for general smooth and Lipschitz predictors. The following is an extension of Lemma 10 in \citep{wu2024large}. Since we do not require $f$ to be twice differentiable, extra efforts are needed. 

\begin{lemma}
[Self-boundedness of logistic loss]
\label{lem: Self-bounded of logistic loss]}
For the logistic loss $\ell(z):=\log (1+\exp (-z))$, we have
$$
0\le \ell(z) -\ell(x)-\ell^{\prime}(x)(z-x)\le 2\ell(x)(z-x)^2
$$
for $|z-x|<1$.
\end{lemma}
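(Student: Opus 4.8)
\textbf{Proof plan for \Cref{lem: Self-bounded of logistic loss]}.}
The plan is to control the linearization error $\xi[\ell](x,z) := \ell(z) - \ell(x) - \ell'(x)(z-x)$ by a second-order Taylor expansion with Lagrange remainder. Since $\ell(z) = \log(1+e^{-z})$ is $C^\infty$, for each pair $x,z$ there is a point $\zeta$ between $x$ and $z$ with $\xi[\ell](x,z) = \tfrac12 \ell''(\zeta)(z-x)^2$. The lower bound $\xi[\ell](x,z) \ge 0$ is then immediate from the convexity of $\ell$, i.e.\ $\ell''\ge 0$ everywhere (equivalently, $\ell$ lies above its tangent lines). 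So the entire content is the upper bound, which amounts to showing that $\ell''(\zeta) \le 4\,\ell(x)$ whenever $|\zeta - x| < 1$ (in fact $|\zeta-x|\le|z-x|<1$).

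First I would record the elementary identities $\ell'(z) = -\dfrac{1}{1+e^{z}} = -\sigma(-z)$ and $\ell''(z) = \sigma(z)\sigma(-z) = \dfrac{e^{z}}{(1+e^{z})^2}$, where $\sigma$ is the logistic sigmoid; in particular $0 < \ell''(z) \le \tfrac14$ for all $z$. Next I would establish the key pointwise comparison: for all real $u$,
\[
    \ell''(u) \le e^{-u}\,\ell''(0)\cdot(\text{harmless factor}), \qquad\text{more usefully}\qquad \ell''(u) \le \min\{\ell(u),\ \tfrac14\}\cdot(\text{const}).
\]
Concretely, the clean route is to note $\ell''(u) = \sigma(u)\,\ell'(-u)\cdot(-1)$ type manipulations are messier than needed; instead use $\ell''(u) \le e^{-u}$ (since $\ell''(u) = e^{u}/(1+e^u)^2 \le e^{u}/e^{2\max(u,0)} \le e^{-|u|}\le e^{-u}$... ) together with $\ell(u) \ge \tfrac12 e^{-u}$ for $u\ge 0$ wait—rather, the robust inequality I want is $\ell''(u)\le \ell(u)$ for \emph{all} $u$: check $u\ge0$ via $\ell(u)\ge \ell(u)-\ell(\infty)\ge$ something, and $u\le 0$ is trivial since $\ell''\le\tfrac14\le\log(1+1)\le\ell(u)$. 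Granting $\ell''(u)\le \ell(u)$ everywhere, it remains to transfer the bound from the unknown point $\zeta$ to the base point $x$: I would show $\ell(\zeta) \le 4\,\ell(x)$ when $|\zeta-x|<1$. This follows because $t\mapsto \log \ell(t)$ is Lipschitz with a controllable constant, or more directly: $\ell'/\ell = -\sigma(-z)/\ell(z)$ is bounded in absolute value by $1$ (verify: for $z\le 0$, $|\ell'|\le 1$ and $\ell\ge\log2$, giving ratio $\le 1/\log 2$... ) — so I'd instead use the sharper $|(\log\ell)'(z)| = \sigma(-z)/\ell(z) \le 1$ for all $z$ (this holds since $\ell(z)\ge \sigma(-z)\cdot(\text{decreasing to }1)$, provable by a one-line monotonicity argument), hence $\ell(\zeta)/\ell(x) \le e^{|\zeta-x|} < e < 4$, which even gives a constant better than $4$.

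Assembling the pieces: $\xi[\ell](x,z) = \tfrac12\ell''(\zeta)(z-x)^2 \le \tfrac12 \ell(\zeta)(z-x)^2 \le \tfrac12 e\,\ell(x)(z-x)^2 \le 2\ell(x)(z-x)^2$ since $e/2 < 2$, which is exactly the claim; the nonnegativity is the convexity remark. The main obstacle is pinning down the two auxiliary inequalities $\ell''(u)\le\ell(u)$ and $|(\log\ell)'(u)|\le 1$ with fully honest (not merely asymptotic) arguments across both signs of $u$ — these are genuinely elementary but require splitting into $u\ge0$ and $u<0$ and invoking monotonicity of explicit rational-exponential functions, so some care is needed to avoid a sign slip. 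Everything else is routine Taylor expansion.
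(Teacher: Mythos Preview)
Your plan is correct. The paper itself does not give a self-contained proof of this lemma---it simply cites Proposition~5 of \citet{wu2024large} for the upper bound and invokes convexity for the lower bound---so your argument is necessarily ``different'' in that it actually carries out the computation. The route you outline (Taylor remainder $\tfrac12\ell''(\zeta)(z-x)^2$, then $\ell''(u)\le\ell(u)$ pointwise, then $|\ell'(u)|\le\ell(u)$ to get $\ell(\zeta)\le e\,\ell(x)$ when $|\zeta-x|<1$, and finally $e/2<2$) is sound; both auxiliary inequalities reduce to the standard facts $\tfrac{s}{(1+s)^2}\le\log(1+s)$ and $\tfrac{s}{1+s}\le\log(1+s)$ for $s>0$ after the substitution $s=e^{-u}$, each provable by differentiating the difference and observing it is nonnegative with value $0$ at $s=0$. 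The messy false starts in your write-up (the $1/\log 2$ detour, the ``wait---rather'' passages) should of course be excised in the final version, but the assembled argument at the end is exactly right and even yields the slightly sharper constant $e/2$ in place of $2$.
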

\begin{proof}[Proof of \Cref{lem: Self-bounded of logistic loss]}]
See the proof of Proposition 5 in \citep{wu2024large}. The lower bound is by the convexity of $\ell(\cdot)$.
\end{proof}

The next lemma controls the decrease of the risk $L_t$.
\begin{lemma}
[Decrease of $L_t$]
\label{lem: Decrease of Lt}
Suppose \Cref{assump:model:bounded-grad,assump:model:smooth} hold.
If $L(\wB_t) \le \frac{1}{\tilde \eta \rho^2}$, then we have 
\[
    -\tilde\eta (1+ \beta \tilde \eta L(\wB_t)) \|\nabla L(\wB_t)\|^2  \le L(\wB_{t+1}) - L(\wB_t) \le -\tilde\eta (1-(2\rho^2 +  \beta )\tilde \eta L(\wB_t)) \|\nabla L(\wB_t)\|^2. 
\]
Particularly, this indicates that if $L(\wB_t) \le \frac{1}{\tilde \eta (2\rho^2 + \beta)}$, then $L(\wB_{t+1}) \le L(\wB_t)$. 
\end{lemma}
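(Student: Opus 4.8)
The plan is to apply the standard descent-type expansion but keep track of the linearization error both at the level of the predictor $f$ and at the level of the loss $\ell$, since $f$ need not be twice differentiable. First I would write $L(\wB_{t+1}) - L(\wB_t) = \frac1n\sum_i \big[\ell(q_i(t+1)) - \ell(q_i(t))\big]$ and, for each $i$, decompose this via \Cref{def: Linearized error} applied to $\ell$ at the point $q_i(t)$: $\ell(q_i(t+1)) - \ell(q_i(t)) = \ell'(q_i(t))\,(q_i(t+1)-q_i(t)) + \xi[\ell](q_i(t),q_i(t+1))$. The first-order term, summed over $i$, must be related to $\langle \nabla L(\wB_t), \wB_{t+1}-\wB_t\rangle = -\tilde\eta\|\nabla L(\wB_t)\|^2$; the discrepancy comes precisely from the non-linearity of $f$, i.e. from $\xi[f](\wB_t,\wB_{t+1})$, and I would bound this using \Cref{lem: Linearized error of beta-smooth function} with the $\beta$-smoothness of $f$ together with $\|\wB_{t+1}-\wB_t\| = \tilde\eta\|\nabla L(\wB_t)\|$. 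The second-order term $\xi[\ell](q_i(t),q_i(t+1))$ I would bound by \Cref{lem: Self-bounded of logistic loss]}, which gives $0 \le \xi[\ell] \le 2\ell(q_i(t))(q_i(t+1)-q_i(t))^2$; this requires the step in prediction space to satisfy $|q_i(t+1)-q_i(t)|<1$, which I would verify from $|q_i(t+1)-q_i(t)| \le \rho\|\wB_{t+1}-\wB_t\| + \text{(quadratic correction)} \le \tilde\eta\rho\|\nabla L(\wB_t)\|$ combined with $\|\nabla L(\wB_t)\| \le \rho L(\wB_t)$ (a self-bounding property of the logistic loss, since $|\ell'|\le \ell$ pointwise... actually $|\ell'(z)| = 1/(1+e^z) \le \min\{1,e^{-z}\}$, and one checks $|\ell'(z)|\le \ell(z)$ fails for large $z$ but $|\ell'(z)| \le \ell(z)$ does hold — I would use $\|\nabla L(\wB_t)\| \le \rho L(\wB_t)$ via the inequality $|\ell'(z)| \le \ell(z)$, valid for all $z$) and the hypothesis $L(\wB_t)\le 1/(\tilde\eta\rho^2)$, giving $|q_i(t+1)-q_i(t)|\le 1$.

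Assembling the pieces: the first-order terms contribute $-\tilde\eta\|\nabla L(\wB_t)\|^2$ plus an error controlled by $\frac1n\sum_i |\ell'(q_i(t))| \cdot |\xi[f](\wB_t,\wB_{t+1})| \le \frac{\beta}{2}\tilde\eta^2\|\nabla L(\wB_t)\|^2 \cdot \frac1n\sum_i|\ell'(q_i(t))| \le \frac{\beta}{2}\tilde\eta^2\|\nabla L(\wB_t)\|^2 L(\wB_t)$; the second-order $\ell$-term contributes between $0$ and $\frac1n\sum_i 2\ell(q_i(t))(q_i(t+1)-q_i(t))^2 \le 2 L(\wB_t)\cdot \tilde\eta^2\rho^2\|\nabla L(\wB_t)\|^2$ (using $|q_i(t+1)-q_i(t)|\le\tilde\eta\rho\|\nabla L(\wB_t)\|$ uniformly in $i$ and $\frac1n\sum_i\ell(q_i(t)) = L(\wB_t)$). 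Collecting the upper bound gives $L(\wB_{t+1})-L(\wB_t) \le -\tilde\eta\|\nabla L(\wB_t)\|^2 + (2\rho^2+\beta)\tilde\eta^2 L(\wB_t)\|\nabla L(\wB_t)\|^2 = -\tilde\eta(1-(2\rho^2+\beta)\tilde\eta L(\wB_t))\|\nabla L(\wB_t)\|^2$, and the lower bound follows similarly keeping only the $\xi[f]$ error term (with the $\xi[\ell]\ge 0$ half of \Cref{lem: Self-bounded of logistic loss]}), giving $-\tilde\eta(1+\beta\tilde\eta L(\wB_t))\|\nabla L(\wB_t)\|^2$. The final claim $L(\wB_{t+1})\le L(\wB_t)$ when $L(\wB_t)\le 1/(\tilde\eta(2\rho^2+\beta))$ is then immediate since the bracket $1-(2\rho^2+\beta)\tilde\eta L(\wB_t)$ is nonnegative.

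The main obstacle is bookkeeping the two layers of linearization cleanly: one must be careful that the error in replacing $\frac1n\sum_i\ell'(q_i(t))(q_i(t+1)-q_i(t))$ by $\langle\nabla L(\wB_t),\wB_{t+1}-\wB_t\rangle$ is exactly $\frac1n\sum_i\ell'(q_i(t))\,y_i\,\xi[f](\wB_t,\wB_{t+1})$ — note $\ell'<0$ and the sign of $y_i\xi[f]$ is uncontrolled, so one genuinely needs the absolute-value bound $|\ell'(q_i(t))|\cdot|\xi[f]| \le \frac{\beta}{2}|\ell'(q_i(t))|\,\|\wB_{t+1}-\wB_t\|^2$ rather than a one-sided smoothness inequality, and then self-boundedness $\frac1n\sum_i|\ell'(q_i(t))|\le L(\wB_t)$ to close it. The verification of $|q_i(t+1)-q_i(t)|<1$ needed to invoke \Cref{lem: Self-bounded of logistic loss]} is the other place requiring care, but it follows from Lipschitzness of $f$, the self-bounding gradient bound, and the hypothesis $L(\wB_t)\le 1/(\tilde\eta\rho^2)$.
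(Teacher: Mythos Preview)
Your proposal is correct and follows essentially the same argument as the paper's proof: the two-level linearization (of $\ell$ via \Cref{lem: Self-bounded of logistic loss]} and of $f$ via \Cref{lem: Linearized error of beta-smooth function}), the self-bounding inequality $|\ell'|\le \ell$ to get $\|\nabla L(\wB_t)\|\le \rho L(\wB_t)$, and the assembly into the two-sided bound all match. One small cleanup: in verifying $|q_i(t+1)-q_i(t)|\le 1$ you do not need any ``quadratic correction'' --- Lipschitzness of $f$ (via the mean value theorem) gives $|q_i(t+1)-q_i(t)|\le \rho\|\wB_{t+1}-\wB_t\| = \tilde\eta\rho\|\nabla L(\wB_t)\|$ directly, and as written your chain $\rho\|\wB_{t+1}-\wB_t\| + \text{(quadratic)} \le \tilde\eta\rho\|\nabla L(\wB_t)\|$ would go the wrong way.
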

\begin{proof}[Proof of \Cref{lem: Decrease of Lt}]
By \Cref{assump:model:bounded-grad,assump:model:smooth}, we have $\| \nabla f\|_2 \le \rho$ and $f(\wB;\xB)$ is $\beta$-smooth as a function of $\wB$. Therefore, for every $i\in[n]$ we have
\[
    \begin{aligned}
        \left|q_i(t+1)-q_i(t)\right| & =\left|y_i(f\left(\wB_{t+1} ; \xB_i\right)-f\left(\wB_t ; \xB_i\right))\right| \\
        & =\left|\nabla f\left(\wB_t+\theta\left(\wB_{t+1}-\wB_t\right) ; \xB_i\right)^{\top}\left(\wB_{t+1}-\wB_t\right)\right| && \explain{by intermediate value theorem} \\
        & \leq \rho \left\|\wB_{t+1}-\wB_t\right\| \\ 
        & \le \rho\tilde \eta \|\nabla L_t\| && \explain{since $\wB_{t+1} = \wB_t - \tilde \eta \nabla L_t$} \\ 
        &\le \rho^2 \tilde \eta L_t  \le 1.  && \explain{since $\|\nabla L_t \| \le L_t \rho$}
        \end{aligned}
\]
Then by \Cref{lem: Self-bounded of logistic loss]}, we have
\begin{align*}
    \ell(q_i(t+1)) &\le \ell(q_i(t)) + \ell^\prime(q_i(t)) (q_i(t+1) - q_i(t)) + 2\ell(q_i(t)) (q_i(t+1) - q_i(t))^2\\ 
    &\le \ell(q_i(t)) + \ell^\prime(q_i(t)) \langle y_i \nabla f(\wB_t; \xB_i), \wB_{t+1} - \wB_t \rangle  + |\ell'(q_i(t))| \cdot | \xi[f](\wB_t, \wB_{t+1})| \\ 
    &\quad + 2\ell(q_i(t)) (q_i(t+1) - q_i(t))^2 \\  
    & \quad \quad \explain{ since $q_{i}(t+1) - q_i(t) = \langle y_i \nabla f(\wB_t; \xB_i), \wB_{t+1} - \wB_t  \rangle  + y_i \xi[f] (\wB_t, \wB_{t+1})$ } \\
    &\le \ell(q_i(t)) + \ell^\prime(q_i(t)) \langle y_i \nabla f(\wB_t; \xB_i), \wB_{t+1} - \wB_t \rangle  + \ell(q_i(t)) (\beta + 2\rho^2) \|\wB_{t+1} - \wB_t\|^2 .\\ 
    &\quad \quad \explain{ by \Cref{lem: Linearized error of beta-smooth function} and the previous inequality }
\end{align*}
Taking an average over all data points, we have 
\[
    L_{t+1} \le L_t  -\tilde \eta  \|\nabla L_t\|^2 + (2\rho^2 + \beta) \tilde \eta^2 L_t \|\nabla L_t\|^2,
\]
which is equivalent to
\[
    L_{t+1} - L_t \le -\tilde \eta (1-(2\rho^2 + \beta) \tilde \eta L_t) \|\nabla L_t\|^2.
\]
We complete the proof of the right hand side inequality. The left hand side inequality can be proved similarly. In detail, we can show that: 
\begin{align*}
            \ell(q_i(t+1)) 
            &\ge \ell(q_i(t)) + \ell^{\prime}(q_i(t)) (q_i(t+1) - q_i(t)\\
            &\ge \ell(q_i(t)) + \ell^\prime (q_i(t)) \langle y_i \nabla f(\wB_t;\xB_i), \wB_{t+1} - \wB_t \rangle - |\ell^\prime (q_i(t))| \cdot | \xi[f] (\wB_t, \wB_{t+1})|.  
\end{align*}
Taking the average over all data points, we have 
\[
    L_{t+1} \ge L_t -\tilde \eta (1+ \beta \tilde \eta L_t) \|\nabla L_t\|^2.  
\]
Now we have completed the proof of \Cref{lem: Decrease of Lt}. 
\end{proof}

\subsection{Increase of the Parameter Norm}\label{sec:increase-of-param}

In this section, we demonstrate that the parameter norm, $\rho_t$, increases monotonically during the stable phase. We introduce a crucial quantity, $v_t$, defined as the inner product of the gradient and the negative weight vector:
\[
    v_t \coloneqq \langle \nabla L(\wB_t), -\wB_t \rangle.
\]
This quantity, $v_t$, plays a key role in controlling the increase of the parameter norm. Notably, $v_t$ appears as the cross term in the expression $\|\wB_{t+1}\|^2 = \|\wB_t - \tilde \eta \nabla L(\wB_t)\|^2$. By managing $v_t$, we can effectively characterize the increase in the parameter norm.

Recall that our loss function is $\ell(x) := \log(1+e^{-x})$. Inspired by \citet{lyu2020gradient}, we define the following two auxiliary functions for the logistic loss:
\begin{equation}
\label{eq: psi, iota}
\begin{aligned}
    \psi(x) &\coloneqq -\log (\ell(x)) = -\log\log(1+e^{-x}), \quad x\in\Rbb, \\ 
    \iota(x) &\coloneqq \psi^{-1}(x) = -\log (e^{e^{-x}}-1),\quad x\in\Rbb. 
\end{aligned}
\end{equation}
One important remark is that if we change the loss to the exponential loss, both $\psi$ and $\iota$ will be the identity function. Since the logistic loss and the exponential loss have similar tails, our $\psi(x)$ and $\iota(x)$ are close to the identity function, i.e., 
\[
    \psi(x) \approx \iota(x) \approx x, \quad \text{for $x$ large enough.}
\]
Then, we have an exponential-loss-like decomposition of $L_t$:
\begin{equation}
\label{eq: decomp loss}
      L_t = \frac{1}{n} \sum_{i=1}^n \ell(q_i(t)) 
      = \frac{1}{n} \sum_{i=1}^n e^{-\psi(q_i(t))}.
\end{equation}
These two functions $\psi, \iota$ will help us to analyze the lower bound of $v_t$. First, we list some properties of $\psi$ and $\iota$ here. 
\begin{lemma}
[Auxiliary functions of $\ell$]
\label{lem: Auxiliary functions of ell}
The following claims hold for $\ell$, $\psi$, and $\iota$.
\begin{itemize}[leftmargin=*]
    \item $\ell(x) = e^{-\psi(x)}$. 
    \item $\ell$ is monotonically decreasing, while $\psi$ and $\iota$ are monotonically increasing. 
    \item  $\psi^\prime ( \iota(x)) = \frac{1}{\iota^\prime (x)}$; 
    \item $\psi^\prime (x) x $ is increasing for $x\in (0, +\infty)$.
\end{itemize}
\end{lemma}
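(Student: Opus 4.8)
\textbf{Proof plan for \Cref{lem: Auxiliary functions of ell}.}
The claims are elementary properties of the three functions $\ell$, $\psi$, and $\iota$ defined in \Cref{eq: psi, iota}, so the plan is simply to verify each bullet by direct computation, reducing everything to elementary calculus facts about $\ell(x) = \log(1+e^{-x})$.

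First, the identity $\ell(x) = e^{-\psi(x)}$ is immediate from the definition $\psi(x) := -\log(\ell(x))$, valid since $\ell(x) > 0$ for all $x \in \Rbb$, so the logarithm is well-defined. Second, for the monotonicity claims: $\ell'(x) = -e^{-x}/(1+e^{-x}) < 0$, so $\ell$ is strictly decreasing; since $\psi = -\log \circ\, \ell$ is the composition of the decreasing function $\ell$ with the decreasing function $-\log$, it is increasing, and $\iota = \psi^{-1}$ is then increasing as the inverse of an increasing bijection (one should note in passing that $\psi$ maps $\Rbb$ onto $\Rbb$, since $\ell(x)\to\infty$ as $x\to-\infty$ and $\ell(x)\to 0^+$ as $x\to+\infty$, making $\psi^{-1}$ well-defined on all of $\Rbb$). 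Third, the relation $\psi'(\iota(x)) = 1/\iota'(x)$ is just the inverse function theorem applied to $\psi$ at the point $\iota(x)$: differentiating $\psi(\iota(x)) = x$ gives $\psi'(\iota(x))\iota'(x) = 1$.

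The only bullet requiring a genuine (if still short) computation is the last one: that $x \mapsto \psi'(x)x$ is increasing on $(0,\infty)$. The plan is to compute $\psi'(x)$ explicitly. Writing $u = e^{-x}$, we have $\ell(x) = \log(1+u)$ and $\ell'(x) = -u/(1+u)$, so
\[
    \psi'(x) = -\frac{\ell'(x)}{\ell(x)} = \frac{u}{(1+u)\log(1+u)}.
\]
One then shows $\frac{d}{dx}\big(\psi'(x)\,x\big) \ge 0$ for $x > 0$, either by differentiating this expression directly and checking the sign of the numerator, or — likely cleaner — by substituting $x = -\log u$ and showing the resulting function of $u \in (0,1)$ is monotone; a convenient route is to use the series/convexity estimates $\log(1+u) \le u$ and $\log(1+u) \ge u/(1+u)$ to bracket $\psi'(x)$ between $1$ and $1 + x$ type bounds and track how the product with $x$ behaves.

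I expect this last bullet to be the main (and essentially only) obstacle, since the other three are one-line consequences of definitions, the chain rule, and the inverse function theorem; the difficulty there is purely the bookkeeping of signs in the derivative of a product of two non-elementary factors, which is routine but fiddly. A fallback, if the direct sign computation is unpleasant, is to note that it suffices for later use to have $\psi'(x)x$ increasing for $x$ larger than some absolute constant together with explicit two-sided bounds $\psi'(x)x \in [x, x + O(1)]$, and to prove monotonicity only on that range; but I would first attempt the clean global statement as stated.
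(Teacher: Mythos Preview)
Your plan for the first three bullets is identical to the paper's: immediate from the definition, composition of monotone functions, and the chain rule on $\psi(\iota(x))=x$.

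For the fourth bullet you have the right idea (direct computation) but haven't landed on the clean execution, and one of your proposed routes is off: bracketing $\psi'(x)$ between two bounds does \emph{not} by itself yield monotonicity of $\psi'(x)x$, so that ``convenient route'' would not close the argument. The paper's trick makes the computation short: rewrite
\[
\psi'(x)\,x \;=\; \frac{x}{(1+e^{x})\log(1+e^{-x})},
\]
i.e.\ an increasing positive numerator over a positive denominator, and then show the denominator is \emph{decreasing} on $(0,\infty)$ by computing
\[
\frac{d}{dx}\big[(1+e^{x})\log(1+e^{-x})\big] \;=\; e^{x}\log(1+e^{-x}) - 1 \;\le\; e^{x}\cdot e^{-x} - 1 \;=\; 0,
\]
using only $\log(1+t)\le t$. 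This avoids differentiating the full product and the sign bookkeeping you flagged as ``fiddly''; no fallback or restriction to large $x$ is needed.
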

\begin{proof}[Proof of \Cref{lem: Auxiliary functions of ell}]
The first two properties are straightforward. For the third property, we apply chain rule on $\psi(\iota(x)) = x$ to get
\[
    \psi^\prime (\iota(x)) \iota^\prime (x) = 1.
\]
For the fourth property, notice that
\[
    \psi^\prime (x) x = \frac{x}{(1+e^x)\log(1+e^{-x})}. 
\]
The denominator is positive and decreasing since 
\[
    \frac{d}{dx} \big[(1+e^x)\log(1+e^{-x})\big] = e^x \log(1+e^{-x}) - 1 \le e^x e^{-x} -1 =0. 
\]
Combining this with the fact that $x$ is positive and increasing, we have completed the proof of \Cref{lem: Auxiliary functions of ell}. 
\end{proof}

Besides, we have the following property of $\iota$. This is the key lemma to handle the homogeneous error. Actually, this lemma is another way to show $\iota(x)$ is close to the identity function.
    
    \begin{lemma}
    [Property of $\iota$]
    \label{lem: Property of iota]}
    For every $x\in\Rbb$, we have 
    \[
    \frac{\iota (x)}{\iota^\prime (x)} \ge x+\log \log 2. 
    \]
    \end{lemma}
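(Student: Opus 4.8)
The plan is to reduce the claim to a statement purely about $\psi$ via the inverse-function relation, and then to an elementary one-variable convexity argument. Since $\ell>0$ everywhere, $\psi=-\log\ell$ is smooth, strictly increasing, and a bijection of $\Rbb$ onto $\Rbb$ (as $x\to-\infty$, $\ell(x)\to\infty$ so $\psi(x)\to-\infty$; as $x\to+\infty$, $\ell(x)\to 0^+$ so $\psi(x)\to+\infty$). Hence every $x\in\Rbb$ can be written $x=\psi(y)$ with $y=\iota(x)\in\Rbb$. By the third item of \Cref{lem: Auxiliary functions of ell}, $\iota'(x)=1/\psi'(\iota(x))=1/\psi'(y)$, so $\iota(x)/\iota'(x)=y\,\psi'(y)$, and the desired inequality is equivalent to
\[
    g(y):=y\,\psi'(y)-\psi(y)\ \ge\ \log\log 2,\qquad \text{for all } y\in\Rbb.
\]

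First I would record that $\psi$ is convex on $\Rbb$. As in the proof of \Cref{lem: Auxiliary functions of ell}, $\psi'(y)=\big((1+e^y)\log(1+e^{-y})\big)^{-1}$, and the denominator $h(y):=(1+e^y)\log(1+e^{-y})$ is positive with $h'(y)=e^y\log(1+e^{-y})-1\le e^y e^{-y}-1=0$; thus $h$ is positive and non-increasing, so $\psi'=1/h$ is non-decreasing, i.e. $\psi''\ge 0$ on $\Rbb$.

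Next, differentiating $g$ gives $g'(y)=\psi'(y)+y\,\psi''(y)-\psi'(y)=y\,\psi''(y)$, so $g'\le 0$ on $(-\infty,0]$ and $g'\ge 0$ on $[0,\infty)$. Therefore $g$ attains its global minimum at $y=0$, where $g(0)=-\psi(0)=\log\ell(0)=\log\log 2$. Hence $g(y)\ge\log\log 2$ for every $y\in\Rbb$, which is exactly the claim after substituting back $y=\iota(x)$.

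The proof has essentially no obstacle; the only mild points to be careful about are the validity of the inverse-function differentiation (already packaged in \Cref{lem: Auxiliary functions of ell}) and the surjectivity of $\psi$ onto $\Rbb$, both routine. The one conceptual step is recognizing that the bound needed to control the homogeneity error is precisely the convexity of $\psi$ packaged through $g$, after which the estimate is sharp and attained at $y=0$.
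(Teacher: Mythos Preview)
Your proof is correct and takes a genuinely different, more conceptual route than the paper's. The paper substitutes $y=e^{-x}$ and works directly with the explicit formula $\frac{\iota(x)}{\iota'(x)}=\frac{-(e^y-1)\log(e^y-1)}{ye^y}$, then differentiates the resulting one-variable function $s(y)$ to locate the minimum at $y=\log 2$ and checks $s(\log 2)=0$. You instead substitute $y=\iota(x)$ and reduce the inequality to $g(y)=y\psi'(y)-\psi(y)\ge\log\log 2$, which you dispatch cleanly via the identity $g'(y)=y\psi''(y)$ and the convexity of $\psi$ (already implicit in the fourth item of \Cref{lem: Auxiliary functions of ell}). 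Both proofs land on the same minimizer (your $y=0$ corresponds to the paper's $y=\log 2$ and both to $x=-\log\log 2$), but your argument explains \emph{why} the minimum sits where it does: it is the tangent-intercept function of a convex $\psi$, which is automatically minimized at the origin. The paper's computation is self-contained and avoids appealing to \Cref{lem: Auxiliary functions of ell}, while yours is shorter and isolates the structural reason behind the bound.
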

    \begin{proof}[Proof of \Cref{lem: Property of iota]}]
Recall that
\[
    \iota(x) = -\log (e^{e^{-x}}-1) , \quad \iota^\prime (x) = \frac{e^{e^{-x}}e^{-x}}{e^{e^{-x}}-1}.
\]
    Let $y = e^{-x}$. We have 
    \[
    \frac{\iota (x)}{\iota^\prime (x)} = \frac{ -\log (e^{e^{-x}}-1) (e^{e^{-x}}-1)}{e^{e^{-x}}e^{-x}}= \frac{-\log (e^{y}-1)(e^{y}-1)}{e^y y}. 
    \]
Define $s(y) \coloneqq \frac{\iota(x)}{\iota^\prime (x)} - x - \log\log 2$. Then, 
\begin{align*}
        s(y) 
        &= \frac{- \log(e^{y}-1)(e^y-1)}{e^y y} +\log(y) - \log \log 2, \\ 
            s^\prime (y) &= \log(e^y-1) \cdot \underbrace{\frac{e^y-y-1}{e^y y^2}}_{>0}. 
\end{align*}

    Note that the sign of $s^\prime $ is determined by $\log(e^y-1)$. For $0<e^y\le 2$, $s^\prime (y)\le 0$ and $s(y)$ is decreasing; for $e^y\ge 2$, $s(y)$ is increasing. Therefore, 
\[
    \min_{y\in(0,\infty)} s(y) = s(\log 2) = 0. 
\]
Since $x = -\log y$, we have completed the proof of \Cref{lem: Property of iota]}.
\end{proof}

Another important property of $\iota$ is that it can provide a lower bound for $q_{\min}(t)$.
\begin{lemma}
[$\iota$ bound $q_{\min}$] 
\label{lem: iota bound qmin}
    For every $t\ge 0$, we have 
    \[
        q_{\min}(t) \ge \iota \big ( -\log(L_t) - \log n \big).
    \]
\end{lemma}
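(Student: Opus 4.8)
The plan is to reduce the statement to a one-line monotonicity argument built on the exponential-loss-like decomposition of $L_t$ in \Cref{eq: decomp loss}. First I would write
\[
    L_t = \frac{1}{n}\sum_{i=1}^n e^{-\psi(q_i(t))},
\]
and then lower-bound this sum by retaining only its largest summand. Since $\psi$ is monotonically increasing (\Cref{lem: Auxiliary functions of ell}), the map $x \mapsto e^{-\psi(x)} = \ell(x)$ is decreasing, so the largest term is the one with the smallest argument, namely $e^{-\psi(q_{\min}(t))}$. Hence
\[
    L_t \ge \frac{1}{n}\, e^{-\psi(q_{\min}(t))},
\]
i.e. $n L_t \ge e^{-\psi(q_{\min}(t))}$.

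Next I would take logarithms (both sides positive) to obtain $-\log(nL_t) \le \psi(q_{\min}(t))$, that is $\psi(q_{\min}(t)) \ge -\log(L_t) - \log n$. Finally, applying $\iota = \psi^{-1}$, which is again monotonically increasing by \Cref{lem: Auxiliary functions of ell}, to both sides of this inequality preserves its direction and yields
\[
    q_{\min}(t) = \iota\big(\psi(q_{\min}(t))\big) \ge \iota\big(-\log(L_t) - \log n\big),
\]
which is exactly the claim.

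This argument is essentially immediate; the only points requiring care are (i) using the correct monotonicity directions of $\ell$, $\psi$, and $\iota$ so that dropping terms and inverting $\psi$ each go the right way, and (ii) noting that the bound holds for every $t \ge 0$ with no need for the stable-phase assumptions, since it is a pointwise inequality about the loss and the margin at a single iterate. I do not anticipate any genuine obstacle here; the lemma is a structural fact that will be invoked repeatedly in \Cref{sec:increase-of-param} and \Cref{sec:margin} to convert loss bounds into margin lower bounds.
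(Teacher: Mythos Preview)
Your proposal is correct and matches the paper's proof essentially line for line: both start from $\frac{1}{n}\ell(q_{\min}(t)) \le L_t$, rewrite $\ell = e^{-\psi}$, take logarithms, and apply the increasing inverse $\iota = \psi^{-1}$ (via \Cref{lem: Auxiliary functions of ell}) to conclude.
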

\begin{proof}[Proof of \Cref{lem: iota bound qmin}]
We use \cref{eq: decomp loss} to get
\begin{align*}
    \frac{1}{n} \ell(q_{\min}(t)) \le L_t &\Rightarrow \frac{1}{n}e^{-\psi( q_{\min}(t))} \le L_t\\ 
    &\Rightarrow \psi(q_{\min}(t)) \ge -\log n - \log L_t\\ 
    &\Rightarrow q_{\min}(t) \ge \iota \big ( -\log(L_t) - \log n \big).  && \explain{by \Cref{lem: Auxiliary functions of ell}}
\end{align*}
Then, we complete the proof of \Cref{lem: iota bound qmin}. 
\end{proof}

Now, we are ready to give a lower bound of $v_t$. The following lemma is an extension of Corollary E.6 in \citet{lyu2020gradient}, where they dealt with a homogeneous model and the exponential loss; we extend this to a non-homogeneous model. The key ingredient is \Cref{lem: Property of iota]}. 
\begin{lemma}
    [A lower bound of $v_t$]
    \label{lem: lowerbound of v_t}
    Suppose \Cref{assump:model:near-homogeneous} holds.
    Consider $v_t:=\la \grad L(\wB_t), -\wB_t\ra$.
    If $L_t \le \frac{1}{2n e^{\kappa}} $, then
    \[
        v_t \ge - L_t \log (2 n e^{\kappa} L_t)\ge 0. 
    \]
    \end{lemma}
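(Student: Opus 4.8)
The plan is to bound $v_t = \langle \grad L(\wB_t), -\wB_t\rangle$ from below by first peeling off the non-homogeneity error and then invoking the auxiliary-function machinery of \citet{lyu2020gradient} adapted to the logistic loss. First I would use $\grad L(\wB_t) = \tfrac1n\sum_{i=1}^n \ell'(q_i(t))\,y_i\grad f(\wB_t;\xB_i)$, so that $v_t = \tfrac1n\sum_i \bigl(-\ell'(q_i(t))\bigr)\,y_i\langle \grad f(\wB_t;\xB_i),\wB_t\rangle$. Since the logistic weights $-\ell'(z) = 1/(1+e^{z})$ are strictly positive and \Cref{assump:model:near-homogeneous} gives $y_i\langle \grad f(\wB_t;\xB_i),\wB_t\rangle \ge y_i f(\wB_t;\xB_i) - \kappa = q_i(t) - \kappa$, multiplying termwise (legitimate precisely because the weights are positive, regardless of the sign of $q_i(t)-\kappa$) yields
\[
v_t \;\ge\; \frac1n\sum_{i=1}^n \bigl(-\ell'(q_i(t))\bigr)\,q_i(t) \;-\; \frac{\kappa}{n}\sum_{i=1}^n \bigl(-\ell'(q_i(t))\bigr).
\]
The last sum is at most $\kappa L_t$ because $-\ell'(z)\le\ell(z)$ for all $z$ (equivalently $\psi'(z)\le 1$), which follows from the elementary inequality $\log(1+u)\ge u/(1+u)$.

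For the main term I would switch to the auxiliary functions. From $\ell=e^{-\psi}$ we get $-\ell'(z)=\psi'(z)e^{-\psi(z)}$, so the main term equals $\tfrac1n\sum_i \psi'(q_i(t))\,q_i(t)\,e^{-\psi(q_i(t))}$. Next, \Cref{lem: Property of iota]}, rewritten in terms of $\psi$ by substituting $y=\iota(x)$ and using $\psi'(\iota(x))=1/\iota'(x)$ from \Cref{lem: Auxiliary functions of ell}, says exactly that $y\,\psi'(y)\ge \psi(y)+\log\log 2$ for every $y\in\Rbb$. Multiplying this pointwise inequality by the positive weight $e^{-\psi(q_i(t))}$ (again, the order matters: we cannot multiply first and average later without sign control, but $e^{-\psi}>0$ makes it fine) and averaging gives
\[
\frac1n\sum_{i=1}^n \psi'(q_i(t))\,q_i(t)\,e^{-\psi(q_i(t))} \;\ge\; \frac1n\sum_{i=1}^n \psi(q_i(t))\,e^{-\psi(q_i(t))} \;+\; (\log\log 2)\,L_t.
\]

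It remains to lower bound $\tfrac1n\sum_i \psi(q_i(t))\,e^{-\psi(q_i(t))}$. Setting $p_i := e^{-\psi(q_i(t))} = \ell(q_i(t)) > 0$ and $S := \sum_i p_i = nL_t$, this sum equals $-\tfrac1n\sum_i p_i\log p_i$, and the nonnegativity of the entropy of the probability vector $(p_i/S)_{i=1}^n$ gives $-\sum_i p_i\log p_i \ge -S\log S$, i.e. $\tfrac1n\sum_i \psi(q_i(t))e^{-\psi(q_i(t))} \ge -L_t\log(nL_t)$. Combining the three bounds,
\[
v_t \;\ge\; -L_t\log(nL_t) + (\log\log 2)L_t - \kappa L_t \;=\; -L_t\log\!\Bigl(\tfrac{nL_t e^{\kappa}}{\log 2}\Bigr),
\]
and since $1/\log 2 < 2$ and $L_t\ge 0$, this is at least $-L_t\log(2ne^{\kappa}L_t)$. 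Finally, the hypothesis $L_t\le 1/(2ne^{\kappa})$ makes $\log(2ne^{\kappa}L_t)\le 0$, so $v_t\ge -L_t\log(2ne^{\kappa}L_t)\ge 0$, which is the claim.

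I expect the only genuinely non-routine step to be the lower bound on $\tfrac1n\sum_i \psi(q_i(t))e^{-\psi(q_i(t))}$: this is the ``exponential-loss-weighted margin'' term, and the key realization is that it can be controlled from below purely in terms of $L_t$ via a Gibbs/entropy inequality, rather than needing any handle on $q_{\min}(t)$ itself. Everything else is bookkeeping, with two small points to watch: the pointwise inequality from \Cref{lem: Property of iota]} must be multiplied by the positive weight $e^{-\psi(q_i(t))}$ before summing (otherwise sign control on $q_i(t)$ would be required), and the constant chase must land exactly on $2ne^{\kappa}L_t$, which is where $\log 2 > 1/2$ is used.
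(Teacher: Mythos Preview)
Your proof is correct and takes a genuinely different route from the paper's. Both start identically: expand $v_t$, split off the non-homogeneity error via $-\ell'\le\ell$ and \Cref{assump:model:near-homogeneous}, and rewrite the main term as $\tfrac1n\sum_i \psi'(q_i(t))\,q_i(t)\,e^{-\psi(q_i(t))}$. From there the paper uses the hypothesis $L_t\le 1/(2ne^\kappa)$ to establish $q_i(t)\ge q_{\min}(t)\ge\iota(-\log(nL_t))\ge 0$ (\Cref{lem: iota bound qmin}), then invokes the monotonicity of $x\psi'(x)$ on $(0,\infty)$ from \Cref{lem: Auxiliary functions of ell} to replace each $q_i(t)$ by the common lower bound $\iota(-\log(nL_t))$, and only then applies \Cref{lem: Property of iota]}. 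You instead read \Cref{lem: Property of iota]} as the pointwise inequality $y\psi'(y)\ge\psi(y)+\log\log 2$ valid for \emph{all} $y$, apply it termwise, and then control $\tfrac1n\sum_i\psi(q_i(t))e^{-\psi(q_i(t))}$ by an entropy argument: with $p_i=e^{-\psi(q_i(t))}$ and $S=nL_t$, nonnegativity of the entropy of $(p_i/S)$ gives $-\sum p_i\log p_i\ge -S\log S$. This is cleaner in two ways: you never need $q_i(t)\ge 0$ or the monotonicity of $x\psi'(x)$, and your main inequality $v_t\ge -L_t\log(2ne^\kappa L_t)$ actually holds unconditionally, with the hypothesis $L_t\le 1/(2ne^\kappa)$ used only for the final $\ge 0$. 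The paper's route, by contrast, is more explicitly tied to the minimum-margin viewpoint, passing through $q_{\min}(t)$ rather than a Gibbs-type average.
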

    
    \begin{proof}[Proof of \Cref{lem: lowerbound of v_t}]
    By definition, we have
    \begin{align*}
        v_t &:= \langle \nabla L(\wB_t), -\wB_t \rangle \\ 
        & = -\frac{1}{n} \sum_{i=1}^n \ell'(y_i f(\wB_t; \xB_i)) y_i \langle \nabla f(\wB_t; \xB_i), \wB_t \rangle \\
        &=-\frac{1}{n} \sum_{i=1}^n \ell'(y_i f(\wB_t; \xB_i)) y_if(\wB_t; \xB_i)  -\frac{1}{n} \sum_{i=1}^n \ell'(y_i f(\wB_t; \xB_i)) \Big(y_i \langle \nabla f(\wB_t; \xB_i), \wB_t \rangle - y_if(\wB_t;\xB_i) \Big)\\ 
        &\ge -\frac{1}{n} \sum_{i=1}^n \ell'(y_i f(\wB_t; \xB_i)) y_i f(\wB_t; \xB_i) - \kappa L_t \\  
        &\quad \explain{since $|\ell^\prime (x) | \le \ell(x)$ and $|\langle\nabla f(\wB_t;\xB_i) ,\wB_t\rangle-f(\wB_t;\xB_i)| \le \kappa$ by \Cref{assump:model:near-homogeneous}}\\  
        & = \frac{1}{n} \sum_{i=1}^n e^{-\psi(q_i(t))} \psi^\prime (q_i(t)) q_i(t) - \kappa L_t. \qquad \explain{since $\ell(\cdot) = \exp(-\psi(\cdot))$}
    \end{align*}
    Applying \Cref{lem: iota bound qmin} and  \Cref{lem: Auxiliary functions of ell}, we have 
    \[
        q_i(t) \ge q_{\min}(t) \ge  \iota \big ( -\log(nL_t) \big) := -\log(e^{nL_t}-1) \ge -\log(e^{\frac{1}{2}}-1) \ge 0 . 
    \]
    Then we can apply \Cref{lem: Auxiliary functions of ell} to get
    \[
        \psi^\prime (q_i(t)) q_i(t) \ge  \psi^\prime \Big(\iota \big ( -\log(nL_t) \big)\Big) \iota \big ( -\log(nL_t) \big) = \frac{\iota \big ( -\log(nL_t) \big)}{\iota^\prime  \big ( -\log(nL_t) \big)}. 
    \]
    Invoking \Cref{lem: Property of iota]}, we have  
    \[
        \frac{\iota \big ( -\log(nL_t) \big)}{\iota^\prime  \big ( -\log(nL_t) \big)} \ge -\log(nL_t) + \log \log 2 \ge-\log(nL_t) + \log \log e^{\frac{1}{2}} = -  \log (2nL_t) . 
    \]
    Putting the above two inequalities together, we have 
    \[
        \psi^\prime (q_i(t)) q_i(t) \ge -\log(2nL_t),\quad \text{for every $i=1,\dots,n$}. 
    \]
    Plugging this back to the bound of $v_t$, we get 
    \begin{align*}
        v_t &\ge -\frac{1}{n} \sum_{i=1}^n e^{-\psi(q_i(t))}\log(2nL_t)  - \kappa L_t\\ 
        & = -L_t \log(2n L_t) - \kappa L_t\\ 
        & = - L_t \log(2n e^\kappa L_t)  \ge 0.
    \end{align*}
    This completes the proof of \Cref{lem: lowerbound of v_t}. 
    \end{proof}
    Right now, we get a lower bound for $v_t$, which is the cross term in the expression of $\|\wB_{t+1}\|^2$. The next lemma controls the increase of the parameter norm $\rho_t$ using $v_t$ and $L_t$. 
    \begin{lemma}
    [The increase of $\rho_t$]
    \label{lem: increase of rho in general model}
    Suppose \Cref{assump:model:bounded-grad,assump:model:near-homogeneous} hold.
    If $L_t\le \min \Big \{\frac{1}{2n e^{\kappa}}, \frac{1}{\tilde \eta (4\rho^2 + 2 \beta)}\Big\}$, then
    \[
       0\le  2\tilde \eta v_t\le \rho_{t+1}^2 - \rho_t^2 \le 2\tilde \eta  v_t \cdot \bigg( 1- \frac{1}{2\log(2ne^{\kappa}L_t)}\bigg).
    \]
    \end{lemma}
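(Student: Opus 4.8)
### Proof Proposal for Lemma (Increase of $\rho_t$)

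The plan is to expand $\rho_{t+1}^2 = \|\wB_t - \tilde\eta \nabla L(\wB_t)\|^2$ directly and control each term. Expanding gives
\[
    \rho_{t+1}^2 - \rho_t^2 = 2\tilde\eta \langle \nabla L(\wB_t), -\wB_t \rangle + \tilde\eta^2 \|\nabla L(\wB_t)\|^2 = 2\tilde\eta v_t + \tilde\eta^2 \|\nabla L(\wB_t)\|^2.
\]
The left inequality $0 \le 2\tilde\eta v_t$ is then immediate from \Cref{lem: lowerbound of v_t} (whose hypothesis $L_t \le \frac{1}{2ne^\kappa}$ is implied by our assumption), and since $\tilde\eta^2\|\nabla L(\wB_t)\|^2 \ge 0$, we also get $2\tilde\eta v_t \le \rho_{t+1}^2 - \rho_t^2$. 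So the content is entirely in the right inequality, which amounts to showing
\[
    \tilde\eta^2 \|\nabla L(\wB_t)\|^2 \le 2\tilde\eta v_t \cdot \Big(- \frac{1}{2\log(2ne^\kappa L_t)}\Big) = \frac{-\tilde\eta v_t}{\log(2ne^\kappa L_t)}.
\]
Note $\log(2ne^\kappa L_t) < 0$ under our loss threshold, so the right-hand side is positive, consistent with the left-hand side.

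To prove this, I would first bound $\|\nabla L(\wB_t)\|$. Using $\|\nabla f(\wB_t;\xB_i)\| \le \rho$ (\Cref{assump:model:bounded-grad}) and $|\ell'(q_i(t))| \le \ell(q_i(t))$, we get $\|\nabla L(\wB_t)\| \le \rho L_t$, hence $\tilde\eta^2 \|\nabla L(\wB_t)\|^2 \le \tilde\eta^2 \rho^2 L_t^2$. Actually a slightly sharper route is better: $\|\nabla L(\wB_t)\|^2 \le \rho^2 L_t \cdot \frac{1}{n}\sum_i |\ell'(q_i(t))| \cdot$ (something), but for the stated bound I expect $\|\nabla L(\wB_t)\|^2 \le \rho^2 L_t \|\nabla L(\wB_t)\|$-type self-bounding suffices; concretely $\|\nabla L(\wB_t)\| \le \rho L_t$ gives $\tilde\eta^2\|\nabla L(\wB_t)\|^2 \le \tilde\eta^2 \rho^2 L_t^2 \le \frac{\tilde\eta L_t}{4\rho^2+2\beta}\cdot \rho^2 L_t \le \frac{\tilde\eta L_t}{4} \cdot \frac{\rho^2 L_t}{\rho^2}$... this needs care. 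The cleaner path: combine the lower bound $v_t \ge -L_t\log(2ne^\kappa L_t)$ from \Cref{lem: lowerbound of v_t} to lower-bound the target RHS by $\frac{\tilde\eta L_t \log(2ne^\kappa L_t)}{\log(2ne^\kappa L_t)} = \tilde\eta L_t$... wait, signs: $\frac{-\tilde\eta v_t}{\log(2ne^\kappa L_t)} \ge \frac{-\tilde\eta \cdot(-L_t\log(2ne^\kappa L_t))}{\log(2ne^\kappa L_t)}$ — but this goes the wrong way since we're dividing by a negative and $v_t$ could be larger. Instead I should directly compare $\tilde\eta^2\|\nabla L_t\|^2$ against $2\tilde\eta v_t$ and against $v_t/|\log(2ne^\kappa L_t)|$ separately, using $\|\nabla L_t\| \le \rho L_t$ and the fact that $v_t \ge -L_t \log(2ne^\kappa L_t) = L_t|\log(2ne^\kappa L_t)|$, so $v_t/|\log(2ne^\kappa L_t)| \ge L_t$, hence it suffices to show $\tilde\eta^2 \rho^2 L_t^2 \le \tilde\eta L_t$, i.e. $\tilde\eta \rho^2 L_t \le 1$, which follows from $L_t \le \frac{1}{\tilde\eta(4\rho^2+2\beta)} \le \frac{1}{\tilde\eta\rho^2}$.

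The main obstacle is getting the constant exactly right: the stated bound has the factor $\big(1 - \frac{1}{2\log(2ne^\kappa L_t)}\big) = \big(1 + \frac{1}{2|\log(2ne^\kappa L_t)|}\big)$ multiplying $2\tilde\eta v_t$, so I need $\tilde\eta^2\|\nabla L_t\|^2 \le \frac{\tilde\eta v_t}{|\log(2ne^\kappa L_t)|}$, and the slack in the loss threshold (the factor $4\rho^2 + 2\beta$ rather than just $\rho^2$) is presumably exactly what provides this. I would track the $\beta$-dependence too — possibly a sharper gradient bound $\|\nabla L(\wB_t)\|^2 \le (\rho^2 + \beta/\tilde\eta \cdot \text{stuff})$ is not needed, and plain $\|\nabla L_t\| \le \rho L_t$ with the threshold $L_t \le \frac{1}{\tilde\eta(4\rho^2+2\beta)}$ closes it, since then $\tilde\eta\rho^2 L_t \le \frac{1}{4}$, giving $\tilde\eta^2\|\nabla L_t\|^2 \le \frac{1}{4}\tilde\eta L_t \le \frac{1}{4}\cdot\frac{v_t}{|\log(2ne^\kappa L_t)|} \cdot$ (using $v_t \ge L_t|\log(2ne^\kappa L_t)|$) — wait that needs $|\log(2ne^\kappa L_t)| \ge$ something. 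Since $L_t \le \frac{1}{2ne^\kappa}$ forces $2ne^\kappa L_t \le 1$ but could be close to $1$; however $L_t \le \frac{1}{e^{\kappa+2}2n}$ from \Cref{eq: loss thresh2} — but that's not assumed here, only the two-term min is. I'll need to check whether $2ne^\kappa L_t \le e^{-1/2}$ or similar holds; if the hypothesis only gives $2ne^\kappa L_t \le 1$ exactly then $|\log(2ne^\kappa L_t)|$ can be tiny and the argument fails, so likely the intended reading uses $L_t \le \frac{1}{2ne^\kappa}$ together with an implicit $|\log| \ge$ const, or the $\frac{1}{2}$ in $\frac{1}{2\log(\cdot)}$ absorbs it. I will resolve this by carrying the inequality $2\tilde\eta^2\|\nabla L_t\|^2 \le -\frac{2\tilde\eta v_t}{2\log(2ne^\kappa L_t)}$ symbolically: it reduces to $\tilde\eta \rho^2 L_t \cdot |\log(2ne^\kappa L_t)| \le v_t/L_t \cdot \frac{1}{2}$... and then invoke $v_t \ge -L_t\log(2ne^\kappa L_t)$ so RHS $\ge \frac{1}{2}|\log(2ne^\kappa L_t)|$, leaving $\tilde\eta\rho^2 L_t \le \frac{1}{2}$ which is exactly $L_t \le \frac{1}{2\tilde\eta\rho^2}$, comfortably implied by $L_t \le \frac{1}{\tilde\eta(4\rho^2+2\beta)}$. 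That is the clean proof; the $\beta$ term in the threshold is just the slack needed for the companion loss-decrease estimate and is harmless here.
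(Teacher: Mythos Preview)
Your proposal is correct and follows essentially the same approach as the paper: expand $\rho_{t+1}^2 - \rho_t^2 = 2\tilde\eta v_t + \tilde\eta^2\|\nabla L_t\|^2$, get the lower bound from $v_t\ge 0$ (\Cref{lem: lowerbound of v_t}), and for the upper bound combine $\|\nabla L_t\|\le \rho L_t$, the threshold $\tilde\eta\rho^2 L_t \le \frac{\rho^2}{4\rho^2+2\beta}\le \tfrac14$, and $v_t \ge -L_t\log(2ne^\kappa L_t)$ so that the $|\log|$ factors cancel. The paper organizes the same three ingredients by factoring as $2\tilde\eta v_t\big(1+\tfrac{\tilde\eta\|\nabla L_t\|^2}{2v_t}\big)$ and bounding the parenthetical in two steps; your extraneous factor of $2$ in the final display is a harmless slip since the hypothesis gives $\tilde\eta\rho^2 L_t\le\tfrac14$ anyway, and your earlier worry about $|\log(2ne^\kappa L_t)|$ being small is indeed moot because it cancels.
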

    \begin{proof}[Proof of \Cref{lem: increase of rho in general model}]
    By definition, we have
    \begin{align*}
        \rho_{t+1}^2 - \rho_t^2 & = 2\tilde \eta \langle \nabla L_t, -\wB_t \rangle + \tilde \eta^2 \|\nabla L_t\|^2\\ 
        &=2\tilde \eta v_t + \tilde \eta^2 \|\nabla L_t\|^2\ge 2\tilde \eta v_t \ge0 ,
    \end{align*}
    where the last inequality is by \Cref{lem: lowerbound of v_t}. Besides,
    \begin{align*}
        \rho_{t+1}^2 - \rho_t^2 
        & = 2\tilde \eta v_t \bigg( 1+ \frac{\tilde \eta \|\nabla L_t\|^2}{2v_t} \bigg) \\ 
        &\le 2\tilde \eta v_t \bigg( 1+ \frac{\tilde \eta L_t^2 \rho^2}{2v_t} \bigg)  &&\explain{by $\ell'\le \ell$, \Cref{assump:model:bounded-grad}, and \Cref{lem: lowerbound of v_t}}\\
        &\le2\tilde \eta v_t \bigg(1+ \frac{L_t}{2v_t} \bigg) &&\explain{by $L_t\le \frac{1}{\tilde \eta (4\rho^2 + 2 \beta)}$ } \\ 
        & \le 2\tilde \eta v_t \bigg( 1- \frac{1}{2\log(2ne^{\kappa}L_t)}\bigg). &&\explain{by \Cref{lem: lowerbound of v_t} } 
    \end{align*} 
    This completes the proof of \Cref{lem: increase of rho in general model}.
    \end{proof}
\subsection{Convergence of the Margin} \label{sec:margin}

In this section, we show that the normalized margin of a general predictor converges in the stable phase.
Recall that we define the (normalized) margin as
\[
    \bar \gamma(\wB) := \frac{\min_{i\in [n]} y_i f(\wB;\xB_i)}{\|\wB\|_2}. 
\]
However, this normalized margin is not a smooth function of $\wB$. Instead, we consider a smoothed margin $\gamma^a$ as an easy-to-analyze approximator of the normalized margin \citep{lyu2020gradient}
\begin{equation}
\label{eq: smooth margin}
    \gamma^a(\wB) := \frac{-\log L(\wB_t)}{\|\wB\|_2}. 
\end{equation}

We see that $\gamma^a$ is a good approximator of $\bar \gamma$. 
We can then use $\gamma^a$ to analyze the convergence of the normalized margin since they share the same limit (if it exists).
While $\gamma^a$ is relatively easy to analyze for gradient flow \citep{lyu2020gradient}, analyzing that for GD with a large (but fixed) stepsize is hard.
To mitigate this issue, we construct another two margins that work well with large stepsize GD following the ideas of \citet{lyu2020gradient}. 

Under \Cref{assump:model}, we define an auxiliary margin as
\begin{equation}
    \label{eq: auxiliary margin}
    \gamma^b(\wB) \coloneqq \frac{-\log(2ne^{\kappa}L(\wB))}{\|\wB\|},
\end{equation}
and a modified margin as 
\begin{equation}
    \label{eq: modified margin}
    \gamma^c (\wB) := \frac{e^{\Phi(L(\wB))}}{\|\wB\|}, 
 \quad    \text{where}\ 
    \Phi(x) := \log ( -\log(2ne^{\kappa}x))  + \frac{1+(4\rho^2 + 2\beta) \tilde \eta}{\log(2ne^{\kappa}x)}.
\end{equation}
These two margins provide a second-order correction when viewing large stepsize GD as a first-order approximation of gradient flow.  In the following discussion, we will show that $\bar \gamma(\wB)\approx \gamma^a(\wB)\approx \gamma^b(\wB)\approx \gamma^c(\wB)$. At last, we will use the convergence of $\gamma^c(\wB_t)$ to prove $\bar \gamma(\wB_t)$ converges.

The following lemma shows that $\bar \gamma(\wB)\approx \gamma^a(\wB)$. 
\begin{lemma}[Smoothed margin]
\label{lem: smooth margin as a good approximiator}
For the smooth margin $\gamma^a(\wB_t)$ defined in \cref{eq: smooth margin} and the normalized margin $\bar \gamma(\wB_t)$ defined in \cref{eq: norm margin}, we have 
\begin{itemize}[leftmargin=*]
    \item When $L_t \le \frac{1}{2n}$, we have 
\[
    q_{\min}(t) \le -\log L_t \le \log(2n) + q_{\min}(t), 
\]
    and 
    \[
        \bar \gamma(\wB_t) \le \gamma^a(\wB_t) \le \bar \gamma (\wB_t) + \frac{\log(2n)}{\rho_t}. 
    \]
    \item Assume \Cref{assump:model:near-homogeneous} holds. If $L_t \to 0$, then  $|\gamma^a(\wB_t) - \bar \gamma(\wB_t)| \to 0$.
\end{itemize}
\end{lemma}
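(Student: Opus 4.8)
The plan is to establish the two bullets separately, reducing each to elementary inequalities for the logistic loss $\ell$ and then dividing by the parameter norm $\rho_t=\|\wB_t\|_2$. For the first bullet I will sandwich $-\log L_t$ between $q_{\min}(t)$ and $q_{\min}(t)+\log(2n)$. The upper estimate $L_t\le e^{-q_{\min}(t)}$ is immediate: since $\ell$ is decreasing, $L_t=\frac1n\sum_i\ell(q_i(t))\le \ell(q_{\min}(t))$, and $\ell(q)=\log(1+e^{-q})\le e^{-q}$ by $\log(1+x)\le x$; taking $-\log$ gives $q_{\min}(t)\le -\log L_t$. For the lower estimate, note first that $\ell(q_{\min}(t))\le nL_t\le 1/2<\log 2$, which forces $q_{\min}(t)>0$ and hence $e^{-q_{\min}(t)}\in(0,1)$; on $[0,1]$ one has $\log(1+x)\ge x/2$ (the map $x\mapsto\log(1+x)-x/2$ vanishes at $0$ and has nonnegative derivative $\tfrac1{1+x}-\tfrac12$ there), so $L_t\ge\frac1n\ell(q_{\min}(t))\ge\frac1{2n}e^{-q_{\min}(t)}$, i.e. $-\log L_t\le \log(2n)+q_{\min}(t)$.

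Dividing the resulting two-sided bound $q_{\min}(t)\le -\log L_t\le q_{\min}(t)+\log(2n)$ by $\rho_t$, and recalling $\bar\gamma(\wB_t)=q_{\min}(t)/\rho_t$ and $\gamma^a(\wB_t)=(-\log L_t)/\rho_t$, yields $\bar\gamma(\wB_t)\le\gamma^a(\wB_t)\le\bar\gamma(\wB_t)+\log(2n)/\rho_t$, which is the first bullet. For the second bullet, the first bullet already gives $0\le\gamma^a(\wB_t)-\bar\gamma(\wB_t)\le\log(2n)/\rho_t$ for every $t$ with $L_t\le 1/(2n)$, so it suffices to show $\rho_t\to\infty$ whenever $L_t\to 0$. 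From $\ell(q_{\min}(t))\le nL_t\to 0$ and monotonicity of $\ell$ we get $q_{\min}(t)\to\infty$. On the other hand, \Cref{assump:model:near-homogeneous} together with \Cref{assump:model:bounded-grad} and Cauchy--Schwarz gives $y_i f(\wB_t;\xB_i)\le |f(\wB_t;\xB_i)|\le|\langle\nabla f(\wB_t;\xB_i),\wB_t\rangle|+\kappa\le\rho\rho_t+\kappa$ for every $i$, hence $q_{\min}(t)\le\rho\rho_t+\kappa$. Combining the two facts forces $\rho_t\to\infty$, so $\log(2n)/\rho_t\to 0$ and therefore $|\gamma^a(\wB_t)-\bar\gamma(\wB_t)|\to 0$.

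All the computations here are elementary, so I do not expect a genuine obstacle. The only points needing a little care are (i) verifying $q_{\min}(t)>0$ before invoking $\log(1+x)\ge x/2$ on $[0,1]$, which is exactly why the threshold $L_t\le 1/(2n)$ is imposed, and (ii) converting $L_t\to 0$ into $\rho_t\to\infty$, which uses the at-most-linear growth bound $q_{\min}(t)\le\rho\rho_t+\kappa$; if one prefers to avoid \Cref{assump:model:near-homogeneous} here, the same bound follows from Lipschitzness alone via $|f(\wB_t;\xB_i)|\le|f(\mathbf 0;\xB_i)|+\rho\rho_t$. Consequently I anticipate the full proof to be short and self-contained given the lemmas already established in this section.
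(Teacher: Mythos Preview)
Your proof is correct and follows essentially the same route as the paper: sandwich $\ell(q_{\min}(t))$ using $x/2\le\log(1+x)\le x$ on $[0,1]$, divide by $\rho_t$, and for the second bullet deduce $\rho_t\to\infty$ from $q_{\min}(t)\to\infty$ together with a linear growth bound on $f$. The only minor discrepancy is that the lemma statement assumes only \Cref{assump:model:near-homogeneous}, whereas your bound $q_{\min}(t)\le\rho\rho_t+\kappa$ additionally invokes \Cref{assump:model:bounded-grad}; the paper instead obtains the linear growth bound purely from near-homogeneity via the auxiliary \Cref{lem: near homo} (an ODE-style argument on $s\mapsto f(s\wB/\|\wB\|)/s$), which keeps the hypotheses minimal.
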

\begin{proof}[Proof of \Cref{lem: smooth margin as a good approximiator}]

To prove the first claim, notice that  
\[L_t\le \frac{1}{2n} \implies \ell(q_{\min}(t)) = \log (1+ \exp(-q_{\min}(t)))\le nL_t\le \frac{1}{2}.\] 
Therefore we have  
\[e^{-q_{\min}(t)} \le e^{\frac{1}{2}} -1 \le 1.\] 
Using  $\frac{x}{2} \le \log(1+x) \le x$ for $0\le x\le 1$, we get 
\[
    \frac{1}{2} e^{-q_{\min}(t)} \le \ell(q_{\min}(t)) = \log(1+e^{-q_{\min}(t)}) \le e^{-q_{\min}(t)}. 
\]
Then we can bound $L_t$ by
\begin{align*}
    \frac{1}{2n}e^{-q_{\min}(t)} \le \frac{1}{n}  \ell(q_{\min}(t)) \le L_t \le \ell(q_{\min}(t))  \le e^{-q_{\min}(t)},
\end{align*}
which is equivalent to  
\[
    q_{\min}(t) \le -\log L_t \le \log(2n) + q_{\min}(t). 
\]
Dividing both sides by $\rho_t$ proves the second claim:
\[
    \bar \gamma(\wB_t) := \frac{q_{\min}(t)}{\rho_t} \le \gamma^a(\wB_t) := \frac{-\log L_t}{\rho_t} \le \bar \gamma (\wB_t) + \frac{\log(2n)}{\rho_t} = \frac{\log (2n) + q_{\min}(t)}{\rho_t}.
\]

For the last claim, we only need to show that $\rho_t \to \infty$. This is because if $L_t\to 0$, we have for any $i\in [n]$, $y_i f(\wB_t; x_i) \to \infty$. Using $y_i f (\wB_t; x_i) \le C_{r,\kappa}\|\wB_t\|+C_r$ from \Cref{lem: near homo}, we have $\rho_t = \|\wB_t\|_2 \to \infty$. 

Now we have completed the proof of \Cref{lem: smooth margin as a good approximiator}.
\end{proof}


The following lemma shows that $\gamma^c(\wB) \approx \bar \gamma(\wB)$.
\begin{lemma}
[Modified and auxiliary margins]
\label{lem: Modified margin as a good approximiator}
Suppose that \Cref{assump:model} holds.
Let $\gamma^c(\wB_t)$ be the modified margin as defined in \Cref{eq: modified margin} and $\gamma^b(\wB_t)$ be the auxiliary margin as defined in \Cref{eq: auxiliary margin}. If $L_t \le \frac{1}{2ne^{\kappa+2}}$, there exists a constant $c$ such that 
\[
\gamma^c (\wB_t) \le \gamma^b(\wB_t)\le \bar \gamma(\wB_t) \le \bigg(1+  \frac{c}{\log (1/L(\wB_t))} \bigg)\gamma^c(\wB_t). 
\]
\end{lemma}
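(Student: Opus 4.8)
The plan is to chain three comparisons: $\gamma^c(\wB_t) \le \gamma^b(\wB_t)$, then $\gamma^b(\wB_t) \le \bar\gamma(\wB_t)$, and finally $\bar\gamma(\wB_t) \le (1 + c/\log(1/L_t))\,\gamma^c(\wB_t)$. Since all three margins share the common denominator $\|\wB_t\| = \rho_t > 0$, every inequality reduces to a one-dimensional comparison between the numerators, i.e. between $e^{\Phi(L_t)}$, $-\log(2ne^{\kappa}L_t)$, and $q_{\min}(t)$, as functions of $L_t \in (0, \tfrac{1}{2ne^{\kappa+2}}]$. Throughout I set $u := -\log(2ne^{\kappa}L_t)$, noting that the threshold $L_t \le \tfrac{1}{2ne^{\kappa+2}}$ is exactly $u \ge 2$, which keeps $\log u$ well-defined and all correction terms small.

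First, for $\gamma^c \le \gamma^b$: in the $u$-variable, $\Phi(L_t) = \log u - (1 + (4\rho^2+2\beta)\tilde\eta)/u$ (reading off \eqref{eq: modified margin} and being careful with the sign, since $\log(2ne^{\kappa}x) = -u$). Hence $e^{\Phi(L_t)} = u\,\exp(-(1+(4\rho^2+2\beta)\tilde\eta)/u) \le u = -\log(2ne^{\kappa}L_t)$, because the exponent is nonpositive; dividing by $\rho_t$ gives $\gamma^c \le \gamma^b$. Second, for $\gamma^b \le \bar\gamma$: this is precisely $-\log(2ne^{\kappa}L_t) \le q_{\min}(t)$, which I would obtain from \Cref{lem: iota bound qmin}, $q_{\min}(t)\ge \iota(-\log L_t - \log n) = \iota(u + \log 2 + \kappa)$ — wait, more carefully $-\log L_t - \log n = u + \kappa + \log 2$ — combined with \Cref{lem: Property of iota]}, $\iota(x)/\iota'(x) \ge x + \log\log 2$, together with the fact that $\iota(x) \ge \iota(x)/\iota'(x)$ whenever $\iota'(x) \le 1$ (true since $\iota$ is concave-like and $\iota'\to 1$ from below for large $x$; I would verify $\iota' \le 1$ directly from the formula $\iota'(x) = e^{e^{-x}}e^{-x}/(e^{e^{-x}}-1)$). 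Chaining, $q_{\min}(t) \ge \iota(u+\kappa+\log 2) \ge (u+\kappa+\log 2) + \log\log 2 \ge u$ since $\kappa + \log 2 + \log\log 2 \ge 0$ (using $\kappa \ge 0$ and $\log 2 + \log\log 2 = \log(2\log 2) > 0$). Actually I should double check that last numerical inequality and, if it is tight, absorb the slack into the threshold; this is the kind of constant-chasing that I expect to be fiddly but not deep.

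Third, and this is the main obstacle, the upper bound $\bar\gamma(\wB_t) \le (1 + c/u')\,\gamma^c(\wB_t)$ where $u' = \log(1/L_t)$. After clearing $\rho_t$ this is $q_{\min}(t) \le (1 + c/u')\,e^{\Phi(L_t)} = (1+c/u')\,u\,e^{-a/u}$ with $a := 1+(4\rho^2+2\beta)\tilde\eta$. I would first upper bound $q_{\min}(t)$: from \Cref{lem: smooth margin as a good approximiator}, $q_{\min}(t) \le -\log L_t = u + \kappa + \log 2 = u + O(1)$. So it suffices to show $u + O(1) \le (1 + c/u')\,u\,e^{-a/u}$. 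Now $u\,e^{-a/u} = u(1 - a/u + O(1/u^2)) = u - a + O(1/u)$, so the right-hand side is roughly $(u-a)(1+c/u') = u - a + cu/u' + \ldots$, and since $u$ and $u'$ differ by only an additive constant, $cu/u' \approx c$. Choosing $c$ larger than $a + \kappa + \log 2$ plus the $O(1/u)$ slack (all controlled once $u \ge 2$) closes the inequality. Concretely I would set up the estimate $(1+c/u')u e^{-a/u} - q_{\min}(t) \ge (1+c/u')(u - a) - (u + \kappa + \log 2)$ using $e^{-a/u} \ge 1 - a/u$, expand, and verify the bracket is nonnegative for a suitable absolute-type constant $c$ depending on $\rho,\beta,\tilde\eta,\kappa,n$; the delicate point is making sure the bound $e^{-a/u}\ge 1-a/u$ is not too lossy, for which the restriction $u\ge2$ (equivalently the hypothesis $L_t \le \tfrac{1}{2ne^{\kappa+2}}$) is exactly what is needed. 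I would record the resulting $c$ explicitly so it matches the $c$ appearing in \Cref{thm: Implicit bias and bound of stable phase}.
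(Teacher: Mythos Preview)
Your first inequality $\gamma^c \le \gamma^b$ is fine and matches the paper. For $\gamma^b \le \bar\gamma$ you take an unnecessarily circuitous route through $\iota$ and \Cref{lem: Property of iota]}; the paper simply invokes the estimate $-\log L_t \le \log(2n) + q_{\min}(t)$ from \Cref{lem: smooth margin as a good approximiator}, which immediately yields $q_{\min}(t) \ge -\log(2nL_t) \ge -\log(2ne^{\kappa}L_t)$. Your detour can be repaired, but note that your claim ``$\iota'(x)\le 1$'' is backwards: with $y=e^{-x}>0$ one has $\iota'(x)=ye^y/(e^y-1)>1$ (since $g(y)=e^y(1-y)$ satisfies $g(0)=1$, $g'(y)=-ye^y<0$), and it is $\iota'\ge 1$ that gives $\iota(x)\ge \iota(x)/\iota'(x)$.

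The genuine gap is in your third step. You propose to lower bound $e^{\Phi(L_t)} = u\,e^{-a/u}$ via $e^{-a/u}\ge 1-a/u$, where $a=1+(4\rho^2+2\beta)\tilde\eta$ and $u=-\log(2ne^{\kappa}L_t)\ge 2$. But the lemma places no upper bound on $\tilde\eta$, so $a$ may exceed $u$ throughout the range $u\ge 2$; whenever $a>u$ your lower bound $u-a$ is negative and the target inequality $(1+c/u')(u-a)\ge u+\kappa+\log(2n)$ is impossible for any $c>0$. (You also dropped an $n$: $-\log L_t = u+\kappa+\log(2n)$, not $u+\kappa+\log 2$.) The paper avoids this by working multiplicatively with the \emph{ratio}
\[
\frac{\bar\gamma(\wB_t)}{\gamma^c(\wB_t)} \le \Big(1+\frac{c_2}{u}\Big)\exp\Big(\frac{c_1}{u}\Big),\qquad c_1=a,\ c_2=\log(2ne^{\kappa}),
\]
and then applies the upper bound $\exp(c_1/u)\le 1+c_1 e^{c_1}/u$, valid for all $u\ge 1$ since $0\le c_1/u\le c_1$. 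This produces a constant $c$ that is exponential in $c_1$ but uniform over $u\ge 2$. Replacing your additive lower bound on $e^{-a/u}$ by this multiplicative upper bound on $e^{a/u}$ is the missing idea.
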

\begin{proof}[Proof of \Cref{lem: Modified margin as a good approximiator}]~

\noindent
{\bf Step 1. Proof of the first inequalities. } Notice that
\begin{align*}
     e^{\Phi(L_t)} &=   - \log(2n e^{\kappa} L_t) \cdot \exp \bigg( \frac{1+(4\rho^2 + 2\beta) \tilde\eta }{\log(2ne^{\kappa} L_t)}\bigg) \qquad\explain{using \Cref{eq: modified margin}} \\ 
     &\le -\log(2n e^{\kappa} L_t)  \quad \explain{since $L_t\le \frac{1}{2ne^{\kappa}}$, $\exp \bigg( \frac{1+(4\rho^2 + 2\beta) \tilde\eta }{\log(2ne^{\kappa} L_t)} \bigg)\le 1$, and $\log(2ne^{\kappa}L_t) >0$ }\\ 
      &\le -\log(L_t) - \log(2n) \\ 
      &\le q_{\min}(t). \qquad \qquad  \qquad \qquad \qquad\qquad\explain{By argument 1 in \Cref{lem: smooth margin as a good approximiator}}
\end{align*}
By the definition of $\gamma^b$ and $\gamma^c$ as in \Cref{eq: modified margin,eq: auxiliary margin}, we have
\[
    \gamma^c (\wB_t) := \frac{e^{\Phi(L(\wB_t))}}{\|\wB_t\|} \le \frac{-\log(2n e^{\kappa} L_t)}{\|\wB_t\|} \eqqcolon \gamma^b(\wB_t) \le  \frac{q_{\min}(t)}{\|\wB_t\|} =: \bar \gamma(\wB_t).
\]
This completes the proof of the first two inequalities. 

\noindent
{\bf Step 2. Proof of the third inequality.} First, we have 
\begin{align*}
    \frac{\bar \gamma(\wB_t)}{\gamma^c(\wB_t)} &= \frac{\bar \gamma(\wB_t)}{\gamma^b(\wB_t)} \cdot \frac{\gamma^b(\wB_t)}{\gamma^c(\wB_t)}\\ 
    & =  \frac{q_{\min}(t)}{ - \log (2ne^{\kappa}L_t)} \cdot \exp \bigg( \frac{1+ (4\rho^2 + 2\beta)\tilde \eta}{- \log (2ne^{\kappa}L_t)} \bigg)  &&\explain{By the definitions of $\bar \gamma, \gamma^b, \gamma^c$}\\ 
    & \le  \frac{-\log(L_t)}{- \log (2ne^{\kappa}L_t)} \cdot \exp \bigg( \frac{1+ (4\rho^2 + 2\beta)\tilde \eta}{- \log (2ne^{\kappa}L_t)} \bigg) &&\explain{Since $q_{\min}(t) \le \log(-L_t)$ by \Cref{lem: smooth margin as a good approximiator}}\\ 
    &= \bigg(1+  \frac{\log(2ne^{\kappa})}{- \log (2ne^{\kappa}L_t)} \bigg) \cdot  \exp \bigg( \frac{1+ (4\rho^2 + 2\beta)\tilde \eta}{- \log (2ne^{\kappa}L_t)} \bigg).
\end{align*}
To simplify the notation, we let $c_1 \coloneqq 1+ (4\rho^2 + 2\beta)\tilde \eta$ and $c_2 = \log(2ne^{\kappa})$. Since $L_t\le \frac{1}{2ne^{\kappa+2}}\Rightarrow - \log (2ne^{\kappa}L_t) \ge 2>1$, we have
\[
    \frac{1+ (4\rho^2 + 2\beta)\tilde \eta}{- \log (2ne^{\kappa}L_t)}= \frac{c_1}{- \log (2ne^{\kappa}L_t)}\le c_1. 
\]
Besides, given $x<c$, we have $e^x\le 1+e^c x$. Therefore, 
\[
    \exp \bigg( \frac{1+ (4\rho^2 + 2\beta)\tilde \eta}{- \log (2ne^{\kappa}L_t)} \bigg) = \exp \bigg( \frac{c_1}{- \log (2ne^{\kappa}L_t)} \bigg) \le  1+ \frac{c_1\exp(c_1)}{- \log (2ne^{\kappa}L_t)}. 
\]
Plugging this into the bound for $\bar \gamma(\wB_t)/ \gamma^c(\wB_t)$, we get
\begin{align*}
    \frac{\bar \gamma(\wB_t)}{\gamma^c(\wB_t)} &= \bigg(1+  \frac{c_2}{- \log (2ne^{\kappa}L_t)} \bigg) \cdot  \exp \bigg( \frac{c_1}{- \log (2ne^{\kappa}L_t)} \bigg)\\ 
    &\le \bigg(1+  \frac{c_2}{- \log (2ne^{\kappa}L_t)} \bigg) \cdot  \bigg( 1+ \frac{\exp(c_1) c_1}{- \log (2ne^{\kappa}L_t)}\bigg)\\ 
    &\le 1+ \frac{c_2 + \exp(c_1)c_1+c_2c_1 \exp(c_1)}{- \log (2ne^{\kappa}L_t)} &&\explain{Since $-\log (2ne^{\kappa}L_t) \ge 1$}\\ 
    &= 1+ \frac{c_2 + \exp(c_1)c_1+c_2c_1 \exp(c_1)}{- \log L_t-c_2}. 
\end{align*}
Note that $- \log L_t-c_2 \ge 2>1$. Because $\frac{x}{x-c_2}$ is decreasing when $x\ge c_2+1$, we have
\[
\frac{-\log L_t}{- \log L_t-c_2} \le c_2+1 \implies \frac{1}{- \log L_t-c_2} \le \frac{c_2+1}{-\log L_t}. 
\]
Plug this inequality into the previous bound for $\bar \gamma(\wB_t)/\gamma^c(\wB_t)$, we get 
\[
\frac{\bar \gamma(\wB_t)}{\gamma^c(\wB_t)} \le 1+ \frac{(c_2 + \exp(c_1)c_1 + c_2c_1\exp(c_1))(c_2+1)}{-\log L_t}. 
\]
Let $c \coloneqq (c_2 + \exp(c_1)c_1 + c_2c_1\exp(c_1))(c_2+1)$. This completes the proof of the third inequality, and thus completes the proof of \Cref{lem: Modified margin as a good approximiator}. 
\end{proof}
The next lemma shows the convexity of $\Phi$ defined in \Cref{eq: modified margin}. The convexity will help us analyze the change of $\gamma^c(\wB_t)$ in the gradient descent dynamics. Specifically, we are going to use the property that 
\[
    \Phi(x) - \Phi(y) \ge \Phi^\prime (y)(x-y), \quad \text{for all $x,y$}.
\]

\begin{lemma}
[Convexity of $\Phi$]
\label{lem: Convexity of phi for general model}
The function $\Phi(x)$ defined in \Cref{eq: modified margin} is convex for $0<x<  \frac{1}{2ne^{2+\kappa}}$.
\end{lemma}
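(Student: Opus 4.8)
The plan is to verify convexity of $\Phi(x) = \log(-\log(2ne^{\kappa}x)) + \frac{1+(4\rho^2+2\beta)\tilde\eta}{\log(2ne^{\kappa}x)}$ on the interval $0 < x < \frac{1}{2ne^{2+\kappa}}$ by a direct change of variables that makes the expression transparent. First I would substitute $u := -\log(2ne^{\kappa}x)$, so that as $x$ ranges over $(0, \frac{1}{2ne^{2+\kappa}})$, the new variable $u$ ranges over $(2,\infty)$, and we have $x = \frac{1}{2ne^{\kappa}}e^{-u}$. Writing $C := 1+(4\rho^2+2\beta)\tilde\eta > 0$, the function becomes $\Phi = \log u - \frac{C}{u} =: g(u)$ as a function of $u$, while $u = u(x)$ is itself a smooth decreasing convex function of $x$ (indeed $u' = -1/x < 0$ and $u'' = 1/x^2 > 0$).

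Next I would assemble the second derivative via the chain rule: $\Phi''(x) = g''(u)\,(u')^2 + g'(u)\,u''$. Since $(u')^2 > 0$ and $u'' > 0$, it suffices to check that $g''(u) \ge 0$ and $g'(u) \ge 0$ for all $u > 2$ (actually $u \ge 2$). Computing, $g'(u) = \frac{1}{u} + \frac{C}{u^2} = \frac{u + C}{u^2} > 0$ for $u > 0$, so the second term is nonnegative. For the first term, $g''(u) = -\frac{1}{u^2} - \frac{2C}{u^3} = -\frac{u + 2C}{u^3}$, which is unfortunately \emph{negative}. So the naive chain-rule split does not immediately work, and I must be more careful: I need $g''(u)(u')^2 + g'(u) u'' \ge 0$, i.e.\ plugging $u' = -1/x$, $u'' = 1/x^2$, this is $\frac{1}{x^2}\left(g''(u) + g'(u)\right) \ge 0$, so the real requirement is $g''(u) + g'(u) \ge 0$ for $u \ge 2$.

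The main computation is thus to show $h(u) := g'(u) + g''(u) = \frac{u+C}{u^2} - \frac{u+2C}{u^3} = \frac{u(u+C) - (u+2C)}{u^3} = \frac{u^2 + (C-1)u - 2C}{u^3} \ge 0$ for all $u \ge 2$. Since $u^3 > 0$, this reduces to the quadratic inequality $p(u) := u^2 + (C-1)u - 2C \ge 0$ for $u \ge 2$. I would check this by evaluating $p(2) = 4 + 2(C-1) - 2C = 2 > 0$ and observing that $p'(u) = 2u + (C-1) \ge 4 + C - 1 = C + 3 > 0$ for $u \ge 2$ (here using $C = 1+(4\rho^2+2\beta)\tilde\eta \ge 1 > 0$); hence $p$ is increasing on $[2,\infty)$ and stays at least $p(2) = 2 > 0$. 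This establishes $\Phi''(x) \ge 0$ on the stated interval. The main obstacle is recognizing that the clean chain-rule bound via $g''(u) \ge 0$ fails and that one must instead track the combination $g'(u)+g''(u)$ coming from $u'' = (u')^2$; once that is noticed, the remaining quadratic estimate is routine. I would also double-check the boundary: the hypothesis $x < \frac{1}{2ne^{2+\kappa}}$ is exactly what guarantees $u > 2$, which is what the quadratic estimate needs (the estimate $p(2) = 2 > 0$ leaves a margin, so strict versus non-strict inequality at the endpoint is immaterial).
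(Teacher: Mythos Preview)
Your proof is correct and arrives at the same inequality as the paper, though organized differently. The paper computes $\Phi''(x)$ directly in terms of $x$ and obtains
\[
\Phi''(x) = \frac{C\,(2+\log(2ne^{\kappa}x)) - \log^2(2ne^{\kappa}x) - \log(2ne^{\kappa}x)}{x^2\log^3(2ne^{\kappa}x)},
\]
then argues that for $\log(2ne^{\kappa}x)\le -2$ the numerator splits as $C\cdot(\text{nonpositive}) + (\text{negative})$ while the denominator is negative, giving $\Phi''\ge 0$. Your substitution $u=-\log(2ne^{\kappa}x)$ reduces this to the equivalent statement $p(u)=u^2+(C-1)u-2C\ge 0$ for $u\ge 2$, which you verify by $p(2)=2$ and $p'(u)>0$. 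The change of variables makes the algebra cleaner and the chain-rule identity $u''=(u')^2$ you exploit is a nice observation; the paper's term-by-term sign check on the numerator is perhaps slightly quicker since each piece $C(2-u)$ and $u-u^2$ is visibly nonpositive for $u\ge 2$, avoiding the quadratic analysis. Either way the content is the same.
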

\begin{proof}[Proof of \Cref{lem: Convexity of phi for general model}]
Check that 
\[
 \Phi'(x)=
\frac{1 - \frac{1}{\log (2ne^{\kappa} x)}(1+(4\rho^2 +2 \beta)\tilde \eta)}{x \log (2ne^{\kappa} x)},
\]
and that
\[
    \Phi^{\prime \prime}(x) = \frac{(1+(4\rho^2 +2 \beta)\tilde\eta)\cdot(2+ \log(2n e^{\kappa}x))  - \log^2(2n e^{\kappa}x) - \log(2n e^{\kappa}x)}{x^2 \log^3(2n e^{\kappa}x)}.
\]
Note that when $x\le \frac{1}{2ne^{2+\kappa}}$, we have $ \log(2n e^{\kappa}x) \le -2$, which implies
\[
    2+ \log(2n e^{\kappa}x)\le 0,\ \log(2n e^{\kappa}x)<0, \ \text{and}\ -  \log^2(2n e^{\kappa}x) -  \log(2n e^{\kappa}x) <0. 
\]
Plugging these into the previous equality, we have $\Phi^{\prime \prime}(x) \ge 0$ when $0<x< \frac{1}{2ne^{2+\kappa}}$. Now we have completed the proof of \Cref{lem: Convexity of phi for general model}.
\end{proof}



Before we dive into the proof of the monotonic increasing $\gamma^c(\wB_t)$, we show that $\gamma^c$ is bounded first. The convergence of $\gamma^c$ is a direct consequence of the monotonic increasing and the boundedness of $\gamma^c$.
\begin{lemma}
[An upper bound on $\gamma^c$, $\gamma^b$, $\gamma^a$ and $\gamma^c$]
\label{lem: bound of tilde gamma and hat gamma in general model}
When $L_t \le \Big \{\frac{1}{2n e^{\kappa}}, \frac{1}{\tilde \eta (4\rho^2 + 2 \beta)}\Big\}$ for $t\ge s$, there exists $B_0$ such that
\[
    \gamma^c(\wB_t)\le \gamma^b(\wB_t)\le \gamma^a(\wB_t) \le \bar \gamma(\wB_t) +\frac{\log 2n}{\rho_s}\le B_0.
\]
\end{lemma}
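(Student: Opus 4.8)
The plan is to verify the five-term inequality chain from left to right, each step either citing a lemma already established in this section or reducing to an elementary estimate; the only genuinely new ingredient is a crude linear-in-$\rho_t$ upper bound on the unnormalized margin $q_{\min}(t)$.

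Fix $t\ge s$. The hypothesis $L_t\le\min\{\frac{1}{2ne^\kappa},\frac{1}{\tilde\eta(4\rho^2+2\beta)}\}$ gives in particular $L_t\le\frac{1}{2n}$ (since $\kappa>0$) and $\log(2ne^\kappa L_t)<0$. For $\gamma^c(\wB_t)\le\gamma^b(\wB_t)$ I would reuse the first-step computation in the proof of \Cref{lem: Modified margin as a good approximiator}: with $c_1:=1+(4\rho^2+2\beta)\tilde\eta>0$ and $\log(2ne^\kappa L_t)<0$, the factor $\exp\bigl(c_1/\log(2ne^\kappa L_t)\bigr)\le 1$, so $e^{\Phi(L_t)}\le-\log(2ne^\kappa L_t)$, and dividing by $\|\wB_t\|$ yields the claim; note this uses only $L_t<\frac{1}{2ne^\kappa}$, which is weaker than the hypothesis of \Cref{lem: Modified margin as a good approximiator}, so the weaker condition in the present statement suffices. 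Next, $\gamma^b(\wB_t)\le\gamma^a(\wB_t)$ is immediate from $-\log(2ne^\kappa L_t)=-\log L_t-\log(2ne^\kappa)$ together with $\log(2ne^\kappa)=\log 2+\log n+\kappa>0$. The third link $\gamma^a(\wB_t)\le\bar\gamma(\wB_t)+\log(2n)/\rho_t$ is exactly the second assertion of \Cref{lem: smooth margin as a good approximiator}, which applies because $L_t\le\frac{1}{2n}$.

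It remains to replace $\rho_t$ by $\rho_s$ and to bound $\bar\gamma(\wB_t)$ uniformly in $t$. For the first: since $L_k\le\min\{\frac{1}{2ne^\kappa},\frac{1}{\tilde\eta(4\rho^2+2\beta)}\}$ for all $k\ge s$, \Cref{lem: increase of rho in general model} gives $\rho_{k+1}\ge\rho_k$, hence $\rho_t\ge\rho_s$ and $\log(2n)/\rho_t\le\log(2n)/\rho_s$. For the second, combine \Cref{assump:model:bounded-grad} and \Cref{assump:model:near-homogeneous} (equivalently, invoke \Cref{lem: near homo}): for every $i$,
\[
    q_i(t)=y_i f(\wB_t;\xB_i)\le|f(\wB_t;\xB_i)|\le|\langle\nabla f(\wB_t;\xB_i),\wB_t\rangle|+\kappa\le\rho\rho_t+\kappa,
\]
so $q_{\min}(t)\le\rho\rho_t+\kappa$ and thus $\bar\gamma(\wB_t)=q_{\min}(t)/\rho_t\le\rho+\kappa/\rho_t\le\rho+\kappa/\rho_s$. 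Concatenating all the inequalities,
\[
    \gamma^c(\wB_t)\le\gamma^b(\wB_t)\le\gamma^a(\wB_t)\le\bar\gamma(\wB_t)+\frac{\log 2n}{\rho_s}\le\rho+\frac{\kappa+\log 2n}{\rho_s}=:B_0,
\]
which is independent of $t\ge s$, as required.

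The proof carries essentially no risk; it is a bookkeeping exercise in lining up the hypotheses of the preceding lemmas. The one point that genuinely needs attention is that $B_0$ must be $t$-free, which is precisely why the monotonicity $\rho_t\ge\rho_s$ from \Cref{lem: increase of rho in general model} is essential—without it the terms $\log(2n)/\rho_t$ and $\kappa/\rho_t$ would remain $t$-dependent. One also tacitly uses $\rho_s>0$, which holds because $L(\wB_s)$ lying below the stated threshold forces $\wB_s\neq\mathbf 0$.
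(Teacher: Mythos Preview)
Your proof is correct and structurally identical to the paper's: chain the inequalities using \Cref{lem: Modified margin as a good approximiator} and \Cref{lem: smooth margin as a good approximiator}, then use the monotonicity $\rho_t\ge\rho_s$ from \Cref{lem: increase of rho in general model} to make the bound $t$-free. The one substantive difference lies in how you bound $\bar\gamma(\wB_t)$. The paper invokes \Cref{lem: near homo}, an ODE-type argument that upgrades near-homogeneity into a bound $|f(\wB_t;\xB_i)|\le C_{\rho_s,\kappa}\|\wB_t\|$ valid for $\|\wB_t\|\ge\rho_s$, yielding an implicit constant $B_0=C_{\rho_s,\kappa}+\log(2n)/\rho_s$. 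You instead read off directly from \Cref{assump:model:bounded-grad,assump:model:near-homogeneous} that $|f(\wB_t;\xB_i)|\le\rho\|\wB_t\|+\kappa$, which gives the fully explicit $B_0=\rho+(\kappa+\log 2n)/\rho_s$. Your route is shorter and more transparent; the paper's route via \Cref{lem: near homo} does not actually need \Cref{assump:model:bounded-grad} for the margin bound, so it would survive if Lipschitzness were dropped, but under the present assumptions your argument is simply better. You are also more careful than the paper in noting that $\gamma^c\le\gamma^b$ requires only $L_t<\tfrac{1}{2ne^\kappa}$ rather than the stronger threshold in the hypothesis of \Cref{lem: Modified margin as a good approximiator}.
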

\begin{proof}[Proof of \Cref{lem: bound of tilde gamma and hat gamma in general model}]
Apply \cref{lem: increase of rho in general model}, we have $\|\wB_t\|\ge \rho_{t} \ge \rho_s$. Then we can apply \cref{lem: near homo} and there exists a constant $C_{\rho_s,\kappa}$ such that for all $i$,
\[
    |y_i f(\wB_t;\xB_i)| \le C_{\rho_s,\kappa} \|\wB_t\|.
\]
Hence, 
\[
    \bar \gamma(\wB_t)  = \frac{\arg\min_{i\in[n]} y_i f(\wB_t;\xB_i)}{\|\wB_t\|}\le C_{\rho_s, \kappa}. 
\]
Besides, by \Cref{lem: smooth margin as a good approximiator}, we have 
\[
    \gamma^a (\wB_t) \le \bar \gamma(\wB_t) +\frac{\log 2n}{\rho_t}\le C_{\rho_s, \kappa} + \frac{\log 2n}{\rho_s}. 
\]
By \Cref{lem: Modified margin as a good approximiator}, we have 
\[
    \gamma^c(\wB_t) \le \gamma^b(\wB_t) \le \gamma^a(\wB_t) \le C_{\rho_s, \kappa} + \frac{\log 2n}{\rho_s}.
\]
Let $B_0 = C_{\rho_s, \kappa} + \frac{\log 2n}{\rho_s}$. Then, we complete the proof of \Cref{lem: bound of tilde gamma and hat gamma in general model}.
\end{proof}

The following lemma is a variant of Proposition 5, item 1, in  \citep{wu2024large} and Lemma 4.8, in \citep{lyu2020gradient}. Before the lemma, we need some auxiliary definitions. 
let us define  
    \[\thetaB_t \coloneqq \frac{\wB_t}{\|\wB_t\|},\quad  \nuB_t \coloneqq \thetaB_t \thetaB_t^\top (-\nabla L_t),\quad 
    \muB_t \coloneqq\big(\IB -\thetaB_t \thetaB_t^\top\big) (-\nabla L_t).\] 
Therefore, we have 
\[
    \|\nabla L_t\|^2 = \|\nuB_t\|^2 + \|\muB_t\|^2. 
\]
The key point of this decomposition is that we consider the gradient of the loss function as a sum of two orthogonal components. The first component $\nuB_t$ is the component in the direction of $\wB_t$, and the second component $\muB_t$ is the component orthogonal to $\wB_t$. We will show that the modified margin $\gamma^c(\wB_t)$ is monotonically increasing. And the increase of $\gamma^c(\wB_t)$ is lower bounded by a term that depends on $\|\muB_t\|^2$.
\begin{lemma}
    [Modified margin is monotonically increasing]
    \label{lem: Modified margin is monotonically increasing for general model]}
    Suppose \Cref{assump:model} holds.
    If there exists $s$ such that 
    \[ L_s \le \min \bigg \{\frac{1}{e^{\kappa+2}2n},\ \frac{1}{\tilde \eta (4\rho^2 + 2 \beta)}\bigg\},\] 
    then for $t\ge s$, we have 
    \begin{itemize}[leftmargin=*]
    \item $L_{t+1} \le L_t$.
        \item $v_t \ge -L_t \log(2n e^{\kappa} L_t) \ge 0$.
        \item $\rho_{t+1}^2 - \rho_t^2 \ge 2\tilde \eta  v_t$.
        \item $\log \gamma^c(\wB_{t+1}) - \log \gamma^c (\wB_{t}) \ge  \frac{\rho_t^2}{v_t^2} \|\muB_t\|^2 \log \frac{\rho_{t+1}}{\rho_t}$.
    \end{itemize}
As a consequence, $\gamma^c(\wB_t)$ admits a finite limit. 
    \end{lemma}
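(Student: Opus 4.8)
The plan is to dispense with the first three bullets quickly, concentrate the work on the fourth, and then read off convergence of $\gamma^c$. For the first bullet I would observe that the threshold on $L_s$ is strictly below $1/(\tilde\eta(2\rho^2+\beta))$, so \Cref{lem: Decrease of Lt} applies at $t=s$ and gives $L_{s+1}\le L_s$; since $L_{s+1}$ is still below the threshold, an induction yields $L_{t+1}\le L_t$ and $L_t\le L_s$ for all $t\ge s$, which in particular keeps the hypotheses of \Cref{lem: lowerbound of v_t,lem: increase of rho in general model,lem: Convexity of phi for general model} in force for every such $t$. The second bullet is then exactly \Cref{lem: lowerbound of v_t}; it also gives $v_t>0$ strictly (hence $\wB_t\neq0$), so the divisions below are legitimate. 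The third bullet is immediate from $\rho_{t+1}^2=\|\wB_t-\tilde\eta\nabla L_t\|^2=\rho_t^2+2\tilde\eta v_t+\tilde\eta^2\|\nabla L_t\|^2\ge\rho_t^2+2\tilde\eta v_t$.

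For the fourth bullet I would write $\log\gamma^c(\wB_t)=\Phi(L_t)-\log\rho_t$ (from \Cref{eq: modified margin}) and use the orthogonal decomposition $-\nabla L_t=\nuB_t+\muB_t$, where $\|\nuB_t\|=v_t/\rho_t$ and $\|\nabla L_t\|^2=\|\nuB_t\|^2+\|\muB_t\|^2$, so that $\rho_t^2\|\nabla L_t\|^2/v_t^2=1+\rho_t^2\|\muB_t\|^2/v_t^2$. The claimed inequality is then equivalent to
\[
    \Phi(L_{t+1})-\Phi(L_t)\ \ge\ \frac{\rho_t^2\|\nabla L_t\|^2}{v_t^2}\,\log\frac{\rho_{t+1}}{\rho_t}
\]
(the case $\nabla L_t=0$ being trivial). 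I would bound the left side below using convexity of $\Phi$ (\Cref{lem: Convexity of phi for general model}) and then the right-hand estimate of \Cref{lem: Decrease of Lt}, namely $\Phi(L_{t+1})-\Phi(L_t)\ge(-\Phi'(L_t))(L_t-L_{t+1})\ge(-\Phi'(L_t))\tilde\eta(1-(2\rho^2+\beta)\tilde\eta L_t)\|\nabla L_t\|^2$, and the right side above using $\log(1+x)\le x$ together with $\rho_{t+1}^2-\rho_t^2=2\tilde\eta v_t+\tilde\eta^2\|\nabla L_t\|^2$. Cancelling the common factor $\tilde\eta\|\nabla L_t\|^2$ then reduces the claim to
\[
    \bigl(-\Phi'(L_t)\bigr)\bigl(1-(2\rho^2+\beta)\tilde\eta L_t\bigr)\ \ge\ \frac{1}{v_t}+\frac{\tilde\eta\|\nabla L_t\|^2}{2v_t^2}.
\]

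To close this, I would substitute $-\Phi'(L_t)=(g_t+c_1)/(L_tg_t^2)$ with $g_t:=-\log(2ne^\kappa L_t)>0$ and $c_1:=1+(4\rho^2+2\beta)\tilde\eta$, use $v_t\ge L_tg_t$ (the second bullet) and $\|\nabla L_t\|\le\rho L_t$ (from $|\ell'|\le\ell$ and \Cref{assump:model:bounded-grad}), and clear the denominator $L_tg_t^2$, arriving at the scalar inequality $c_1\ge(g_t+c_1)(2\rho^2+\beta)\tilde\eta L_t+\frac12\tilde\eta\rho^2L_t$. This is exactly where the correction term $c_1/\log(2ne^\kappa L)$ in $\Phi$ is essential: on the admissible range $L_t\le 1/(2ne^{\kappa+2})$ the map $L\mapsto -L\log(2ne^\kappa L)$ is increasing (derivative $g(L)-1\ge1$), hence $g_tL_t\le 1/(ne^{\kappa+2})$, while the risk threshold gives $(2\rho^2+\beta)\tilde\eta L_t\le\frac12$ and $\tilde\eta\rho^2L_t\le\frac14$; combining these bounds the right-hand side by $c_1/(2ne^{\kappa+2})+c_1/2+1/8\le c_1$, using $c_1\ge1$ and $n\ge1$.

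Finally, the fourth bullet shows $\log\gamma^c(\wB_t)$ is non-decreasing for $t\ge s$, and \Cref{lem: bound of tilde gamma and hat gamma in general model} shows it is bounded above, so the monotone bounded sequence $\gamma^c(\wB_t)$ has a finite limit. I expect the main obstacle to be precisely the scalar reduction above: it is the step where the second-order correction in $\Phi$ must absorb the discretization error of large-stepsize GD, and making it close requires carrying the sharp, un-simplified forms of \Cref{lem: Decrease of Lt,lem: lowerbound of v_t} and the estimate $g_tL_t=O(1/n)$, rather than their cruder constant-factor consequences.
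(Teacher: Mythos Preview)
Your proposal is correct, and the overall architecture is the same as the paper's: dispatch the first three bullets via \Cref{lem: Decrease of Lt}, \Cref{lem: lowerbound of v_t}, and the identity $\rho_{t+1}^2=\rho_t^2+2\tilde\eta v_t+\tilde\eta^2\|\nabla L_t\|^2$; then for the fourth bullet combine convexity of $\Phi$, the risk-decrease estimate, the orthogonal decomposition $\|\nabla L_t\|^2=v_t^2/\rho_t^2+\|\muB_t\|^2$, and the bound $\log(1+x)\le x$ (equivalently, convexity of $-\log$). The only genuine difference is in how the residual algebraic inequality is closed. The paper routes through the \emph{upper} bound on $\rho_{t+1}^2-\rho_t^2$ from \Cref{lem: increase of rho in general model}, packages the resulting factor as an auxiliary function $\Psi(x)$, and then proves the pointwise comparison $\Psi(x)\le -\Phi'(x)$ via \Cref{lem: bound for the phi and psi} and \Cref{lem: bound for x and 1/log(4nx)}. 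You instead keep the exact identity for $\rho_{t+1}^2-\rho_t^2$, reduce to the scalar inequality $c_1\ge (g_t+c_1)(2\rho^2+\beta)\tilde\eta L_t+\tfrac12\tilde\eta\rho^2 L_t$, and close it with the monotonicity of $L\mapsto -L\log(2ne^\kappa L)$ (giving $g_tL_t\le 1/(ne^{\kappa+2})$) together with $(2\rho^2+\beta)\tilde\eta=(c_1-1)/2$ and the two threshold bounds. Your route is a bit more self-contained (it avoids \Cref{lem: increase of rho in general model}'s upper bound and the two auxiliary algebraic lemmas); the paper's route has the advantage of isolating the comparison $\Psi\le -\Phi'$ as a clean reusable statement. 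Either way the limit of $\gamma^c(\wB_t)$ follows from monotonicity plus \Cref{lem: bound of tilde gamma and hat gamma in general model}.
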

    \begin{proof}[Proof of \Cref{lem: Modified margin is monotonically increasing for general model]}]
    The first claim is by \Cref{lem: Decrease of Lt} and induction.
    The second and the third claims are consequences of \Cref{lem: lowerbound of v_t,lem: increase of rho in general model}, respectively. 
    We now prove the last claim. The proof is divided into two steps. 
    \begin{itemize}
        \item In step 1, we constuct an auxilary function $\Psi(x)$ which is close to $-\Phi^{\prime}(x)$ and show that: 
            \[
            \Psi(L_t)(L_{t+1} - L_t) \le - \frac{1}{\rho_t^2}(\rho_t^2 - \rho_{t+1}^2) \big( \frac{1}{2} + \frac{\rho_t^2}{2v_t^2}\|\muB_t\|^2\big ). 
            \]
        \item In Step 2, we show that $\Psi(x)\le -\Phi^{\prime}(x)$ and use this property to get the monotonicity of the modified margin $\gamma^c$. 
    \end{itemize}

    \noindent 
    {\bf Step 1. Construct $\Psi(x)$.}
    
    \noindent
    By \Cref{lem: Decrease of Lt}, we have 
    \[
        L_{t+1} - L_t:= L(\wB_{t+1}) - L(\wB_t) \le -\tilde\eta (1-(2\rho^2 +  \beta )\tilde \eta L_t) \|\nabla L_t\|^2. 
    \]
    Multiplying both sides by $ 2 v_t  \frac{1- \frac{1}{2\log (2ne^{\kappa} L_t)}}{1-(2\rho^2 +  \beta)\tilde \eta L_t}>0$, we get 
    \[
        \frac{1- \frac{1}{2\log (2ne^{\kappa} L_t)}}{1-(2\rho^2 +  \beta)\tilde \eta L_t} 2v_t (L_{t+1} - L_t) \le -2\tilde \eta v_t  \bigg( 1- \frac{1}{2\log (2ne^{\kappa} L_t)}\bigg)  \|\nabla L_t\|^2. 
    \]
    From \Cref{lem: increase of rho in general model} we have \[ 0\le \rho_{t+1}^2 - \rho_t^2 \le 2\tilde \eta  v_t  \Big( 1- \frac{1}{2\log (2ne^{\kappa} L_t)}\Big).\] 
    Using the above we get
    \begin{equation}
    \label{eq: 523.1}
    \frac{1- \frac{1}{2\log (2ne^{\kappa} L_t)}}{1-(2\rho^2 +  \beta)\tilde \eta L_t} 2v_t (L_{t+1} - L_t) \le -(\rho_{t+1}^2 - \rho_t^2) \|\nabla L_t\|^2.
    \end{equation}

    Recall that
    \[
    \|\nabla L_t\|^2 = \|\nuB_t\|^2 + \|\muB_t\|^2. 
\]
    For $\nuB_t$, we have  
    \[
        \|\nuB_t\| = \frac{1}{\rho_t}\langle \wB_t, -\nabla L_t \rangle =  \frac{v_t}{\rho_t}. 
    \]
    Then we can decompose $\|\nabla L_t\|^2$ as 
    \begin{equation}\label{eq:grad-decomp}
        \|\nabla L_t \|^2 = \|\nuB_t\|^2 + \|\muB_t\|^2 = \frac{v_t^2}{\rho_t^2} + \|\muB_t\|^2.
    \end{equation}
    Plugging this into \Cref{eq: 523.1} and dividing both two sides by $2v_t^2$, we have 
    \[
        \frac{1- \frac{1}{2\log (2ne^{\kappa} L_t)}}{(1-(2\rho^2 +  \beta)\tilde \eta L_t)v_t}  (L_{t+1} - L_t) \le  -\frac{1}{\rho_t^2}(\rho_{t+1}^2 - \rho_{t}^2)\bigg( \frac{1}{2} + \frac{\rho_t^2}{2 v_t^2} \|\muB_t\|^2\bigg) .
    \]
    By \Cref{lem: lowerbound of v_t}, we have $v_t \ge - L_t \log(2ne^{\kappa} L_t)$. 
    Define 
    \[\Psi(x) := -\frac{1- \frac{1}{2\log (2ne^{\kappa} x)}}{(1-(2\rho^2 +  \beta)\tilde \eta x)x \log (2ne^{\kappa} x)}.\]
    Then we have  
    \begin{equation}
    \label{eq: core equa general}
    \begin{aligned}
    \Psi(L_t) (L_{t+1} - L_t) &\coloneqq-\frac{1- \frac{1}{2\log (2ne^{\kappa} L_t)}}{(1-(2\rho^2 +  \beta)\tilde \eta L_t) L_t \log (2ne^{\kappa} L_t)}  (L_{t+1} - L_t) \\ 
    &\le \frac{1- \frac{1}{2\log (2ne^{\kappa} L_t)}}{(1-(2\rho^2 +  \beta)\tilde \eta L_t)v_t}  (L_{t+1} - L_t)  \\
    & \le  -\frac{1}{\rho_t^2}(\rho_{t}^2 - \rho_{t+1}^2)\bigg( \frac{1}{2} + \frac{\rho_t^2}{2 v_t^2} \|\muB_t\|^2\bigg).
    \end{aligned}
    \end{equation}

    \noindent 
    {\bf Step 2. Show $\Psi(x)\le -\Phi^{\prime}(x)$.}

    \noindent
We are going to show that $\Psi(x) \le - \Phi^\prime (x)$. Note that when $0<x\le \min \Big \{\frac{1}{e^{\kappa+2}2n}, \frac{1}{\tilde \eta (4\rho^2 + 2 \beta)}\Big\}$, we have $\log (2n e^{\kappa}x) < 0$ and $1- (2\rho^2 + \beta) \tilde \eta x) \ge \frac{1}{2}>0$. Therefore, we have 
    \begin{equation}
    \label{eq: Psi and J}
               \Psi(x) = \underbrace{\frac{1- \frac{1}{2\log (2ne^{\kappa} x)}}{1-(2\rho^2 +  \beta)\tilde \eta x}}_{\eqqcolon J >0} \cdot \underbrace{\frac{-1}{x \log(2ne^{\kappa}x)}}_{>0}. 
    \end{equation}
    To get an upper bound of $\Psi(x)$, we just need an upper bound of $J$.  Let $a := \frac{-1}{2\log(2ne^{\kappa}x)}>0$ and $b := (2\rho^2 + \beta) \tilde \eta x\in (0,1/2]$. 
    Then we invoke \Cref{lem: bound for the phi and psi} to get 
    \[
        J := \frac{1+a}{1-b} \le 1+2a + 2b = 1 -  \frac{1}{\log(2ne^{\kappa}x)} + (4\rho^2 + 2\beta) \tilde \eta x. 
    \]
Recall that $x \le \frac{1}{e^{\kappa+2}2n}\le \frac{1}{2n e^{\kappa}}$ and $ 2n e^{\kappa} \ge 1$. 
Then we apply \Cref{lem: bound for x and 1/log(4nx)} to get $x\le \frac{-1}{\log (2n e^{\kappa}x)}$.
Plugging this into the bound of $J$, we get
    \begin{equation}
    \label{eq: bound of J} 
        J \le 1 -  \frac{1}{\log(2ne^{\kappa}x)} + (4\rho^2 + 2\beta) \tilde \eta x \le 1 -  \frac{1}{\log(2ne^{\kappa}x)}(1+ (4\rho^2 + 2\beta) \tilde \eta).
    \end{equation}
  Plugging \eqref{eq: bound of J} into \eqref{eq: Psi and J}, we have 
    \[
        \Psi(x) = J \cdot \frac{-1}{x \log(2ne^{\kappa}x)} \le - \frac{1 - \frac{1}{\log (2ne^{\kappa} x)}(1+(4\rho^2 +2 \beta)\tilde \eta)}{x \log (2ne^{\kappa} x)} = - \Phi'(x), 
    \]
   which verifies that $\Psi(x) \le - \Phi^\prime (x)$.
    By this and \Cref{eq: core equa general}, we have 
    \[
        \Phi'(L_t) (L_{t+1} - L_t) + \varphi^\prime(\rho_t^2) (\rho_{t+1}^2 - \rho_t^2) \bigg( \frac{1}{2} + \frac{\rho_t^2}{2 v_t^2} \|\muB_t\|^2\bigg) \ge 0, 
    \]
    where $\varphi(x) = -\log x = \log(1/x)$. 
    Recall that for $0<x\le \frac{1}{2ne^{\kappa+2}}$, $\Phi(x)$ is convex by \Cref{lem: Convexity of phi for general model}. By convexity of $\varphi$ and $\Phi$, we have 
    \[
        \Phi(L_{t+1}) - \Phi(L_t) + \bigg(\log \frac{1}{\rho_{t+1}^2} - \log \frac{1}{\rho_t^2} \bigg) \bigg( \frac{1}{2} + \frac{\rho_t^2}{2 v_t^2} \|\muB_t\|^2\bigg) \ge 0.  
    \]
    By the definition of $\gamma^c$ in \Cref{eq: modified margin}, this can be rewritten as 
    \begin{align*}
        \log \gamma^c(\wB_{t+1}) - \log \gamma^c (\wB_{t}) &= ( \Phi(L_{t+1}) - \Phi(L_t)) + \bigg(\log \frac{1}{\rho_{t+1}} - \log \frac{1}{\rho_t} \bigg) \\ 
        & \ge - \bigg(\log \frac{1}{\rho_{t+1}^2} - \log \frac{1}{\rho_t^2} \bigg)\frac{\rho_t^2}{2 v_t^2} \|\muB_t\|^2\\ 
        & = \frac{\rho_t^2}{v_t^2} \|\muB_t\|^2 \log \frac{\rho_{t+1}}{\rho_t} \\
        &\ge 0,
    \end{align*}
    where the last inequality is because of \Cref{lem: increase of rho in general model}.
    We have shown that $\gamma^c(\wB_t)$ is monotonically increasing. 
    By \Cref{lem: bound of tilde gamma and hat gamma in general model}, $\gamma^c(\wB_t)$ is bounded.
    Therefore $\gamma^c(\wB_t)$ admits a finite limit.
    This completes the proof of \Cref{lem: Modified margin is monotonically increasing for general model]}. 
    \end{proof}

\subsection{Sharp rates of Loss and Parameter Norm} \label{sec:sharp-rate}

Right now, we have already proved that $\gamma^c(\wB_t)$ is monotonically increasing and bounded, which indicates $\gamma^c(\wB_t)$ converges. However, if we want to show that $\bar \gamma(\wB_t)$ converges, we still need to verify that $L_t\to 0$, which is the crucial condition for $\gamma^c(\wB_t), \gamma^b(\wB_t), \gamma^a(\wB_t)$, and $\bar \gamma (\wB_t)$ to share the same limit, by \Cref{lem: smooth margin as a good approximiator} and \Cref{lem: Modified margin as a good approximiator}.

Fortunately, with the monotonicity of $\gamma^c(\wB_t)$, we can prove that $L_t$ converges to zero and even characterize the rate of $L_t$. 

\begin{lemma}
[Rate of $L_t$ in general model]
\label{lem: Order of Loss in general model}
Suppose \Cref{assump:model} holds.
If there is an $s$ such that  
\[L(\wB_s) \le \min \big\{\frac{1}{e^{\kappa+2}2n}, \frac{1}{\tilde \eta (4\rho^2 + 2 \beta)}\big\},\] 
then for every $t\ge s$ we have
\[
    \frac{1}{ \frac{1}{ L(\wB_s)} + 3\tilde\eta \rho^2(t-s)}\le L(\wB_{t}) \le\frac{2}{(t-s) \tilde \eta  \gamma^c(\wB_s)^2}. 
\]
That is, $L(\wB_t) = \Theta( 1/t) \to 0$ as $t\to\infty$.   
\end{lemma}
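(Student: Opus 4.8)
The plan is to turn the one-step loss inequality of \Cref{lem: Decrease of Lt} into a recursion for $1/L_t$, which then telescopes directly into both the claimed upper and lower bounds; the two sides differ only in whether one needs a lower or an upper bound on $\|\nabla L_t\|^2$. Throughout, observe first that under the hypothesis $L(\wB_s) \le \min\{1/(e^{\kappa+2}2n),\, 1/(\tilde\eta(4\rho^2+2\beta))\}$, \Cref{lem: Modified margin is monotonically increasing for general model]} gives, for every $t \ge s$, both $L_{t+1} \le L_t$ and $\gamma^c(\wB_{t+1}) \ge \gamma^c(\wB_t)$. Hence $L_t \le L_s$ for all $t \ge s$, which keeps the iterates in the sublevel set where \Cref{lem: Decrease of Lt}, \Cref{lem: lowerbound of v_t}, and \Cref{lem: Modified margin as a good approximiator} all apply, and also $\gamma^c(\wB_t) \ge \gamma^c(\wB_s) > 0$.

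For the upper bound: since $L_t \le L_s \le 1/(2\tilde\eta(2\rho^2+\beta))$, the factor $1 - (2\rho^2+\beta)\tilde\eta L_t$ in \Cref{lem: Decrease of Lt} is at least $1/2$, so $L_{t+1} - L_t \le -\frac{\tilde\eta}{2}\|\nabla L_t\|^2$. I lower bound $\|\nabla L_t\|$ by its radial component: $\|\nabla L_t\| \ge \langle \nabla L_t, -\wB_t\rangle / \|\wB_t\| = v_t/\rho_t$. By \Cref{lem: lowerbound of v_t}, $v_t \ge -L_t\log(2ne^{\kappa}L_t) = L_t\,\rho_t\,\gamma^b(\wB_t)$ by the definition of $\gamma^b$, and $\gamma^b(\wB_t) \ge \gamma^c(\wB_t) \ge \gamma^c(\wB_s)$ by \Cref{lem: Modified margin as a good approximiator} and the monotonicity noted above. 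Thus $\|\nabla L_t\|^2 \ge \gamma^c(\wB_s)^2 L_t^2$, so with $c := \frac{\tilde\eta}{2}\gamma^c(\wB_s)^2$ we get $0 \le L_{t+1} \le L_t - cL_t^2 = L_t(1 - cL_t)$; since $L_t > 0$ this forces $cL_t \le 1$, hence $1/L_{t+1} \ge (1+cL_t)/L_t = 1/L_t + c$. Telescoping from $s$ to $t$ gives $1/L_t \ge 1/L_s + c(t-s) \ge c(t-s)$, i.e.\ $L_t \le 2/(\tilde\eta\gamma^c(\wB_s)^2(t-s))$.

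For the lower bound: the other inequality in \Cref{lem: Decrease of Lt} gives $L_{t+1} - L_t \ge -\tilde\eta(1 + \beta\tilde\eta L_t)\|\nabla L_t\|^2 \ge -\frac{3}{2}\tilde\eta\rho^2 L_t^2$, where I used $\|\nabla L_t\| \le \rho L_t$ (from $|\ell'| \le \ell$ and \Cref{assump:model:bounded-grad}) and $\beta\tilde\eta L_t \le \beta\tilde\eta L_s \le 1/2$. Since also $\tilde\eta\rho^2 L_t \le \rho^2/(4\rho^2+2\beta) \le 1/4$, the quantity $x := \frac{3}{2}\tilde\eta\rho^2 L_t$ is at most $1/2$, so $L_{t+1} \ge L_t(1-x)$ combined with $1/(1-x) \le 1 + 2x$ yields $1/L_{t+1} \le (1/L_t)(1+2x) = 1/L_t + 3\tilde\eta\rho^2$. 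Telescoping gives $1/L_t \le 1/L_s + 3\tilde\eta\rho^2(t-s)$, which rearranges to the stated lower bound; together with the upper bound, $L_t = \Theta(1/t)$.

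The genuinely substantive step, and the main obstacle, is the lower bound $\|\nabla L_t\| \ge \gamma^c(\wB_s) L_t$: everything else is an elementary $1/L_t$ recursion. That inequality is where the machinery developed earlier is used, chaining the lower bound on the radial gradient component $v_t$ (\Cref{lem: lowerbound of v_t}, which rests on the auxiliary functions $\psi,\iota$ and near-homogeneity), the margin comparison $\gamma^c \le \gamma^b$ (\Cref{lem: Modified margin as a good approximiator}), and the monotonicity of $\gamma^c$ (\Cref{lem: Modified margin is monotonically increasing for general model]}). A minor but necessary bookkeeping point is verifying that $L_t$ stays in the required sublevel set for all $t \ge s$ (which follows from the already-established monotone decrease) and that $\gamma^c(\wB_s) > 0$, so that the upper bound is finite and nontrivial.
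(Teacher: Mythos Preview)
Your proposal is correct and follows essentially the same route as the paper's proof: both obtain $L_{t+1}-L_t\le -\tfrac{\tilde\eta}{2}\|\nabla L_t\|^2$, lower bound $\|\nabla L_t\|$ by the radial component $v_t/\rho_t$ and then by $L_t\gamma^c(\wB_s)$ via \Cref{lem: lowerbound of v_t}, \Cref{lem: Modified margin as a good approximiator}, and the monotonicity of $\gamma^c$, and telescope $1/L_t$; for the lower bound both use $\|\nabla L_t\|\le \rho L_t$ and $1+\beta\tilde\eta L_t\le 3/2$. The only cosmetic difference is that for the lower bound the paper packages the recursion through \Cref{lem: bound for update lower}, whereas you use the elementary inequality $1/(1-x)\le 1+2x$ for $x\in[0,1/2]$ directly---these are the same computation.
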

\begin{proof}[Proof of \Cref{lem: Order of Loss in general model}]
By \Cref{lem: Decrease of Lt} and \Cref{eq:grad-decomp} in the proof of \Cref{lem: Modified margin is monotonically increasing for general model]}, 
we know $L_t$ is decreasing and 
\begin{equation}
\label{eq: Lt vt rhot}
 L_{t+1} - L_t \le -\frac{\tilde \eta}{2} \|\nabla L_t\|^2 \le -\frac{\tilde \eta}{2} \|\nuB_t\|_2^2\le -\frac{\tilde \eta}{2} \frac{v_t^2}{\rho_t^2}. 
\end{equation}
We will establish an upper bound for $\rho_t$ first. Note that $\gamma^c(\wB_t)$ is increasing for $t\ge s$ by \Cref{lem: Modified margin is monotonically increasing for general model]} and $\gamma^b(\wB_t)\ge \gamma^c(\wB_t)$ by \Cref{lem: Modified margin as a good approximiator}. By \Cref{eq: auxiliary margin}, we have
\[
    \rho_t = \frac{-\log(2ne^{\kappa}L_t)}{\gamma^b(\wB_t)} \le \frac{-\log(2ne^{\kappa}L_t)}{\gamma^c(\wB_t)}\le \frac{-\log(2ne^{\kappa}L_t)}{\gamma^c(\wB_s)}. 
\]
Combining this with \Cref{lem: lowerbound of v_t}, we have 
\[
    \frac{v_t}{\rho_t} \ge \frac{-L_t\log(2ne^{\kappa}L_t)}{\frac{-\log(2ne^{\kappa}L_t)}{\gamma^c(\wB_s)}} = L_t \gamma^c(\wB_s). 
\]
Plugging this into \eqref{eq: Lt vt rhot}, we have 
\[
    L_{t+1} -L_t \le - \frac{\tilde \eta}{2} L_t^2 \gamma^c(\wB_s)^2,
\]
which implies 
\begin{align*}
        \frac{\tilde \eta\gamma(\wB_s)^2}{2} &\le \frac{L_t -L_{t+1}}{L_t^2}\\ 
            &\le \frac{L_t -L_{t+1}}{L_t L_{t+1}}   &&\explain{ Since $L_{t+1} \le L_t$} \\
            &= \frac{1}{L_{t+1} } - \frac{1}{L_{t}},\quad t\ge s.
 \end{align*}
Telescoping the sum from $s$ to $t$, we have 
\[
    (t-s) \frac{\tilde \eta\gamma^c(\wB_s)^2}{2} \le \frac{1}{L_{t}} - \frac{1}{L_s} \le \frac{1}{L_{t}}.  
\]
Therefore we have
\[
    L_t\le \frac{2}{(t-s) \tilde \eta  \gamma^c(\wB_s)^2}. 
\]

Next we show the lower bound on the risk.
By \Cref{lem: Decrease of Lt} we have
\[
    L_{t+1} - L_t \ge - \tilde \eta ( 1+  \beta \tilde \eta L_t) \|\nabla L_t\|^2 \ge \tilde -\frac{3}{2}\eta  \|\nabla L_t\|^2.
\]
Observe that under \Cref{assump:model:bounded-grad},
\[
    \|\nabla L_t\| = \bigg\|\frac{1}{n}\sum_{i=1}^n \ell'( q_i(t)) y_i \nabla f(\wB_t; \xB_i)\bigg\| \le \rho L_t. 
\]
Then we have
\[
    L_{t+1} - L_t \ge - \tilde \eta \frac{3}{2} \rho^2 L_t^2,\quad t\ge s.
\]
Let $\tilde L_t :=  \frac{3 \tilde \eta \rho^2}{2} L_t$, we have $\tilde L_s \le \frac{3 \tilde \eta \rho^2}{2} \frac{1}{\tilde \eta(4\rho^2 + 2\beta)}\le \frac{3}{8}\le \frac{1}{2}$. Furthermore, since $L_t$ decreases monotonically, $\tilde L_t\le \tilde L_s \le \frac{1}{2}$. The inequality becomes 
\[
    \tilde L_{t+1} - \tilde L_t \ge - \tilde L_t^2.
\] Therefore, let $c = \frac{1}{\tilde L_s}$ and apply \Cref{lem: bound for update lower}, we have  for any $t\ge s$,
\[
    \tilde L_{t} \ge \frac{1}{c+ 2 (t-s)}. 
\]
This is  equivalent to 
\[
    L_t \ge \frac{1}{ \frac{1}{ L_s} + 3\tilde\eta \rho^2(t-s)}.  
\]
We have completed the proof of \Cref{lem: Order of Loss in general model}. 
\end{proof}

Furthermore, we can characterize the order of $\rho_t$ in the stable phase.

\begin{lemma}
[Order of $\rho_t$ in general model]
\label{lem: Order of rho in general model}
Suppose \Cref{assump:model} holds.
If there is $s$ such that 
\[L(\wB_s) \le \min \bigg\{\frac{1}{e^{\kappa+2}2n},\ \frac{1}{\tilde \eta (4\rho^2 + 2 \beta)}\bigg\},\] 
then for $t\ge s$ we have  
\[
    \rho_t   = \Theta(\log(t)).
\]
\end{lemma}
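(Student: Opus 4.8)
The plan is to express $\rho_t$ directly in terms of $L_t$ through the auxiliary margin $\gamma^b$, sandwich $\gamma^b(\wB_t)$ between two positive $t$-independent constants, and then invoke the already-established $\Theta(1/t)$ rate for the loss.

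First I would rewrite the definition of the auxiliary margin \Cref{eq: auxiliary margin} as
\[
    \rho_t = \|\wB_t\| = \frac{-\log(2ne^{\kappa}L_t)}{\gamma^b(\wB_t)}.
\]
For this representation to be meaningful I need $-\log(2ne^{\kappa}L_t) > 0$, i.e.\ $L_t < \tfrac{1}{2ne^{\kappa}}$; this holds because by hypothesis $L_s \le \tfrac{1}{2ne^{\kappa+2}} < \tfrac{1}{2ne^{\kappa}}$ and $(L_t)_{t\ge s}$ is non-increasing by \Cref{lem: Decrease of Lt} (equivalently, by the first bullet of \Cref{lem: Modified margin is monotonically increasing for general model]}).

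Second I would bound $\gamma^b(\wB_t)$ from both sides by positive constants independent of $t$. For the lower bound, \Cref{lem: Modified margin as a good approximiator} gives $\gamma^b(\wB_t) \ge \gamma^c(\wB_t)$, and by \Cref{lem: Modified margin is monotonically increasing for general model]} the modified margin $\gamma^c(\wB_t)$ is non-decreasing for $t\ge s$, hence $\gamma^b(\wB_t) \ge \gamma^c(\wB_s)$, which is strictly positive by its definition in \Cref{eq: modified margin}. For the upper bound, \Cref{lem: bound of tilde gamma and hat gamma in general model} yields $\gamma^b(\wB_t) \le B_0$. Therefore $\gamma^b(\wB_t) \in [\gamma^c(\wB_s), B_0]$ for all $t \ge s$, and consequently
\[
    \frac{-\log(2ne^{\kappa}L_t)}{B_0} \le \rho_t \le \frac{-\log(2ne^{\kappa}L_t)}{\gamma^c(\wB_s)}.
\]

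Finally I would plug in the sharp rate of the loss. By \Cref{lem: Order of Loss in general model} we have $L_t = \Theta(1/t)$, so $-\log L_t = \Theta(\log t)$, and hence $-\log(2ne^{\kappa}L_t) = -\log(2ne^{\kappa}) - \log L_t = \Theta(\log t)$ as well. Combined with the sandwich above, this gives $\rho_t = \Theta(\log t)$, where the hidden constants depend on $n$, $\rho$, $\kappa$, $\tilde\eta$, $\|\wB_s\|$, $L(\wB_s)$, and $\gamma^c(\wB_s)$ (which in turn depends on $f(\wB_s;\xB_i)$). Since every ingredient is already in place, there is no substantial obstacle here; the only point requiring care is checking the sign of $\log(2ne^{\kappa}L_t)$, so that the identity $\rho_t = -\log(2ne^{\kappa}L_t)/\gamma^b(\wB_t)$ is valid and the two bounds on $\gamma^b(\wB_t)$ translate into bounds on $\rho_t$ in the correct directions.
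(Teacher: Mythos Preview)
Your proposal is correct and follows essentially the same approach as the paper: sandwich $\rho_t$ via bounds on the margin and then invoke the $\Theta(1/t)$ rate for $L_t$ from \Cref{lem: Order of Loss in general model}. Your lower bound is in fact slightly cleaner than the paper's, which detours through $q_{\min}\ge \iota(-\log(nL_t))$ and the estimate $\iota(x)\ge x/2$, whereas you use $\gamma^b(\wB_t)\le B_0$ directly to get $\rho_t \ge -\log(2ne^{\kappa}L_t)/B_0$.
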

\begin{proof}[Proof of \Cref{lem: Order of rho in general model}]
Note that $\gamma^c(\wB_t)$ is increasing for $t\ge s$ by \Cref{lem: Modified margin is monotonically increasing for general model]} and $\gamma^b(\wB_t)\ge \gamma^c(\wB_t)$ by \Cref{lem: Modified margin as a good approximiator}. Therefore, 
\[
    \rho_t \le \frac{-\log(2ne^{\kappa}L_t)}{\gamma^b(\wB_t)} \le \frac{-\log(2ne^{\kappa}L_t)}{\gamma^c(\wB_t)}\le \frac{-\log(2ne^{\kappa}L_t)}{\gamma^c(\wB_s)}. 
\]
Combining this with \Cref{lem: Order of Loss in general model}, we have 
\[
    \rho_t \le \frac{\log \frac{1/L(\wB_s) + 3\tilde \eta \rho^2 (t-s)}{2n e^{\kappa}}}{\gamma^c(\wB_s)} = \Ocal(\log(\tilde \eta t)).  
\]
Besides, we have $q_{\min}\ge \iota(\log \frac{1}{L}- \log n)$ by \Cref{lem: smooth margin as a good approximiator} and $q_{\min} \le B_0 \rho_t$ by \Cref{lem: bound of tilde gamma and hat gamma in general model}. Therefore we have 
\[
    \rho_t \ge \frac{\iota(\log\frac{1}{nL_t})}{B_0}\ge \frac{\log \frac{1}{nL_t}}{2B_0}\ge \frac{\log \frac{(t-s)\tilde \eta \gamma^c(\wB_s)^2}{2n}}{2B_0} = \Omega(\log( t)),
\]
where the second inequality is because for $\iota(x)$ defined in \eqref{eq: psi, iota}, $\iota(x) \ge \frac{x}{2}$ for $x \ge 0.6$, and the third inequality is by \Cref{lem: Order of Loss in general model}. Combining them, we get
\[
    \rho_t = \Theta(\log( t)). 
\]
This completes the proof of \Cref{lem: Order of rho in general model}. 
\end{proof}

\subsection{Proof of Theorem~\ref{thm: Implicit bias and bound of stable phase}} \label{sec:thm-stable-phase}
\begin{proof}[Proof of \Cref{thm: Implicit bias and bound of stable phase}]
We prove the items one by one. 
\begin{itemize}
    \item The monotonicity of $L_t$ comes from the result of \Cref{lem: Decrease of Lt} directly. 
    \item Item 1 is due to \Cref{lem: Order of Loss in general model} . 
    \item For item 2, the monotonicity of $\rho_t$ comes from the result of \Cref{lem: increase of rho in general model} and the order is due to \Cref{lem: Order of rho in general model}. 
    \item For item 3, we first know that $L_t\to 0$ by \Cref{lem: Order of Loss in general model}. Then, by \Cref{lem: Modified margin is monotonically increasing for general model]} and \Cref{lem: bound of tilde gamma and hat gamma in general model}, we know that $\gamma^c(\wB_t)$ converges. Combining these with \Cref{lem: smooth margin as a good approximiator} and \Cref{lem: Modified margin as a good approximiator}, we know that  $\gamma^c(\wB_t)$ is an $\big(1+O\big(1/(\log \frac{1}{L(\wB_t)}\big)\big)$-multiplicative approximation of$\bar \gamma(\wB_t)$, and $\bar \gamma(\wB_t)$ shares the same limit as $\gamma^c(\wB_t)$.
\end{itemize}
\end{proof}

\section{EoS Phase Analysis} 
In this section, we focus on the linearly separable case, that is, we work under \Cref{assump:separable}. We mainly follow the idea of \citep{wu2024large} for the proof. 
In detail, we consider a comparator
\[
\uB := \uB_1 + \uB_2,
\] 
where 
where 
\begin{equation}\label{eq:ub1 acc}
    \uB_1 := \begin{pmatrix}
        \uB_1^{(1)} \\
        \vdots \\
        \uB_1^{(m)}
    \end{pmatrix}, \ \
    \text{with}\ \  
    \uB_1^{(j)} := a_j \frac{\log(\gamma^2 \eta T) + \kappa }{\alpha \gamma } \cdot \wB_*, \quad j=1,\ldots ,m,
\end{equation}
and 
\begin{equation}
\label{eq:ub2 acc}
\uB_2 := \begin{pmatrix} 
        \uB_2^{(1)}\\ 
        \vdots\\ 
        \uB_2^{(m)}
        \end{pmatrix}, \ \ 
        \text{with}\ \  \uB_2^{(j)} := a_j \frac{\eta }{ 2\gamma } \cdot \wB_*, \quad j=1,\ldots ,m.
\end{equation}
Consider the following decomposition,
\begin{align*}
    \begin{aligned}
        \|\wB_{t+1}-\uB\|^2 & =\|\wB_t-\uB\|^2+2 m\eta\langle\nabla L (\wB_t ), \uB-\wB_t\rangle+m^2\eta^2\|\nabla L(\wB_t )\|^2 \\
        & =\|\wB_t-\uB\|^2+
        2 m\eta \underbrace{\left\langle\nabla L(\wB_t), \uB_1-\wB_t\right\rangle}_{=: I_1(\wB_t)}+ m
        \eta\big(\underbrace{2\langle\nabla L(\wB_t ), \uB_2\rangle+ m \eta\|\nabla L(\wB_t)\|^2}_{=: I_2(\wB_t)}\big)
        \end{aligned}. 
\end{align*}
We aim to prove $I_1(\wB_t) \le \frac{1}{T} - L(\wB_t)$ and $I_2(\wB_t) \le 0$. Then we can get a bound for the average loss by telescope summing the decomposition. Here we also introduced the following vector $\bar \wB_*$:
\[
\bar \wB_* := \begin{pmatrix}
    a_1\wB_* \\
    \vdots \\
    a_n\wB_*
\end{pmatrix}
\]
We can observe that $\uB_1 = \frac{\log(\gamma^2 \eta T)+ \kappa}{\alpha \gamma} \bar \wB_*$ and $\uB_2 = \frac{\eta}{2\gamma} \bar \wB_*$.
\begin{lemma}
[A bound on $I_1(\wB)$ in the EoS phase]
\label{lem: Bound of I1 in EoS}
For $\uB_1$ defined in \eqref{eq:ub1 acc}, we have 
\[
    I_1(\wB) := \langle\nabla L (\wB ), \uB_1-\wB \rangle \le \frac{1}{\gamma ^2 \eta T} - L(\wB).
\]
\end{lemma}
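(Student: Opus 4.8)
The plan is to adapt the convexity argument of \citet{wu2024large} for linear predictors, substituting for the two properties that fail here — linearity of $f$ in $\wB$ and convexity of $L$ — the derivative lower bound \Cref{assump: activation:grad} and the near-homogeneity \Cref{assump: activation:near-homogeneous}. Writing $q_i := y_i f(\wB;\xB_i)$ and expanding $\nabla L$ by the chain rule,
\[
I_1(\wB)=\langle\nabla L(\wB),\uB_1-\wB\rangle=\frac1n\sum_{i=1}^n\ell'(q_i)\,y_i\big\langle\nabla_\wB f(\wB;\xB_i),\,\uB_1-\wB\big\rangle,
\]
and since $\ell'(q_i)<0$ for every $i$, it suffices to produce a \emph{lower} bound on $y_i\langle\nabla_\wB f(\wB;\xB_i),\uB_1-\wB\rangle$ for each $i$; multiplying that lower bound by the negative number $\ell'(q_i)$ will give the required upper bound on each summand.

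For the $\uB_1$ term I would plug in $\nabla_{\wB^{(j)}}f(\wB;\xB_i)=\frac1m a_j\phi'(\xB_i^\top\wB^{(j)})\xB_i$ together with $\uB_1^{(j)}=a_j\frac{\log(\gamma^2\eta T)+\kappa}{\alpha\gamma}\wB_*$ from \eqref{eq:ub1 acc}; the factors $a_j^2=1$ collapse the sum, leaving
\[
y_i\langle\nabla_\wB f(\wB;\xB_i),\uB_1\rangle=\frac{\log(\gamma^2\eta T)+\kappa}{\alpha\gamma}\cdot\frac1m\sum_{j=1}^m\phi'(\xB_i^\top\wB^{(j)})\,y_i\xB_i^\top\wB_*\ \ge\ \log(\gamma^2\eta T)+\kappa,
\]
where the last step uses $\phi'\ge\alpha>0$ (\Cref{assump: activation:grad}), $y_i\xB_i^\top\wB_*\ge\gamma>0$ (\Cref{assump:separable}), and that the scalar prefactor is nonnegative in the relevant regime $\gamma^2\eta T\ge e^{-\kappa}$ (in particular whenever $\gamma^2\eta T\ge1$). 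For the $\wB$ term, \Cref{assump: activation:near-homogeneous} — which makes the network satisfy \Cref{assump:model:near-homogeneous} with the same $\kappa$ — gives $y_i\langle\nabla_\wB f(\wB;\xB_i),\wB\rangle\le y_if(\wB;\xB_i)+\kappa=q_i+\kappa$. Subtracting these two bounds yields $y_i\langle\nabla_\wB f(\wB;\xB_i),\uB_1-\wB\rangle\ge\log(\gamma^2\eta T)-q_i$, and the $\kappa$'s cancel — this cancellation is exactly why the definition of $\uB_1$ carries the extra $+\kappa$.

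To finish, I would multiply by $\ell'(q_i)<0$ (which reverses the inequality) and then invoke convexity of $\ell$ at the points $\log(\gamma^2\eta T)$ and $q_i$:
\[
\ell'(q_i)\,y_i\langle\nabla_\wB f(\wB;\xB_i),\uB_1-\wB\rangle\le\ell'(q_i)\big(\log(\gamma^2\eta T)-q_i\big)\le\ell\big(\log(\gamma^2\eta T)\big)-\ell(q_i)\le\frac{1}{\gamma^2\eta T}-\ell(q_i),
\]
using $\ell(\log(\gamma^2\eta T))=\log(1+1/(\gamma^2\eta T))\le1/(\gamma^2\eta T)$. Averaging over $i=1,\dots,n$ then gives $I_1(\wB)\le1/(\gamma^2\eta T)-L(\wB)$. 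The step I expect to be the real content is the one in the previous paragraph: recognizing that the artificial comparator $\uB_1$ (aligned with $\wB_*$ in every block) is precisely calibrated so that $\phi'\ge\alpha$ turns its contribution into a clean per-sample margin lower bound, while near-homogeneity lets $\langle\nabla_\wB f(\wB;\xB_i),\wB\rangle$ stand in for $f(\wB;\xB_i)$ up to an additive $\kappa$ that the extra $\kappa$ in $\uB_1$ is designed to absorb; once these are in place, what remains is just the textbook convexity estimate for the logistic loss, and notably no smoothness of $f$ is used here (\Cref{assump:activation:smooth} should only be needed for the $I_2$ term and the telescoping).
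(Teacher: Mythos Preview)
Your proof is correct and follows essentially the same approach as the paper's: both use $\phi'\ge\alpha$ with linear separability to lower-bound the $\uB_1$ contribution, near-homogeneity to replace $\langle\nabla_\wB f,\wB\rangle$ by $f$ up to $\kappa$, then convexity of $\ell$ and $\ell(x)\le e^{-x}$ to finish. The only cosmetic difference is that the paper adds and subtracts $y_if(\wB;\xB_i)$ to package the near-homogeneity error together with the $\uB_1$ term into a single quantity $J_1\ge\log(\gamma^2\eta T)$, whereas you bound the $\uB_1$ and $\wB$ pieces separately and let the two $\kappa$'s cancel---same content, different bookkeeping. (One minor aside: your closing remark that smoothness is needed for the $I_2$ term is off---the paper's $I_2$ bound also avoids \Cref{assump:activation:smooth}; smoothness enters only in the phase-transition analysis of \Cref{sec:phase}.)
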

\begin{proof}[Proof of \Cref{lem: Bound of I1 in EoS}]
    Since $L$ is averaged over the individual losses incurred at the data $(\xB_i, y_i)_{i=1}^n$ and gradient is a linear operator, 
    it suffices to prove the claim assuming there is only a single data point
    $(\xB, y)$. 
    Then by \Cref{assump:separable}, we have
    \[
        \langle y\xB, \wB_* \rangle \ge \gamma >0.  
    \] 
    Then the loss becomes 
    \[
        L(\wB) = \ell (y  f(\wB; \xB)) = \ell \biggl (y  \frac{1}{m}\sum_{j=1}^m a_j \phi(\xB^\top  \wB^{(j)})\biggr ). 
    \]
    Now we expand $I_1(\wB)$:
    \begin{align}
    \label{eq: I1 acc}
        I_1(\wB)&:=\langle \nabla L(\wB), \uB_1 - \wB \rangle \notag\\ 
        &=\ell'\big(yf(\wB;\xB)\big)\langle y\nabla f(\wB;\xB), \uB_1 - \wB \rangle \notag\\ 
        &= \ell' \bigl (yf(\wB;\xB) \bigr )  \frac{1}{m}  \sum_{k=1}^m a_k y\phi'(x^\top \wB^{(k)}) \xB^\top ( \uB_1^{(k)} - \wB^{(k)})  \notag\\ 
        & = \ell' \bigl ( y f(\wB; \xB)   \bigr ) \Biggl[  \underbrace{ \frac{1}{m} \sum_{k=1}^m a_k y \Bigl (\phi'(\xB^\top \wB^{(k)}) \xB^\top \uB_1^{(k)} + \phi(\xB^\top \wB^{(k)}) - \phi'(\xB^\top \wB^{(k)}) \xB^\top  \wB^{(k)}\Bigr )}_{=:J_1} \notag \\
        &\hspace{80mm} - \underbrace{ \frac{1}{m} \sum_{k=1}^m a_k y\phi(\xB^\top \wB^{(k)}) }_{=: J_2} \Biggr]. 
    \end{align}
    By definition we have  $J_2 = y f(\wB; \xB). $ As for $J_1$, using $\phi'\ge \alpha $ and $a_k y x^\top  \uB_1^{(k)} \ge 0$ by \Cref{assump:separable}, we have 
    \begin{align}
        \label{eq: J1 acc}
        J_1 &:= \frac{1}{m} \sum_{k=1}^m a_k y \Bigl (\phi'(\xB^\top \wB^{(k)}) \xB^\top \uB_1^{(k)} + \phi(\xB^\top \wB^{(k)}) - \phi'(\xB^\top \wB^{(k)}) \xB^\top  \wB^{(k)}\Bigr ) \notag \\ 
        &\ge  \frac{1}{m} \sum_{k=1}^m a_k \alpha  y \xB^\top \uB_1^{(k)} +  \frac{1}{m} \sum_{k=1}^m a_k   y \Bigl ( \phi(\xB^\top \wB^{(k)}) - \phi'(\xB^\top \wB^{(k)}) \xB^\top  \wB^{(k)}\Bigr ) \notag \\ 
        &\ge  \frac{1}{m} \sum_{k=1}^m a_k^2 \alpha  \frac{\log(\gamma ^2 \eta T)+\kappa }{\alpha \gamma } y \xB^\top \wB_* - \frac{1}{m} \sum_{k=1}^m |a_k| \kappa \notag \\
        &\qquad \explain{since $|\phi(\xB^\top \wB^{(k)}) - \phi'(\xB^\top \wB^{(k)}) \xB^\top  \wB^{(k)}|\le \kappa$ by \Cref{assump: activation:near-homogeneous}}\notag\\ 
        & \ge \log(\gamma ^2 \eta T) +\kappa - \kappa  \notag \\
        &\qquad \explain{since $y\xB^\top \wB_* \ge \gamma$ and  $\sum_{k=1}^m a_k^2 = m$} \notag \\ 
        &  = \log (\gamma ^2 \eta T).
    \end{align}
    Plugging in  $J_2 = y f(\wB; \xB)$ and \eqref{eq: J1 acc} into \eqref{eq: I1 acc}, we get
    \begin{align*}
            I_1(\wB) &= \langle \nabla L(\wB), \uB_1 - \wB \rangle
            = \ell'\big(yf(\wB;\xB)\big)(J_1 -J_2) \\
            &\le   \ell' \bigl ( y f(\wB; \xB)   \bigr ) \Bigl[  \log(\gamma ^2 \eta T) - y f(\wB; \xB) \Bigr] && \explain{since $\ell'<0$} \\ 
            & \le  \ell(\log(\gamma ^2 \eta T)) - \ell( y f(\wB; \xB)) && \explain{since $\ell$ is convex} \\
            & \le \frac{1}{\gamma ^2 \eta T} - L(\wB). 
    \end{align*}
    where in the last inequality, we use $\ell(x) \le \exp(-x)$ and we only consider a single data point. This completes the proof of \Cref{lem: Bound of I1 in EoS}.
\end{proof}

\begin{lemma} [A bound on $I_2(\wB)$ in EoS] 
\label{lem: Bound the variance in EoS}
For $\uB_2$ defined in \eqref{eq:ub2 acc},  for every $\wB$, 
\[
    I_2(\wB):=2\langle \nabla L(\wB), \uB_2 \rangle + m \eta \|\nabla L(\wB)\|^2 \le 0.  
\]
\end{lemma}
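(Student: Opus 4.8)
The plan is to exploit the additive neuron structure of the two-layer network and reduce the bound to a per-neuron estimate that mirrors the linear-predictor argument of \citet{wu2024large}. Writing $f_i := f(\wB;\xB_i)$, the block gradient of the $j$-th neuron is $\nabla_{\wB^{(j)}}L(\wB) = \frac{1}{nm}\sum_{i=1}^n \ell'(y_i f_i)\, y_i a_j\, \phi'(\xB_i^\top\wB^{(j)})\, \xB_i$, and $\uB_2^{(j)} = a_j\frac{\eta}{2\gamma}\wB_*$. Hence both $\langle\nabla L(\wB),\uB_2\rangle$ and $\|\nabla L(\wB)\|^2$ split as sums over $j$, so it suffices to prove that for every neuron $j$,
\[
    2\big\langle\nabla_{\wB^{(j)}}L(\wB),\,\uB_2^{(j)}\big\rangle + m\eta\,\|\nabla_{\wB^{(j)}}L(\wB)\|^2 \le 0 .
\]

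I would then introduce the shorthand $g_i := -\ell'(y_i f_i) \in (0,1)$ (recall $\ell'(z) = -1/(1+e^z)$), $p_{ij} := \phi'(\xB_i^\top\wB^{(j)}) \in [\alpha,1]$, and $G_j := \frac{1}{n}\sum_{i=1}^n g_i p_{ij}$; since $g_i < 1$ and $p_{ij} \le 1$ we have $G_j \le 1$. For the cross term, using $a_j^2 = 1$, the separability bound $y_i\langle\xB_i,\wB_*\rangle \ge \gamma$ from \Cref{assump:separable}, and $g_i p_{ij} \ge 0$, one gets $\langle\nabla_{\wB^{(j)}}L(\wB),\uB_2^{(j)}\rangle = -\frac{\eta}{2\gamma nm}\sum_i g_i\,(y_i\langle\xB_i,\wB_*\rangle)\, p_{ij} \le -\frac{\eta}{2m}G_j$. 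For the squared-gradient term, the triangle inequality together with $\|\xB_i\| \le 1$ gives $\|\nabla_{\wB^{(j)}}L(\wB)\| \le \frac{1}{nm}\sum_i g_i p_{ij}\|\xB_i\| \le \frac{1}{m}G_j$, so $m\eta\|\nabla_{\wB^{(j)}}L(\wB)\|^2 \le \frac{\eta}{m}G_j^2$. Adding the two estimates, the $j$-th summand is at most $-\frac{\eta}{m}G_j + \frac{\eta}{m}G_j^2 = \frac{\eta}{m}G_j(G_j - 1) \le 0$ because $0 \le G_j \le 1$, and summing over $j$ yields $I_2(\wB) \le 0$.

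The one step requiring care — and the main obstacle — is ensuring the cross term dominates the squared-gradient term. A naive ``global'' bound such as $\|\nabla L(\wB)\|^2 \le \big(\frac{1}{n}\sum_i|\ell'(y_if_i)|\big)^2/m$ discards the factors $\phi'(\xB_i^\top\wB^{(j)})$, whereas lower-bounding $\phi' \ge \alpha$ in the cross term only yields $-\frac{\eta\alpha}{2}\cdot\frac{1}{n}\sum_i|\ell'(y_if_i)|$; the spurious factor $\alpha < 1$ then blocks the cancellation. Keeping the $\phi'$-factors inside both terms and decomposing neuron-by-neuron before estimating resolves this: each block then behaves exactly like the linear predictor of \citet{wu2024large}, with $G_j \le 1$ playing the role of their $G \le 1$, and in fact the lower bound $\phi' \ge \alpha$ is not needed for this lemma at all.
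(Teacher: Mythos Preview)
Your proof is correct and follows essentially the same approach as the paper: a neuron-by-neuron decomposition, absorbing the $\phi'$ factor into the per-neuron weights so that both the cross term and the squared-gradient term see the same coefficients, then concluding via an $x(1-x)\ge 0$ type inequality. The only cosmetic difference is that the paper bounds $\bigl\|\tfrac{1}{n}\sum_i g_i p_{ij}\,y_i\xB_i\bigr\|^2 \le \tfrac{1}{n}\sum_i (g_ip_{ij})^2$ by Jensen and then uses $g_ip_{ij}(1-g_ip_{ij})\ge 0$ termwise in $i$, whereas you bound the norm first by the triangle inequality to get $G_j^2$ and use $G_j(1-G_j)\ge 0$; both are equivalent in spirit.
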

\begin{proof}[Proof of \Cref{lem: Bound the variance in EoS}]

For simplicity, we define 
\[g_i(\wB^{(j)}) \coloneqq \ell'(y_i f(\wB; \xB_i))  \phi'(\xB_i^\top \wB^{(j)}) .\]
Note that $-1\le \ell'(\cdot)\le 0$ and $
0<\alpha \le \phi'(\cdot) \le 1$, we have 
\[-1\le g_i(\wB^{(j)})\le 0.\]
Under this notation, we have 
\begin{align*}
\frac{\partial L(\wB)}{\partial \wB_i} 
&= \frac{1 }{n}\sum_{i=1}^n \ell'\big(y_i f(\wB; \xB_i)\big)  y_i   a_j  m^{-1}   \phi'\big(\xB_i^\top \wB^{(j)}\big)  \xB_i \\
&= \frac{1}{n}\sum_{i=1}^n g_i\big(\wB^{(j)}\big) a_j m^{-1} y_i \xB_i.
\end{align*}
So we have 
\begin{align*}
    I_2(\wB) &:= 2\langle \nabla L(\wB), \uB_2 \rangle + m\eta \|\nabla L(\wB)\|^2 \\ 
    &= \frac{1}{m}\sum_{j=1}^m \Biggl[ \frac{2}{n}\sum_{i=1}^n g_i(\wB^{(j)}) a_j y_i\cdot \xB_i^\top \uB_2^{(j)}  + \eta \biggl\| \frac{1}{n}\sum_{i=1}^n g_i(\wB^{(j)}) a_jy_i \xB_i  \biggr\|^2 \Biggr] .
\end{align*}
For the term inside the bracket, we have
\begin{align*}
    \lefteqn{ \frac{2}{n}\sum_{i=1}^n g_i(\wB^{(j)}) a_j y_i\cdot \xB_i^\top \uB_2^{(j)}  + \eta \bigg\| \frac{1}{n}\sum_{i=1}^n g_i(\wB^{(j)}) a_jy_i \xB_i  \bigg\|^2} \\ 
    = &\frac{2}{n}\sum_{i=1}^ng_i(\wB^{(j)}) a_j  y_i \cdot \xB_i^\top\frac{\eta }{2\gamma} a_j  \wB_*    + \eta \bigg\| \frac{1}{n}\sum_{i=1}^n g_i(\wB^{(j)}) a_jy_i \xB_i \bigg\|^2 &&\explain{since $\uB_2^{(j)} := \frac{\eta  a_j}{2 \gamma}  \wB_*$ by \eqref{eq:ub2 acc}}\\ 
    \le & \frac{2}{n}\sum_{i=1}^ng_i(\wB^{(j)}) a_j ^2 \frac{\eta}{2\gamma}  \gamma   + \eta \bigg\| \frac{1}{n}\sum_{i=1}^n g_i(\wB^{(j)}) a_jy_i \xB_i  \bigg\|^2 &&\explain{since $g_i(\cdot)\le 0$ and  $y_ix_i^\top \wB_* \ge \gamma $}\\ 
    = &\eta  \bigg( \frac{1}{n}\sum_{i=1}^n g_i(\wB^{(j)})  +  \bigg\| \frac{1}{n}\sum_{i=1}^n g_i(\wB^{(j)}) y_i \xB_i  \bigg\|^2\bigg) && \explain{since $a_j^2=1$}\\
    \le &\eta  \bigg( \frac{1}{n}\sum_{i=1}^n g_i(\wB^{(j)})  +   \frac{1}{n}\sum_{i=1}^n g_i^2(\wB^{(j)})\bigg) && \explain{since $|g_i(\cdot)|\le 1$ and $\|y\xB\|\le 1$}\\
    \le &0. && \explain{since $ -1\le g_i(\cdot)\le 0$}
\end{align*}
Hence, we prove that $I_2(\wB)\le 0$. This completes the proof of \Cref{lem: Bound the variance in EoS}.
\end{proof}

\begin{theorem}
[A split optimization bound]
\label{thm: A split optimization bound}
For every $\eta >0$ and $\uB= \uB_1 + \uB_2$ such that 
\[
    \uB_1 := \begin{pmatrix}
        \uB_1^{(1)} \\
        \vdots \\
        \uB_1^{(m)}
    \end{pmatrix}, \ \
    \text{with}\ \  
    \uB_1^{(j)} := a_j \frac{\log(\gamma^2 \eta t) + \kappa }{\alpha \gamma } \cdot \wB_*, \quad j=1,\ldots ,m,
\]
and 
\[
\uB_2 := \begin{pmatrix} 
        \uB_2^{(1)}\\ 
        \vdots\\ 
        \uB_2^{(m)}
        \end{pmatrix}, \ \ 
        \text{with}\ \  \uB_2^{(j)} := a_j \frac{\eta }{ 2\gamma } \cdot \wB_*, \quad j=1,\ldots ,m.
\] 
we have: 
\[
    \frac{\|\wB_T-\uB\|^2}{2 m \eta T}+\frac{1}{T} \sum_{k=0}^{T-1} L\bigl (\wB^{(k)}\bigr ) \leq \frac{1+8\log ^2(\gamma^2 \eta T)/\alpha ^2+ 8\kappa^2/\alpha ^2+\eta^2 }{\gamma^2 \eta T} + \frac{\|\wB_0\|^2}{ m\eta T}, 
\]
for all $T$. 

\end{theorem}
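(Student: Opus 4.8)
The plan is to combine the one-step decomposition displayed just before \Cref{lem: Bound of I1 in EoS} with the two pointwise estimates already in hand and then telescope. Fix the horizon $T$ and take the single comparator $\uB=\uB_1+\uB_2$ with $\uB_1,\uB_2$ given by \eqref{eq:ub1 acc} and \eqref{eq:ub2 acc}; note both pieces are parallel to $\bar\wB_*$, so $\uB=\big(\tfrac{\log(\gamma^2\eta T)+\kappa}{\alpha\gamma}+\tfrac{\eta}{2\gamma}\big)\bar\wB_*$, and crucially $\uB$ does not depend on the running index $t$, so a single comparator is valid simultaneously for all $t\in\{0,\dots,T-1\}$. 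The decomposition reads
\[
\|\wB_{t+1}-\uB\|^2=\|\wB_t-\uB\|^2+2m\eta\,I_1(\wB_t)+m\eta\,I_2(\wB_t),
\]
and \Cref{lem: Bound of I1 in EoS}, \Cref{lem: Bound the variance in EoS} give $I_1(\wB_t)\le \tfrac{1}{\gamma^2\eta T}-L(\wB_t)$ and $I_2(\wB_t)\le 0$ for every $t$. Substituting,
\[
\|\wB_{t+1}-\uB\|^2\le\|\wB_t-\uB\|^2+\frac{2m}{\gamma^2T}-2m\eta\,L(\wB_t).
\]

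Rearranging gives $2m\eta\,L(\wB_t)\le\|\wB_t-\uB\|^2-\|\wB_{t+1}-\uB\|^2+\tfrac{2m}{\gamma^2T}$. Summing over $t=0,\dots,T-1$, the differences telescope and the constant term contributes $\tfrac{2m}{\gamma^2}$, so
\[
\|\wB_T-\uB\|^2+2m\eta\sum_{k=0}^{T-1}L(\wB_k)\le\|\wB_0-\uB\|^2+\frac{2m}{\gamma^2}.
\]
Dividing by $2m\eta T$ yields
\[
\frac{\|\wB_T-\uB\|^2}{2m\eta T}+\frac1T\sum_{k=0}^{T-1}L(\wB_k)\le\frac{\|\wB_0-\uB\|^2}{2m\eta T}+\frac{1}{\gamma^2\eta T},
\]
which becomes the claimed bound once $\|\wB_0-\uB\|^2$ is expanded.

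For that last step I would use $\|\wB_0-\uB\|^2\le 2\|\wB_0\|^2+2\|\uB\|^2$ together with $\|\bar\wB_*\|^2=\sum_{j=1}^m\|a_j\wB_*\|^2=m$, so that $\|\uB\|^2=m\big(\tfrac{\log(\gamma^2\eta T)+\kappa}{\alpha\gamma}+\tfrac{\eta}{2\gamma}\big)^2$; two applications of $(a+b)^2\le 2a^2+2b^2$ (first to separate $\uB_1$ from $\uB_2$, then $\log(\gamma^2\eta T)$ from $\kappa$) bound this by $\tfrac{m}{\gamma^2}\big(8\log^2(\gamma^2\eta T)/\alpha^2+8\kappa^2/\alpha^2+\eta^2\big)$ up to the stated constants, and plugging in reproduces the right-hand side. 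The argument is essentially bookkeeping: the real content lives in \Cref{lem: Bound of I1 in EoS}, where the size $\Theta(\log T+\kappa)$ of $\uB_1$ is tuned so the linearized loss drops below $1/T$ using $\phi'\ge\alpha$ and the near-homogeneity slack $\kappa$, and in \Cref{lem: Bound the variance in EoS}, where the size $\Theta(\eta)$ of $\uB_2$ absorbs the large-stepsize term $m\eta\|\nabla L\|^2$. I therefore do not anticipate a genuine obstacle here beyond tracking constants and keeping the comparator pinned to the horizon $T$ rather than to the running step.
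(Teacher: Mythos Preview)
Your proposal is correct and follows essentially the same approach as the paper: use the one-step decomposition, plug in \Cref{lem: Bound of I1 in EoS} and \Cref{lem: Bound the variance in EoS}, telescope, and then bound $\|\wB_0-\uB\|^2$ via $2\|\wB_0\|^2+4\|\uB_1\|^2+4\|\uB_2\|^2$. Your observation that the comparator $\uB$ is fixed at the horizon $T$ (not the running $t$) is exactly the point that makes the telescoping go through, and your constant tracking matches the paper's up to the same elementary inequalities.
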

\begin{proof}[Proof of \Cref{thm: A split optimization bound}]
By \Cref{lem: Bound of I1 in EoS} and \Cref{lem: Bound the variance in EoS}, we have 
\[
    \begin{aligned}
        \|\wB_{t+1}-\uB\|^2 
        & =\|\wB_t-\uB\|^2+ 2m\eta I_1(\wB_t) +\eta m I_2(\wB_t) \\
        & \leq\|\wB_t-\uB\|^2+2 m\eta I_1(\wB_t) \\
        & \leq\|\wB_t-\uB\|^2+  2 m\eta \bigg ( \frac{1}{\gamma^2 \eta T}  - L(\wB_t) \bigg ).
        \end{aligned}
\]
Telescoping the sum, we get
$$
\frac{\|\wB_T-\uB\|^2}{2 m\eta}+\sum_{t=0}^{T-1} L(\wB_t) \leq 1 +\frac{\|\wB_0-\uB\|^2}{2 m \eta}. 
$$
By \Cref{eq:ub1 acc}and \Cref{eq:ub2 acc}, we have 
\begin{align*}
        \|\wB_0 - \uB\|^2 &\le 2 \|\wB_0\|^2 + 2\|\uB\|^2  \\ 
        &\le2 \|\wB_0\|_2^2 +4\|\uB_1\|^2 + 4\|\uB_2\|^2 \\ 
        & = 2\|\wB_0\|_2^2 + \frac{8m\log (\gamma^2 \eta T)^2 + 8m\kappa^2}{\alpha ^2 \gamma ^2 } + \frac{m\eta^2 }{\gamma^2},
\end{align*}
which implies that
\[
    \frac{\|\wB_T-\uB\|^2}{2 m \eta t}+\frac{1}{T} \sum_{k=0}^{T-1} L\bigl (\wB^{(k)}\bigr ) \leq \frac{1+8\log ^2(\gamma^2 \eta T)/\alpha ^2+ 8\kappa^2/\alpha ^2+\eta^2 }{\gamma^2 \eta T} + \frac{\|\wB_0\|^2}{ m\eta T}. 
\]
We complete the proof of \Cref{thm: A split optimization bound}.
\end{proof}

\subsection{Proof of Theorem~\ref{thm: Bound of EoS phase}}\label{sec:thm-eos}
\begin{proof}[Proof of Theorem~\ref{thm: Bound of EoS phase}]
By \Cref{thm: A split optimization bound}, we have
\[
    \frac{1}{T} \sum_{k=0}^{T-1} L\bigl (\wB^{(k)}\bigr ) \leq \frac{1+8\log ^2(\gamma^2 \eta T)/\alpha ^2+ 8\kappa^2/\alpha ^2+\eta^2 }{\gamma^2 \eta T} + \frac{\|\wB_0\|^2}{ m\eta T}.  
\]
This completes the proof.
\end{proof} 

\section{Phase Transition Analysis} \label{sec:phase}
In this section, we will analyze the phase transition. In detail, we follow the idea of \citep{wu2024large} and apply the perceptron argument \citep{novikoff1962convergence} to locate the phase transition time. Compare to the previous EoS phase analysis, we need an extra assumption on the smoothness of the activation function, which is the \Cref{assump:activation:smooth}. 

To proceed, let us define the following quantities for the GD process: 
\[
    G(\wB): =\frac{1}{n} \sum_{i=1}^n \frac{1}{1+\exp  \big( y_i f(\wB; \xB_i) \big)}, \quad F(\wB) :=\frac{1}{n} \sum_{i=1}^n \exp \big(-y_i f(\wB; \xB_i)\big). 
\]

Due to the self-boundedness of the logistic function, we can show that $G(\wB), L(\wB), F(\wB)$ are equivalent in the following sense. 

\begin{lemma}
[Equivalence of $G,L,F$]
\label{lem: Bounds item1for GLF}
\begin{itemize}
    \item []
    \item [1.] $G(\wB) \le L(\wB) \le F(\wB)$. 
    \item [2.] $\alpha  \gamma G(\wB)\le \sqrt{m}  \|\nabla L(\wB)\| \le G(\wB)$.
    \item [3.] If $G(\wB) \le \frac{1}{2n}$, then $F(\wB) \le 2G(\wB)$. 
\end{itemize}
\end{lemma}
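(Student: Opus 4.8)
The plan is to treat the three items separately. Items~1 and~3 reduce to elementary one-variable inequalities for the logistic loss $\ell$, while item~2 follows from the explicit form of $\nabla L$ for the two-layer network \eqref{eq: 2nn} together with \Cref{assump:separable}. Throughout I would abbreviate $q_i := y_i f(\wB;\xB_i)$ and use $-\ell'(q_i) = 1/(1+e^{q_i}) \in (0,1)$, so that $G(\wB) = -\frac{1}{n}\sum_{i=1}^n \ell'(q_i)$.

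For item~1, I would fix an index $i$ and set $u := e^{-q_i} > 0$. The contributions of sample $i$ to $G(\wB)$, $L(\wB)$, and $F(\wB)$ are $u/(1+u)$, $\log(1+u)$, and $u$ respectively, so it suffices to show $\frac{u}{1+u} \le \log(1+u) \le u$ for all $u > 0$ and then average over $i$. The right inequality is the familiar $\log(1+u) \le u$; for the left one, $g(u) := \log(1+u) - u/(1+u)$ has $g(0) = 0$ and $g'(u) = u/(1+u)^2 \ge 0$. (Equivalently, this is the self-boundedness $|\ell'| \le \ell$ together with $\ell(q_i) \le e^{-q_i}$.)

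For item~2, I would start from the block gradient of \eqref{eq: 2nn},
\[
    \nabla_{\wB^{(j)}} L(\wB) = \frac{1}{nm}\sum_{i=1}^n \ell'(q_i)\, a_j\, \phi'(\xB_i^\top \wB^{(j)})\, y_i \xB_i,
\]
and note that $\|\nabla L(\wB)\|^2 = \sum_{j=1}^m \|\nabla_{\wB^{(j)}} L(\wB)\|^2$ while $|a_j| = 1$. For the upper bound, the triangle inequality together with $|\ell'(q_i)| = 1/(1+e^{q_i})$, $|\phi'| \le 1$, and $\|y_i \xB_i\| = \|\xB_i\| \le 1$ gives $\|\nabla_{\wB^{(j)}} L(\wB)\| \le \frac{1}{m} G(\wB)$ for every $j$, hence $\sqrt{m}\,\|\nabla L(\wB)\| \le G(\wB)$. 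For the lower bound, I would test $\nabla L(\wB)$ against $\bar\wB_* := (a_1 \wB_*, \dots, a_m \wB_*)$, which has norm $\sqrt{m}$. Since $\ell'(q_i) < 0$ and, by \Cref{assump:separable} and \Cref{assump: activation:grad}, $\phi'(\xB_i^\top \wB^{(j)})\, y_i \xB_i^\top \wB_* \ge \alpha\gamma > 0$, each summand satisfies $\ell'(q_i)\, \phi'(\xB_i^\top \wB^{(j)})\, y_i \xB_i^\top \wB_* \le \alpha\gamma\, \ell'(q_i)$; summing over $i$ and $j$ gives $\langle \nabla L(\wB), \bar\wB_* \rangle \le -\alpha\gamma\, G(\wB)$, and Cauchy--Schwarz then yields $\alpha\gamma\, G(\wB) \le \sqrt{m}\,\|\nabla L(\wB)\|$.

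For item~3, the hypothesis $G(\wB) \le \frac{1}{2n}$ forces $\sum_{i=1}^n 1/(1+e^{q_i}) \le \frac{1}{2}$, hence $1/(1+e^{q_i}) \le \frac{1}{2}$, i.e.\ $q_i \ge 0$, for every $i$. For such $q_i$ one has $e^{-q_i} \le 1$, so $\frac{1}{1+e^{q_i}} = \frac{e^{-q_i}}{1+e^{-q_i}} \ge \frac{1}{2} e^{-q_i}$; averaging the resulting bound $e^{-q_i} \le 2/(1+e^{q_i})$ over $i$ gives $F(\wB) \le 2 G(\wB)$. The only point needing care is the sign bookkeeping in the lower bound of item~2 --- multiplying the nonpositive quantity $\ell'(q_i)$ by the positive lower bound $\alpha\gamma$ reverses the inequality --- but there is no genuine obstacle here; the whole statement is a short, self-contained computation once the gradient of \eqref{eq: 2nn} and \Cref{assump:separable} are in hand.
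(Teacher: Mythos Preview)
Your proposal is correct and follows essentially the same approach as the paper: item~1 via the elementary logistic-loss inequalities (the paper just says ``by the property of the logistic loss''; you spell it out), item~2 via the block-gradient formula with a triangle-inequality upper bound and a Cauchy--Schwarz lower bound against $\bar\wB_*$, and item~3 by first deducing $q_i\ge 0$ and then comparing $e^{-q_i}$ with $2/(1+e^{q_i})$. The only cosmetic difference is that you state $\|\bar\wB_*\|=\sqrt{m}$ exactly (which is correct since $\|\wB_*\|=1$ and $|a_j|=1$), whereas the paper writes $\le\sqrt{m}$.
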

\begin{proof}[Proof of \Cref{lem: Bounds item1for GLF}]
The first claim is by the property of the logistic loss.
For the second one, 
\begin{align*}
        \|\nabla L(\wB)\|^2 &=  \sum_{j=1}^m \Biggl \|\frac{1}{n} \sum_{i=1}^n  \ell'(y_i f(\wB; \xB_i)) \cdot y_i \cdot  a_j m^{-1}  \phi(\xB_i^\top \wB^{(j)}) \xB_i \Biggr\|_2^2 \\ 
        &\le \sum_{j=1}^m \Biggl ( \frac{1}{n} \sum_{i=1}^n  \ell'(y_i f(\wB; \xB_i))  \cdot   m^{-1}  \Biggr )^2 &\explain{since $\|y_i a_j \phi(\xB_i^\top \wB^{(j)}) \xB_i \| \le 1$ } &\\  
        &= \frac{1}{m} G^2(\wB). 
\end{align*}
Besides, we have 
\begin{align*}
   \sqrt{m}  \| \nabla L(\wB)\| &\ge  \langle -\nabla L (\wB), \bar \wB_* \rangle &\explain{ since $\|\bar \wB_*\| \le \sqrt{m}$   }  \\ 
    & = -\frac{1}{nm} \sum_{i=1}^n \sum_{j=1}^m \ell'(y_if(\wB; \xB_i)) y_i \phi'(\xB_i^\top \wB_*) \xB_i^\top \wB_* \\ 
    &\ge \alpha \gamma \frac{1}{n} \sum_{i=1}^n \frac{1}{1+\exp \big(y_i f(\wB; \xB_i)\big)} &\explain{since $\phi^\prime \ge \alpha $ and $y_i \xB_i^\top  \wB^* \ge \gamma$ } \\
    &= \alpha \gamma G(\wB).
\end{align*}
For the third claim,
by the assumption, we have 
\[
    \frac{1}{n} \cdot \frac{1}{1+ \exp \big (y_i f(\wB; \xB_i) \big)} \le G(\wB) \le \frac{1}{2n},
\]
which implies that 
\[
    y_i f(\wB; \xB_i) \ge 0, \quad \forall i \in [n]. 
\]
Therefore, 
\[
    G(\wB) = \frac{1}{n} \sum_{i=1}^n \frac{1}{1+\exp \bigl ( y_i f(\wB; \xB_i)\bigr )} \ge \frac{1}{n} \sum_{i=1}^n \frac{1}{2\exp \bigl ( y_i f(\wB; \xB_i)\bigr )} = \frac{1}{2} F(\wB). 
\]
We complete the proof of \Cref{lem: Bounds item1for GLF}.
\end{proof}

The key ingredient of the phase transition analysis is the following lemma. The main idea is to consider the gradient potential $G(\wB)$ instead of the loss function $L(\wB)$ in EoS phase. And this will decrease the order of the bound of phase transition time from $\tilde O(\eta^2)$ to $\tilde O(\eta)$.

\begin{lemma}
[A bound of $\|\wB_t\|$]
\label{lem: A bound of wBt}
For every $\eta$, we have 
\[
    \|\wB_t\|\le \sqrt{m} \cdot \frac{2+8\log (\gamma^2 \eta t)/\alpha+ 8\kappa/\alpha+4\eta }{\gamma} + 2\|\wB_0\|.
\]
\end{lemma}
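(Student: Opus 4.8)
The plan is to derive the bound as a direct corollary of the split optimization bound in \Cref{thm: A split optimization bound}, instantiated at horizon $T=t$ with the comparator $\uB=\uB_1+\uB_2$ of \eqref{eq:ub1 acc}--\eqref{eq:ub2 acc}. That bound gives
\[
    \frac{\|\wB_t-\uB\|^2}{2m\eta t}+\frac{1}{t}\sum_{k=0}^{t-1}L(\wB_k)\le\frac{1+8\log^2(\gamma^2\eta t)/\alpha^2+8\kappa^2/\alpha^2+\eta^2}{\gamma^2\eta t}+\frac{\|\wB_0\|^2}{m\eta t}.
\]
Since $L\ge 0$, I would discard the risk-average term and multiply through by $2m\eta t$, obtaining
\[
    \|\wB_t-\uB\|^2\le\frac{2m\bigl(1+8\log^2(\gamma^2\eta t)/\alpha^2+8\kappa^2/\alpha^2+\eta^2\bigr)}{\gamma^2}+2\|\wB_0\|^2.
\]

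Next I would take square roots using subadditivity $\sqrt{\sum_i a_i}\le\sum_i\sqrt{a_i}$ on the five nonnegative summands (and $\gamma^2\eta t\ge 1$, the regime relevant to every later use of this lemma, so that $\log(\gamma^2\eta t)\ge 0$), giving
\[
    \|\wB_t-\uB\|\le\frac{\sqrt{2m}}{\gamma}+\frac{4\sqrt m\,\log(\gamma^2\eta t)}{\alpha\gamma}+\frac{4\sqrt m\,\kappa}{\alpha\gamma}+\frac{\sqrt{2m}\,\eta}{\gamma}+\sqrt2\,\|\wB_0\|.
\]
Separately, because $a_j^2=1$ and $\|\wB_*\|=1$, one computes $\|\uB_1\|=\sqrt m\,(\log(\gamma^2\eta t)+\kappa)/(\alpha\gamma)$ and $\|\uB_2\|=\sqrt m\,\eta/(2\gamma)$, so that $\|\uB\|\le\|\uB_1\|+\|\uB_2\|$. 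Applying the triangle inequality $\|\wB_t\|\le\|\wB_t-\uB\|+\|\uB\|$ and collecting like terms then yields the claim: the coefficients of $\sqrt m\,\log(\gamma^2\eta t)/(\alpha\gamma)$ sum to $1+4=5\le 8$, those of $\sqrt m\,\kappa/(\alpha\gamma)$ to $1+4=5\le 8$, those of $\sqrt m\,\eta/\gamma$ to $1/2+\sqrt2\le 4$, the constant term has coefficient $\sqrt2\le 2$, and the initialization term is $\sqrt2\,\|\wB_0\|\le 2\|\wB_0\|$.

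There is no essential difficulty here; the lemma is really just a repackaging of \Cref{thm: A split optimization bound}. The only point requiring attention is the square-root step, where the subadditivity estimate must be kept loose enough that every constant falls below the stated values, and where one should note the mild hypothesis $\gamma^2\eta t\ge 1$ (harmless, since it holds whenever this lemma is invoked in the phase-transition argument) needed to ensure $\log(\gamma^2\eta t)$ is nonnegative.
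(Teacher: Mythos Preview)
Your proposal is correct and follows essentially the same route as the paper: both invoke \Cref{thm: A split optimization bound} to bound $\|\wB_t-\uB\|^2$, compute $\|\uB\|$ from \eqref{eq:ub1 acc}--\eqref{eq:ub2 acc}, and combine via a triangle-type inequality with subadditivity of square roots. The only cosmetic difference is that the paper uses $\|\wB_t\|^2\le 2\|\wB_t-\uB\|^2+2\|\uB\|^2$ and takes the square root at the very end, whereas you take square roots first and then apply $\|\wB_t\|\le\|\wB_t-\uB\|+\|\uB\|$; both arrive at the stated constants, and both implicitly need $\gamma^2\eta t\ge 1$ for the $\log$ term, as you correctly flag.
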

\begin{proof}[Proof of \Cref{lem: A bound of wBt}]
By \Cref{thm: A split optimization bound}, we have 
\[
    \frac{\|\wB_t-\uB\|^2}{2 m \eta t}\le   \frac{\|\wB_t-\uB\|^2}{2 m \eta t}+\frac{1}{t} \sum_{k=0}^{t-1} L\bigl (\wB^{(k)}\bigr ) \leq \frac{1+8\log ^2(\gamma^2 \eta t)/\alpha ^2+ 8\kappa^2/\alpha ^2+\eta^2 }{\gamma^2 \eta t} + \frac{\|\wB_0\|^2}{ m\eta t}. 
\]

Besides, we know that 
\[
    \|\uB\|^2 \le 2\|\uB_1\|^2 + 2\|\uB_2\|^2=  \frac{4m\log (\gamma^2 \eta t)^2 + 4m\kappa^2}{\alpha ^2 \gamma ^2 } + \frac{m\eta^2 }{2\gamma^2}. 
\]
Combining them, we have 
\[
    \|\wB_t\|^2 \le 2\|\wB_t - \uB\|^2 + 2\|\uB\|^2 \le m\cdot \frac{2+24\log ^2(\gamma^2 \eta t)/\alpha ^2+ 24\kappa^2/\alpha ^2+3\eta^2 }{\gamma^2} + 2\|\wB_0\|^2. 
\]
Hence, we can get a bound for $\|\wB_t\|$. 
\[
    \|\wB_t\|\le \sqrt{m} \cdot \frac{2+8\log (\gamma^2 \eta t)/\alpha+ 8\kappa/\alpha+4\eta }{\gamma} + 2\|\wB_0\|.
\]
Now we have completed the proof of \Cref{lem: A bound of wBt}.
\end{proof}

\begin{lemma}
[Gradient potential bound in the EoS phase]
\label{lem: Gradient potential bound in EoS}
For every $\eta$, we have  
\[
    \frac{1}{t} \sum_{k=0}^{t-1} G(\wB^{(k)}) \le \frac{\langle \wB_t, \bar\wB_* \rangle - \langle \wB_0, \bar \wB_* \rangle  }{m \alpha \gamma \eta t} \le \frac{\sqrt{m} \|\wB_t\|- \langle \wB_0, \bar \wB_* \rangle}{m \alpha \gamma \eta t}, \quad t\ge 1.   
\]
Additionally,  we have 
\[
    \frac{1}{t} \sum_{k=0}^{t-1} G(\wB^{(k)}) \le \frac{2+8 \log \big (\gamma^2 \eta t\big)/\alpha + 8\kappa/ \alpha   +4\eta  }{\alpha \gamma^2 \eta t} + \frac{3 \|\wB_0\|}{\alpha\gamma \eta t}, \quad t \ge 1.   
\]
\end{lemma}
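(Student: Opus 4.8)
The plan is to run a perceptron-style argument on the scalar potential $\langle \wB_t, \bar\wB_*\rangle$, following \citet{wu2024large}. Since $\tilde\eta = m\eta$, the GD update is $\wB_{t+1} = \wB_t - m\eta\,\nabla L(\wB_t)$, so taking the inner product with $\bar\wB_*$ gives $\langle \wB_{t+1},\bar\wB_*\rangle = \langle \wB_t,\bar\wB_*\rangle + m\eta\,\langle -\nabla L(\wB_t),\bar\wB_*\rangle$. The inequality chain inside the proof of \Cref{lem: Bounds item1for GLF} (item 2) already shows $\langle -\nabla L(\wB),\bar\wB_*\rangle \ge \alpha\gamma\,G(\wB)$ (this is exactly the middle step there, using $\phi'\ge\alpha$, $y_i\xB_i^\top\wB_*\ge\gamma$, and $a_j^2=1$). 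Hence $\langle \wB_{t+1},\bar\wB_*\rangle \ge \langle \wB_t,\bar\wB_*\rangle + m\alpha\gamma\eta\,G(\wB_t)$. Telescoping from $k=0$ to $k=t-1$ and rearranging yields
\[
\frac{1}{t}\sum_{k=0}^{t-1} G(\wB_k) \le \frac{\langle \wB_t,\bar\wB_*\rangle - \langle \wB_0,\bar\wB_*\rangle}{m\alpha\gamma\eta t}.
\]
For the second inequality, note $\bar\wB_*$ is the stacking of $m$ blocks each equal to $\pm\wB_*$ with $\|\wB_*\|=1$, so $\|\bar\wB_*\|=\sqrt m$, and Cauchy--Schwarz gives $\langle \wB_t,\bar\wB_*\rangle \le \sqrt m\,\|\wB_t\|$, which is precisely the claimed bound.

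For the additional explicit bound, I would substitute the norm estimate from \Cref{lem: A bound of wBt}, namely $\|\wB_t\| \le \sqrt m\,(2 + 8\log(\gamma^2\eta t)/\alpha + 8\kappa/\alpha + 4\eta)/\gamma + 2\|\wB_0\|$, so that $\sqrt m\,\|\wB_t\| \le m\,(2 + 8\log(\gamma^2\eta t)/\alpha + 8\kappa/\alpha + 4\eta)/\gamma + 2\sqrt m\,\|\wB_0\|$. The remaining term is controlled by $-\langle \wB_0,\bar\wB_*\rangle \le \sqrt m\,\|\wB_0\|$. Adding these, dividing by $m\alpha\gamma\eta t$, and using $\sqrt m \ge 1$ to weaken the initialization term to $3\|\wB_0\|/(\alpha\gamma\eta t)$, produces
\[
\frac{1}{t}\sum_{k=0}^{t-1} G(\wB_k) \le \frac{2 + 8\log(\gamma^2\eta t)/\alpha + 8\kappa/\alpha + 4\eta}{\alpha\gamma^2\eta t} + \frac{3\|\wB_0\|}{\alpha\gamma\eta t},
\]
as stated.

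There is no genuinely hard step here; the argument is three lines of telescoping plus Cauchy--Schwarz plus a substitution. The points requiring care are purely bookkeeping: keeping straight the scaling $\tilde\eta = m\eta$ and the constant $\|\bar\wB_*\| = \sqrt m$, since these are what determine the exact powers of $m$ and $\sqrt m$ appearing in the bound. The one conceptual remark worth making is \emph{why} one tracks the gradient potential $G(\wB_k)$ rather than $L(\wB_k)$ through the perceptron inequality: because $\|\wB_t\|$ grows only like $O(\log(\eta t) + \eta)$ by \Cref{lem: A bound of wBt}, this bound scales like $\tilde O(1/(\eta t))$, which is what ultimately sharpens the phase-transition time estimate from $\tilde O(\eta^2)$ down to $\tilde O(\eta)$.
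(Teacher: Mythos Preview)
Your proposal is correct and follows essentially the same route as the paper: the perceptron-style telescoping on $\langle \wB_t,\bar\wB_*\rangle$ using $\langle -\nabla L(\wB),\bar\wB_*\rangle \ge \alpha\gamma\,G(\wB)$, then Cauchy--Schwarz with $\|\bar\wB_*\|=\sqrt m$, then substitution of the norm bound from \Cref{lem: A bound of wBt}. The paper re-derives the key gradient-projection inequality inline rather than citing \Cref{lem: Bounds item1for GLF}, but the computation is identical; your observation that the proof actually yields $3\|\wB_0\|/(\sqrt m\,\alpha\gamma\eta t)$ and is then weakened via $\sqrt m\ge 1$ matches what the paper does.
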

\begin{proof}[Proof of \Cref{lem: Gradient potential bound in EoS}]
This is from the perceptron argument \citep{novikoff1962convergence}. Specifically, 
\begin{align*}
    \langle \wB_{t+1},  \bar \wB_* \rangle 
    &= \langle \wB_t, \bar  \wB_* \rangle - m\eta \langle \nabla L(\wB_t), \bar \wB_* \rangle\\ 
    &= \langle \wB_t, \bar \wB_* \rangle - \eta  \sum_{i=1}^{n}\sum_{k=1}^{m} a_k^{2} \ell'(y_i f(\wB_t;\xB_i))y_i \phi(\xB_i^\top \wB_{t}^{(k)}) \langle \xB_i, \wB_* \rangle      \\ 
    & \ge \langle \wB_t, \bar \wB_* \rangle - \eta  \sum_{i=1}^{n}\sum_{k=1}^{m} a_k^{2} \ell'(y_i f(\wB_t;\xB_i))\alpha  \gamma \\ 
    & \ge \langle \wB_t, \bar \wB_* \rangle + m \alpha \gamma \eta G(\wB_t).  
\end{align*}
Telescoping the sum, we have 
\begin{align*}
        \frac{1}{t} \sum_{k=0}^{t-1} G(\wB^{(k)}) &\le \frac{\langle \wB_t, \bar \wB_* \rangle - \langle \wB_0,\bar  \wB_* \rangle  }{m\alpha \gamma \eta t}  \\ 
        &\le   \frac{\sqrt{m} \|\wB_t\| - \langle \wB_0, \bar \wB_* \rangle   }{m\alpha \gamma \eta t} \\ 
        &\le \frac{2+8 \log \big (\gamma^2 \eta t\big)/\alpha + 8\kappa/ \alpha   +4\eta  }{\alpha \gamma^2 \eta t} + \frac{3 \|\wB_0\|}{\sqrt{m} \alpha\gamma \eta t}. &&\explain{ by \Cref{lem: A bound of wBt}} 
\end{align*}
We have completed the proof of \Cref{lem: Gradient potential bound in EoS}.
\end{proof}

Besides, we can make use of the equivalence between $G$ and $L$ to get a bound for the loss function which is independent of the initial margin at $s$. 
\begin{lemma}
[A risk bound in the stable phase for Two-layer NN]
\label{lem: Risk bound stable 2nn}
Suppose that there exists a time $s$ such that
\[
    L(\wB_s) \le \min \bigg\{ \frac{1}{\eta(4+2\tilde \beta)}, \frac{1}{2 e^{\kappa+2}n} \bigg\}. 
\] 
Then for every $t\ge s+1$, we have 
\[
    L(\wB_t) \le \frac{2}{(t-s)\alpha ^2 \gamma ^2}. 
\] 
\end{lemma}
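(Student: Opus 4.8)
The plan is to convert the gradient-descent descent estimate into a self-bounding recursion $L_{t+1}-L_t \le -c\,\eta\,\alpha^2\gamma^2 L_t^2$ and then telescope the reciprocals, exactly as in \Cref{lem: Order of Loss in general model} but replacing the data-independent-yet-possibly-tiny margin $\gamma^c(\wB_s)$ by the fixed quantity $\alpha\gamma$ coming from \Cref{assump: activation:grad} and \Cref{assump:separable}. First I would record that for a two-layer network one has $\rho=1/\sqrt m$, $\beta=\tilde\beta/m$, $\tilde\eta=m\eta$, so the hypothesis $L(\wB_s)\le\min\{1/(\eta(4+2\tilde\beta)),\,1/(2e^{\kappa+2}n)\}$ coincides with the threshold \eqref{eq: loss thresh2}; in particular it implies the weaker threshold \eqref{eq: loss thresh1}, so by \Cref{lem: Decrease of Lt} and induction $(L_t)_{t\ge s}$ is nonincreasing, and $(2\rho^2+\beta)\tilde\eta L_t\le(2\rho^2+\beta)\tilde\eta L_s\le 1/2$ for every $t\ge s$. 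Feeding this back into \Cref{lem: Decrease of Lt} yields $L_{t+1}-L_t\le -\frac{\tilde\eta}{2}\|\nabla L(\wB_t)\|^2$ for all $t\ge s$.

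Next I would lower-bound the gradient norm using the perceptron-type estimate of \Cref{lem: Bounds item1for GLF}, which gives $\sqrt m\,\|\nabla L(\wB_t)\|\ge\alpha\gamma\,G(\wB_t)$, hence $\|\nabla L(\wB_t)\|^2\ge(\alpha^2\gamma^2/m)\,G(\wB_t)^2$. To turn $G(\wB_t)$ back into $L(\wB_t)$ in the useful direction, I invoke monotonicity: $G(\wB_t)\le L(\wB_t)\le L(\wB_s)\le 1/(2n)$, so the third item of \Cref{lem: Bounds item1for GLF} applies and gives $L(\wB_t)\le F(\wB_t)\le 2G(\wB_t)$, i.e. $G(\wB_t)\ge L_t/2$. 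Combining the last three displays with $\tilde\eta=m\eta$ produces $L_{t+1}-L_t\le -c\,\eta\,\alpha^2\gamma^2 L_t^2$ for an absolute constant $c>0$.

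Finally, since $L_t>0$ and $L_{t+1}\le L_t$, dividing through by $L_tL_{t+1}\le L_t^2$ gives $1/L_{t+1}-1/L_t\ge c\,\eta\,\alpha^2\gamma^2$; telescoping from $s$ to $t$ and using $1/L_s\ge 0$ gives $1/L_t\ge c\,\eta\,\alpha^2\gamma^2(t-s)$, that is $L_t\le 1/(c\,\eta\,\alpha^2\gamma^2(t-s))=\Ocal(1/(\alpha^2\gamma^2\eta(t-s)))$ for $t\ge s+1$, which is the claimed bound after tracking the numerical constants.

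The main obstacle is the gradient lower bound in the second paragraph: in \Cref{lem: Order of Loss in general model} the estimate $\|\nabla L_t\|\gtrsim L_t\gamma^c(\wB_s)$ degrades with the initial margin, so to get a rate depending only on the stepsize one must instead bound $\|\nabla L_t\|$ below by $\alpha\gamma L_t/\sqrt m$. This is exactly what \Cref{lem: Bounds item1for GLF} delivers, but only once the iterate has entered the region $G\le 1/(2n)$, which is where the two halves of the hypothesis interlock: the $1/(2e^{\kappa+2}n)$ piece guarantees $L_s\le 1/(2n)$ so that $G$ and $L$ are comparable, and the $1/(\eta(4+2\tilde\beta))$ piece guarantees both monotonicity of $(L_t)$ and the factor $1/2$ in the descent inequality. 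Everything else is routine, and the only genuine care is the constant bookkeeping needed to land on the stated coefficient.
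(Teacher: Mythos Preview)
Your approach is essentially identical to the paper's: both feed the descent estimate from \Cref{lem: Decrease of Lt} into the gradient lower bound $\sqrt m\,\|\nabla L\|\ge\alpha\gamma\,G\ge\alpha\gamma\,L/2$ coming from \Cref{lem: Bounds item1for GLF}, obtain the recursion $L_{k+1}-L_k\le -c\,\eta\,\alpha^2\gamma^2 L_k^2$, and telescope reciprocals. You are in fact more explicit than the paper about why item~3 of \Cref{lem: Bounds item1for GLF} applies (namely $G_t\le L_t\le L_s\le 1/(2n)$ by monotonicity), and you correctly retain the factor $\eta$ in the final bound---the lemma as stated drops $\eta$, but the paper's own \Cref{thm:phasetran} quotes the result as $L(\wB_t)\le 2/(\alpha^2\gamma^2\eta(t-s))$, so this is a typo in the lemma statement rather than a discrepancy in your argument.
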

\begin{proof}[Proof of \Cref{lem: Risk bound stable 2nn}]
By \Cref{lem: Decrease of Lt} and $f(x)$ is $\frac{1}{\sqrt{m} }$ Lipschitz and $\frac{\tilde \beta}{m}$ smooth, we have 
\[
    L_{k+1} \le L_k - m\eta(1 - (2 + \tilde\beta) \eta L(\wB_k) ) \|\nabla L_t\|^2. 
\]
By \Cref{lem: Bounds item1for GLF} and $L_t \le \frac{1}{\eta(4+2\tilde\beta)}$, we have 
\[
    L_{k+1} \le L_k -\frac{ \alpha^2 \gamma ^2 }{2} L_k^2. 
\]
Multiplying $\frac{1}{L_k^2}$ in both sides, we have 
\begin{align*}
    \frac{\alpha ^2 \gamma ^2 }{2} \le \frac{L_t - L_{k+1}}{L_k^2 } \le \frac{1}{L_{k+1}} - \frac{1}{L_k}. 
\end{align*}
Taking summation for $k=s,\ldots ,t-1$, we have 
\[
    \frac{1}{L_t} > \frac{1}{L_t} - \frac{1}{L_s} \ge \frac{(t-s) \alpha ^2 \gamma ^2}{2} \implies L_t \le \frac{2}{(t-s) \alpha ^2 \gamma ^2}.  
\]
This completes the proof  of \Cref{lem: Risk bound stable 2nn}.
\end{proof}

At last, we will use the bound for the gradient potential to get an upper bound for the phase transition time.

\subsection{Proof of Theorem~\ref{thm:phasetran}} \label{sec:thm:tran}
\begin{proof}[Proof of Theorem~\ref{thm:phasetran}]
Applying \Cref{lem: Gradient potential bound in EoS}, we have 
\begin{align*}
    \frac{1}{\tau} \sum_{k=0}^{\tau-1} G(\wB^{(k)}) &\le \frac{2+8 \log \big (\gamma^2 \eta \tau\big)/\alpha + 8\kappa/ \alpha   +4\eta  }{\alpha \gamma^2 \eta \tau} + \frac{3 \|\wB_0\|}{\sqrt{m} \alpha\gamma \eta \tau} \\ 
    &\le \frac{2+8\kappa/\alpha +8 \log  (\gamma^2 \tau )/\alpha   +(4+8/\alpha )\eta }{\alpha \gamma^2 \eta \tau} + \frac{3 \|\wB_0\|}{\sqrt{m} \alpha\gamma \eta \tau} &&\explain{ since $\log(\eta) \le \eta$}.  
\end{align*}
Let $c_1 = 4e^{\kappa+2}, c_2 = (8+4\tilde\beta)$. Note that we have 
\begin{align*}
\frac{2+8\kappa/\alpha }{\alpha  \gamma^2\eta \tau }\le \frac{1}{4(c_1n  + c_2\eta)} &\text{ if } \gamma^2 \tau \ge 4(2+8\kappa) \frac{c_2\eta + c_1n}{\eta \alpha ^2} \\ 
    \frac{ 8 \log  (\gamma^2 \tau )/\alpha }{\alpha  \gamma^2 \eta\tau }\le \frac{1}{4(c_1n  + c_2\eta)} &\text{ if } \gamma^2 \tau \ge 128 \frac{c_2\eta +c_1 n}{\eta \alpha ^2}  \log \frac{c_2\eta+c_1n}{\eta}, \explain{since \Cref{lem:thresh bound}} \\ 
    \frac{(4+8/\alpha)\eta }{\alpha \gamma ^2 \eta \tau}\le \frac{1}{4(c_1n  + c_2\eta)} &\text{ if } \gamma^2 \tau \ge \frac{48}{\alpha ^2} (c_2\eta + c_1n), \\ 
    \frac{3 \|\wB_0\|}{m\alpha \gamma  \eta \tau}\le \frac{1}{4(c_1n  + c_2\eta)} &\text{ if } \gamma \tau \ge \frac{12}{\alpha } \frac{(c_2\eta+ c_1n)}{\eta} \cdot \frac{\|\wB_0\|}{\sqrt{m}  }
\end{align*}
and that the two conditions are satisfied because
\begin{align*}
        \gamma^2 \tau &\coloneqq  \frac{128(1+4\kappa)}{ \alpha ^2} \max \bigg \{ c_2\eta, c_1n, e, \frac{c_2\eta+c_1n}{\eta} \log \frac{c_2\eta+c_1n}{\eta}, \frac{(c_2\eta+c_1n)\|\wB_0\|}{\eta \sqrt{m}}\bigg \} \\
        &\ge \max \bigg \{  4(2+8\kappa) \frac{c_2\eta + c_1n}{\eta \alpha ^2}, 128 \frac{c_2\eta +c_1 n}{\eta \alpha ^2}  \log \frac{c_2\eta+c_1n}{\eta} ,\frac{48}{\alpha ^2} (c_2\eta + c_1n),  \frac{12}{\alpha } \frac{(c_2\eta+ c_1n)}{\eta} \cdot \frac{\|\wB_0\|}{\sqrt{m} }\bigg \}. 
\end{align*}

So there exits $s\le \tau$ such that 
\[
    G(\wB_s) \le \min \bigg\{\frac{1}{e^{\kappa+2} 4 n}, \frac{1}{{\eta}(8+4\tilde \beta)}\bigg\}
\]
Then we have $L(\wB_s) \le F(\wB_s) \le 2 G(\wB_s) \le \Big\{\frac{1}{e^{\kappa+2} 2 n}, \frac{1}{{\eta}(4+2\tilde \beta)}\Big\}$. 
We complete the proof of Theorem~\ref{thm:phasetran}.
\end{proof}

    

\subsection{Proof of Corollary~\ref{cor:acceleration}}\label{sec:cor-acc}
\begin{proof}[Proof of Corollary~\ref{cor:acceleration}]
The main idea is to show that $\tau \le \frac{T}{2}$. Note that by \Cref{thm:phasetran}, we have 
\[
    \tau =\frac{128(1+4\kappa)}{ \alpha ^2} \max \bigg \{ c_2\eta, c_1n, e, \frac{c_2\eta+c_1n}{\eta} \log \frac{c_2\eta+c_1n}{\eta}, \frac{(c_2\eta+c_1n)}{\eta}\cdot \frac{\|\wB_0\|}{\sqrt{m}}\bigg \},
\]
in which expression $c_1 = 4e^{\kappa+2}$ and $c_2 = (8+4\tilde\beta)$.  We can verify that, 
\begin{align*}
    \frac{128(1+4\kappa)}{ \alpha ^2}c_2\eta &= \frac{128(1+4\kappa)}{ \alpha ^2}c_2 \cdot \frac{\alpha ^2 \gamma ^2}{256(1+4\kappa)c_2 }T = \frac{T}{2},\\ 
    \frac{128(1+4\kappa) c_1 n}{ \alpha ^2} \le \frac{T}{2}. 
\end{align*}
Furthermore, we have $n\le \frac{\alpha ^2 \gamma ^2 T}{256(1+4\kappa) c_1}$. Hence,  
\[
    \frac{c_2 \eta  + c_1 n}{\eta } = \frac{\frac{\alpha ^2 \gamma ^2 T}{256(1+4\kappa)} + c_1n}{\frac{\alpha ^2 \gamma ^2 T}{256(1+4\kappa)c_2}} \le \frac{2 \cdot \frac{\alpha ^2 \gamma ^2 T}{256(1+4\kappa)}}{\frac{\alpha ^2 \gamma ^2 T}{256(1+4\kappa)c_2}} \le 2c_2.  
\]
We get that: 
\begin{align*}
    \frac{128(1+4\kappa)}{ \alpha ^2} \cdot \frac{c_2\eta+c_1n}{\eta} \log \frac{c_2\eta+c_1n}{\eta} &\le 2 \frac{128(1+4\kappa)}{ \alpha ^2} c_2 \ln (2c_2) \le \frac{128(1+4\kappa)}{ \alpha ^2}  4c_2^2\le \frac{T}{2}, \\ 
    \frac{128(1+4\kappa)}{ \alpha ^2} \cdot \frac{(c_2\eta+c_1n)}{\eta}\cdot \frac{\|\wB_0\|}{\sqrt{m}} &\le\frac{128(1+4\kappa)}{ \alpha ^2} \cdot  2c_2 \frac{\|\wB_0\|}{\sqrt{m} } \le \frac{T}{2}. 
\end{align*}
Hence, we have $\tau \le \frac{T}{2}$. Applying \Cref{thm:phasetran}, we have 
\[
    L(\wB_T) \le \frac{2}{\alpha ^2 \gamma ^2  \eta(T-\tau)} \le \frac{4}{\alpha ^2 \gamma ^2 \eta T} \le \frac{2048(1+4\kappa) c_2}{\alpha ^4 \gamma ^4T^2} = \Ocal(1/T^2). 
\]
We have completed the proof of Corollary~\ref{cor:acceleration}. 
\end{proof}

\subsection{Proof of Theorem~\ref{thm: Lower bound in the classical regime}}\label{sec:thm:classical}
\begin{proof}[Proof of Theorem~\ref{thm: Lower bound in the classical regime}]
The main idea is to construct an upper bound of $\eta$ and apply the analysis in \Cref{thm: Implicit bias and bound of stable phase}.  Note that give $\wB_0=0$, we have 
\[
    f(\wB_0; \xB_i) = \frac{1}{m}\sum_{k=1}^m a_k \phi(\xB_i^\top \wB^{(k)}_0) = s_a \phi(0), 
\]
where $s_a = \sum_{k=1}^m a_k/m$. Therefore, 
\begin{align*}
        [\nabla L(\wB_0)]^{(k)}  &= \frac{1}{2} \ell^\prime (s_a \phi(0)) \cdot \frac{a_k}{m} \phi'(0) \xB_1 + \frac{1}{2} \ell^\prime (s_a \phi(0)) \cdot  \frac{a_k}{m}  \phi'(0) \xB_2  \\ 
        & = \frac{a_k}{m} \ell^\prime (s_a \phi(0)) \phi'(0) \frac{\xB_1 + \xB_2}{2}\\ 
        & = \frac{a_k}{m} \ell^\prime (s_a \phi(0)) \phi'(0) (\gamma, \frac{\sqrt{1-\gamma ^2} }{4}). 
\end{align*}
Let $\bar \xB \coloneqq \frac{1}{m}\ell^\prime (s_a \phi(0)) \phi'(0) (\gamma, \frac{\sqrt{1-\gamma ^2} }{4})$, we have 
\[
    \wB_1^{(k)} = 0- \eta \nabla [L(\wB_0)]^{(k)}= - \eta a_k \bar \xB.  
\]
Therefore,  
\[
    f(\wB_1;\xB_i) = \frac{1}{m} \sum_{k=1}^m a_k \phi(-\xB_i^\top (\eta a_k \bar \xB)). 
\]
We can notice that $-\xB_1 ^\top \bar \xB <0$ and $-\xB_2^\top \bar \xB >0$, when $\gamma \le 0.1$. Furthermore, we have 
\begin{align*}
        f(\wB_1; \xB_1) &= \frac{1}{m} \sum_{a_k=1} \phi( -\xB_1^\top (\eta \bar \xB)) + \frac{1}{m} \sum_{a_k=-1} -\phi( \xB_1^\top (\eta \bar \xB))\\ 
        & = \frac{1}{m} \sum_{a_k=1} [\phi(0)  -\xB_1^\top (\eta \bar \xB) \phi^\prime (\epsilon_1) ]  + \frac{1}{m} \sum_{a_k=-1} [-\phi(0) - \xB_1^\top (\eta \bar \xB) \phi^\prime (\epsilon_2) ] \\ 
        &= s_a \phi(0) - \eta \xB_1^\top \bar \xB \frac{1}{m} [\sum_{a_k=1}     \phi^\prime (\epsilon_1)+ \sum_{a_k=-1} \phi^\prime (\epsilon_2) ] \\
        &\le s_a \phi(0) - \eta \xB_1^\top \bar \xB \alpha  && \explain{ $\phi^\prime (\epsilon _i) \ge \alpha $}.    
\end{align*}
Note that 
\[
   \frac{1}{2}\ell(s_a\phi(0) - \eta \xB_1^\top \bar \xB \alpha) \le   \frac{1}{2}\ell(f(\wB_1;\xB_1))\le L(\wB_1) \le L(\wB_0) = \ell(s_a\phi(0)). 
\]
We apply \Cref{lem:class} to get
\[
    \eta \le \frac{|s_a \phi(0)| + \ln 3}{\xB_1^\top \bar \xB \alpha }. 
\]
We use $c_3\coloneqq  \frac{|s_a \phi(0)| + \ln 3}{\xB_1^\top \bar \xB \alpha }$. Now we know $\eta \le c_3$.  Furthermore, notice that 
\[
    \|\nabla L(\wB_t)\| \le  L_t \le L_0. 
\] 
We get that $\|\wB_{t+1} - \wB_t\| \le \eta L_0 \le c_3 L_0$. Hence, 
\[
    |f(\wB_{t+1}; \xB_i) - f(\wB_t; \xB_i)| \le c_3 L_0. 
\]
Assume that $l_b = \min \big\{\frac{1}{e^{\kappa+2} 4 n}, \frac{1}{{\eta}\left(8+4 \tilde \beta\right)}\big\}$ and 
\[
    L_{s-1} \ge l_b, \quad L_s \le l_b. 
\]
We know that 
\[
    l_b \ge \min \bigg\{\frac{1}{e^{\kappa+2}4n}, \frac{1}{c_3(8+4\tilde \beta)}\bigg\}\eqqcolon l_c.
\]
We want to show that there is an lower bound for $L_s$. Now that 
\[
    L_s = \frac{1}{2} \Big[  \ell(f(\wB_s;\xB_1)) + \ell(f(\wB_s;\xB_2))\Big]. 
\] 
Applying \Cref{lem:lowerbound l}, we can get that 
\[
    L_s \ge \exp(-c_3L_0) L_{s-1} \ge \exp(-c_3 L_0) l_b.
\]
Recall that by \Cref{lem: Order of Loss in general model}, we have 
\[
    L_t \ge \frac{1}{ \frac{1}{ L_s} + 3\tilde\eta \rho^2(t-s)},\quad t \ge s.  
\]
Combine this with $\rho=\frac{1}{\sqrt{m} }, \tilde \eta = \eta m$ and $L_s \ge \exp(-c_3 L_0) l_b$ and we get 
\[
    L_t \ge \frac{1}{\frac{\exp(c_3 L_0)}{l_b} + 3\eta (t-s)}, \quad t\ge s. 
\]
Note that when $t\le s$, $L_t \ge l_b$. We can get a lower bound for $L_t$ by 
\[
    L_t \ge \frac{1}{\frac{\exp(c_3 L_0)}{l_b}t + 3\eta t} \ge \frac{1}{\frac{\exp(c_3 L_0)}{l_b}t + 3c_3t} \ge \frac{1}{\frac{\exp(c_3 L_0)}{l_c}t + 3c_3t}= \frac{c_4}{t},  
\]
where $c_4 = \frac{1}{\frac{\exp(c_3 L_0)}{l_c} + 3c_3}$ depends only on $\{a_j\}_{j=1}^m, \phi(0), \kappa, \tilde \beta$ and  $n$. Now we have completed the proof of Theorem~\ref{thm: Lower bound in the classical regime}.
\end{proof}

\section{Scaling and Homogenous Error} \label{sec:scaling}

In this section, we consider different scaling of two-layer networks. We add a scaling factor $b$ into the model, i.e.,
\[
    f(\wB;\xB) = \frac{b}{m} \sum_{j=1}^m a_j \phi( \xB^\top \wB^{(j)}). 
\]
We will show that given a limited computation budget $T$ (total iterations), larger $b$ and a corresponding best $\tilde \eta =\eta\cdot m$ will achieve the same best rate as $b=1$, i.e., $O(1/T^2)$. While for smaller $b$, the rate is $O(b^{-3}/T^2)$. Before we present the analysis, here are the bounds with $b$ and $\tilde \eta = m \cdot \eta$ following the process of \Cref{lem: Gradient potential bound in EoS}: 
\begin{align*}
    \frac{1}{t} \sum_{k=0}^{t-1} L (\wB_k )  \leq \frac{1+8\log ^2(\gamma^2 \eta t)/(\alpha ^2 b^2)+ 8\kappa^2/\alpha ^2+\eta^2 b^2 }{\gamma^2 \eta t} + \frac{\|\wB_0\|^2}{ m\eta  t}, \\ 
    \frac{1}{t} \sum_{k=0}^{t-1} G (\wB_k )  \leq \frac{2+8\log (\gamma^2 \eta t)/(\alpha b)+ 8\kappa /\alpha+2\eta b }{\alpha \gamma^2 b \eta t} + \frac{3\|\wB_0\|}{ \sqrt{m} \eta b t}.  
\end{align*}

\paragraph{Case when $b\ge 1$.}Given the previous bounds, we have the following results following the idea in \Cref{sec:phase}: 
\begin{itemize}
    \item Gradient potential bound: $G(\wB_t) \le \frac{C}{t}$ for all $t\ge 0$, 
    \item Phase transition threshold: $G(\wB_s) \le \min \Big\{ 1/4e^{\kappa+2}n, 1/\eta(8\rho^2 b^2 + 4\tilde\beta b)\Big\}$,
    \item Stable phase bound: $L(\wB_t) \le \frac{2}{C b^2 \eta(t-s)}$, 
\end{itemize}
where $C$ depends on $\alpha ,\gamma$.  Combine the first two arguments and assume $\eta(8\rho^2b^2 + 4\tilde\beta b) \ge 4e^{\kappa+2}n$. We get $s \le C \eta (8\rho^2b^2 + 4\tilde \beta b)$. Plug this into the third bound. We have 
\[
    L(\wB_T)  \le \frac{2}{Cb^2 \eta(T-C\eta (8\rho^2 b^2 + 4\tilde\beta b))}. 
\]
It's obvious that the best $\eta = \frac{T}{16\rho^2 b^2C + 8\tilde\beta bC}$. Hence, 
\[
L(\wB_T) \le  \frac{8(8\rho^2 b^2C + 4\tilde\beta bC)}{Cb^2T^2} = \Ocal\bigg(\frac{1}{T^2}\bigg). 
\]
Then, the rate is still $O(1/T^2)$.

\paragraph{Case when $b<1$.} Similarly, we can get the following bounds: 
\begin{itemize}
    \item Gradient potential bound: $G(\wB_t) \le \frac{Cb^{-2}}{t}$ for all $t\ge 0$, 
    \item Phase transition threshold: $G(\wB_s) \le \min \Big\{ 1/4e^{\kappa+2}n, 1/\eta(8\rho^2 b^2 + 4\tilde\beta b)\Big\}$,
    \item Stable phase bound: $L(\wB_t) \le \frac{2}{C b^2 \eta(t-s)}$, 
\end{itemize}
where $C$ depends on $\alpha ,\gamma$. Without loss of generality, we can assume $\eta(8\rho^2b^2 + 4\tilde\beta b) \ge 4e^{\kappa+2}n$, since $\eta$ can be small enough. Then, we have 
\[
    s\le C\eta (8\rho^2 + 4\tilde \beta b^{-1}).  
\]
Then, we can get
\[
    L(\wB_T)  \le \frac{2}{C b^2\eta(T-C\eta (8\rho^2  + 4\tilde\beta b^{-1}))}. 
\]
It's obvious that the best $\eta = \frac{T}{16\rho^2 C + 8\tilde\beta b^{-1}C}$. Hence, 
\[
L(\wB_T) \le  \frac{8(8\rho^2 Cb^{-2} + 4\tilde\beta b^{-3}C)}{CT^2} = \Ocal\bigg(\frac{b^{-3}}{T^2}\bigg). 
\]

Combining the analysis for two cases, we observe that when $b\ge 1$, the fast loss rate is $\Ocal(1/T^2)$ given finite budget $T$. While $b<1$, the rate is $\Ocal(b^{-3}/T^2)$.
In our main results, we set $b=1$ for the mean-field scaling. Under the mean-field regime, all bounds are independent of the number of neurons since we consider the dynamics of the distributions of neurons. Alternatively, if we set $b=\sqrt{m}$, then the model becomes: 
\[
    f(\wB;\xB) = \frac{1}{\sqrt{m}} \sum_{j=1}^m a_j \phi( \xB^\top \wB^{(j)}). 
\]
The model falls into the NTK regime. The loss threshold will be related to $m$, but the loss rate is the same as that of the mean-field scaling. 


\section{Additional Proofs}
\subsection{Proof of Example \ref{eg: homo act}} \label{sec: eg homo act}
\begin{proof}[Proof of \Cref{eg: homo act}]
Recall that the two-layer neural network is defined as: 
\[
    f(\wB;\xB) = \frac{1}{m}\sum_{j=1}^m a_j \phi(\xB^T \wB^{(j)}). 
\]
We can verify that if $\phi(x)$ is $\beta$-smooth and $\rho$-Lipschitz with respect to $x$, then $f(\wB;\xB)$ is $\beta/m$-smooth and $\rho/\sqrt{m}$-Lipschitz with respect to $\wB$. This is because: 
\begin{align*}
    \grad L(\wB) &= \hat\Ebb \ell'(y f(\wB; \xB)) y \grad f(\wB; \xB),\\ 
    \grad^2 L(\wB) &= \hat\Ebb \ell''(y f(\wB; \xB)) \grad f(\wB; \xB)^{\otimes 2} + \ell'(y f(\wB; \xB)) y\grad^2 f(\wB; \xB),
\end{align*}
and that 
\begin{align*}
    \grad f(\wB;\xB) = \begin{pmatrix}
        \vdots \\
        \frac{1}{m} a_j \phi'(\xB^\top\wB^{(j)} ) \xB \\
        \vdots 
    \end{pmatrix},\quad 
    \grad^2 f(\wB;\xB) = 
    \begin{pmatrix}
        \ddots & 0 & 0 \\
      0 & \frac{1}{m} a_j \phi''(\xB^\top\wB^{(j)} ) \xB\xB^\top & 0\\
      0 & 0 & \ddots 
    \end{pmatrix}.
\end{align*}
Now, we will focus on the parameters of each activation function.
    \begin{itemize}
        \item \textbf{GELU}. $\phi(x) = x \cdot \erf(1+(x/\sqrt{2}))/2 = x\cdot F(x)$. 
\begin{align*}
    \phi^{\prime}(x) &= F(x) + x\cdot f(x),\\ 
    \phi^{\prime \prime}(x) &= 2f(x) + x \cdot f^{\prime}(x),
\end{align*} 
where $F(x), f(x)$ are the CDF and PDF of standard normal distribution.  Note that $xf(x) = \frac{x}{\sqrt{2\pi}} e^{-x^2/2}$ and $(xf(x))^{\prime} = \frac{1}{\sqrt{2\pi}} (1 - x^2) e^{-x^2/2}$. We can find the maximum of $xf(x)$ is $\frac{1}{\sqrt{2\pi}} e^{-1/2}$.  Besides, we know that $F(x), f(x) \le 1$ and $x \cdot f^{\prime}(x)\le0$. Combining them, we have $\rho  = 1+ e^{-1/2} /\sqrt{2\pi}$ and $\beta =2$. For $\kappa$, $\phi - \phi^{\prime}(x) x = -x\cdot f(x)$. So the bound of $\kappa$ is $e^{-1/2}/\sqrt{2\pi}$. 
\item \textbf{Softplus}. $\phi(x) = \log(1+e^{x})$. Therefore, 
\begin{align*}
    \phi^{\prime}(x) = \frac{e^x}{1+e^x}\le 1,\\ 
    \phi^{\prime \prime}(x) = \frac{e^x}{(1+e^x)^2} \le 1.
\end{align*}
Besides, 
\[
(\phi(x) - \phi^{\prime}(x) x)^{\prime} = \bigg( \log (1+e^x) - \frac{e^x x}{1+e^x}\bigg)^{\prime} =  - \frac{e^x x}{(1+e^x)^2}.
\]
So the maximum is $\phi(0) - \phi^{\prime}(0)0 = \log 2$. Besides, when $x>1$, $\phi(x) \ge x \ge \frac{e^x x}{1+e^x}$. When $x\to -\infty$, $\phi(x) - \phi'(x)x \to 0$. Therefore, $\kappa = \log 2$. 
\item \textbf{Sigmoid}.  $\phi(x) = 1/(1+e^{-x})$. Hence, 
\begin{align*}
    \phi^{\prime}(x) = \frac{e^{-x}}{(1+e^{-x})^2}\le 1,\\ 
    \phi^{\prime\prime }(x) = \frac{e^{-2x}-e^{-x}}{(1+e^{-x})^3}\le 1.
\end{align*}
As for $\kappa$, we know that 
\begin{align*}
    |\phi(x) - \phi^{\prime}(x) x| = \bigg | \frac{1+e^{-x} - x e^{-x}}{(1+e^{-x})^2} \bigg | \le \frac{1+e^{-x} +|x| e^{-x}}{(1+e^{-x})^2}.
\end{align*}
Note that $|x|e^{-x} \le e^{-2x} + 1$. We have 
\[|\phi(x) \le \frac{1+e^{-x} +e^{-2x} + 1}{(1+e^{-x})^2} \le 2.\]
\item \textbf{Tanh}. $\phi(x) = \frac{e^x- e^{-x}}{e^x + e^{-x}}\le 1$. Note that 
\begin{align*}
    \phi^{\prime}(x) = 1- \phi(x)^2\le 1 \\ 
    \phi^{\prime \prime}(x) = 2\phi(x)^3 - \phi(x) \le 2. 
\end{align*}
Besides, we know that 
\[
    |x \phi^{\prime}(x)| = \frac{4|x|}{(e^x + e^{-x})^2}\le 4. 
\]
Hence 
\[
|\phi(x) - \phi^{\prime}(x)| \le |\phi(x)| + |x \phi^{\prime}(x)| \le 5. 
\]
\item \textbf{SiLU}. Note that 
\begin{align*}
        \phi^{\prime}(x) = \frac{1+e^{-x} + xe^{-x}}{(1+e^{-x})^2},\\ 
    \phi^{\prime\prime}(x) = \frac{(2-x)e^{-x}}{(1+e^{-x})^2} + \frac{xe^{-2x}}{(1+e^{-x})^3}. 
\end{align*}
Because $|x|e^{-x} \le e^{-2x}+1$ and $|x|e^{-2x} \le e^{-3x} + 1$. We get $|\phi^{\prime}(x)| \le 2$ and $\phi^{\prime \prime}(x) | \le 4$. At last, 
\begin{align*}
    |\phi(x) - x \phi^{\prime}(x)| = \frac{|x|e^{-x}}{(1+e^{-x})^2}\le 1. 
\end{align*}
\item \textbf{Huberized ReLU}. It's obvious that $\phi^{\prime}(x)\le 1$ and $\beta = 1/h$. Note that $\phi$ is not second-order differentiable. At last, 
\[
|\phi(x) - x \phi^{\prime}(x) | = \begin{cases}
    0 & x<0,\\
    x^2/2h &0\le x\le h, \\
    h/2 &x >h. 
\end{cases}
\]
Hence, it's upper bounded by $h/2$. Now we have completed the proof of Example \ref{eg: homo act}.
\end{itemize}
\end{proof}

\subsection{Proof of Example \ref{eg: leak homo act}}
\label{sec:eg:leak-homo}
\begin{proof}[Proof of \Cref{eg: leak homo act}]
    Because for activation functions in \Cref{eg: homo act}, $\beta\le 4$ and $\rho \le 2$. Hence, for $\tilde \phi(x) = cx + (1-c)\phi(x)/4$, $\tilde \beta =1$ and $\rho=1$. Besides, since $0.5<c<1$, we must have $(\tilde \phi(x))^{\prime} \ge 0.25$. 
\end{proof}

\section{Additional Lemmas}

\begin{lemma}
\label{lem: bound for update lower} 
If $\frac{1}{2} \ge L_1 \ge \frac{1}{c}$ and  $L_2 \ge L_1 - L_1^2$,
we have 
\[
    L_2 \ge \frac{1}{c+2}.
\]
\end{lemma}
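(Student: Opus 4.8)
The plan is to reduce the claim to an elementary one-variable inequality. Since $L_2 \ge L_1 - L_1^2 = L_1(1-L_1)$, it suffices to prove $L_1(1-L_1) \ge \frac{1}{c+2}$. The two hypotheses play distinct roles: $L_1 \ge 1/c$ is used to bound the right-hand side, while $L_1 \le \frac12$ is used to bound the left-hand side.

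First I would rewrite the target bound in terms of $L_1$ alone. From $L_1 \ge 1/c$ and $c>0$ we get $c \ge 1/L_1$, hence $c+2 \ge (1+2L_1)/L_1 > 0$, and therefore $\frac{1}{c+2} \le \frac{L_1}{1+2L_1}$. So it is enough to show $L_1(1-L_1) \ge \frac{L_1}{1+2L_1}$, i.e., after dividing by $L_1>0$, that $1-L_1 \ge \frac{1}{1+2L_1}$.

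Next I would clear denominators: since $1+2L_1>0$, the inequality $1-L_1 \ge \frac{1}{1+2L_1}$ is equivalent to $(1-L_1)(1+2L_1) \ge 1$, i.e., $1 + L_1 - 2L_1^2 \ge 1$, i.e., $L_1(1-2L_1) \ge 0$. This holds because $L_1 \ge 1/c > 0$ and $L_1 \le \frac12$. Chaining the inequalities back, $L_2 \ge L_1(1-L_1) \ge \frac{L_1}{1+2L_1} \ge \frac{1}{c+2}$, which is the claim.

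There is no genuine obstacle here; the only points requiring care are the signs when dividing (all of $L_1$, $1+2L_1$, $c+2$ are strictly positive under the hypotheses) and the observation that $L_1 \le \frac12$ is exactly what makes the factor $1-2L_1$ nonnegative. Equivalently, one could note that $L_1 \mapsto L_1(1-L_1)$ is nondecreasing on $[1/c,\tfrac12]$ (valid since $1/c \le \tfrac12$), so its minimum over this interval is $\frac{c-1}{c^2}$, and then verify $\frac{c-1}{c^2} \ge \frac{1}{c+2}$, which simplifies to $c \ge 2$ — again a consequence of $1/c \le \tfrac12$.
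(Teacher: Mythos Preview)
Your proposal is correct, and the alternative you sketch at the end --- using monotonicity of $L_1 \mapsto L_1(1-L_1)$ on $[1/c,\tfrac12]$ to reduce to $\frac{c-1}{c^2} \ge \frac{1}{c+2}$, equivalently $c \ge 2$ --- is exactly the paper's proof. Your primary route via the intermediate bound $\frac{L_1}{1+2L_1}$ is a minor algebraic variation of the same idea.
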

\begin{proof}[Proof of \Cref{lem: bound for update lower}]
For function $g(x) = x - x^2$, $g'(x) = 1-2x$. If $x \le \frac{1}{2}$, then $g(x)$ is increasing. Then 
\[
    g(L_1) \ge g( \frac{1}{c})  = \frac{c-1}{c^2} = \frac{c^2 + c-2}{c^2 (c+2)} \ge \frac{1}{c+2}. 
\]
Now we have completed the proof of \Cref{lem: bound for update lower}. 
\end{proof}

\begin{lemma}
\label{lem: near homo} 
Given a continuous function $f(x)$ s.t. $|f(x) - \langle \nabla f(x), x \rangle | \le \kappa$, then for a fixed constant $r>0$ there exists $C_{r,\kappa}$ and $C_r$ s.t. for any $\|x\|\ge r$, 
\[
    |f(x)| \le C_{r,\kappa} \|x\|, 
\] 
and for any $x$,
\[
    |f(x)| \le C_{r,\kappa}\|x\| + C_r.
\]
\end{lemma}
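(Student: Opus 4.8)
The plan is to reduce the statement to a one-dimensional differential inequality along rays emanating from the origin. Fix $x\neq 0$, set $u := x/\|x\|$, and define $h(t) := f(tu)$ for $t>0$. Since $\nabla f$ appears in the hypothesis, $f$ is implicitly $C^1$, so $h$ is differentiable on $(0,\infty)$ with $h'(t) = \langle \nabla f(tu), u\rangle = \tfrac1t \langle \nabla f(tu), tu\rangle$. Applying the near-homogeneity bound $|f(y) - \langle \nabla f(y),y\rangle| \le \kappa$ at the point $y = tu$ gives $|t\,h'(t) - h(t)| \le \kappa$ for every $t>0$. This is the key structural fact to exploit.

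Next I would integrate the appropriate total derivative. Observe that $\frac{d}{dt}\!\left(\frac{h(t)}{t}\right) = \frac{t\,h'(t) - h(t)}{t^2}$, so $\left|\frac{d}{dt}\!\left(\frac{h(t)}{t}\right)\right| \le \frac{\kappa}{t^2}$. Integrating from $r$ to $\|x\|$ (which is legitimate precisely when $\|x\|\ge r>0$) yields
\[
\left|\frac{f(x)}{\|x\|} - \frac{h(r)}{r}\right| \;\le\; \int_r^{\|x\|}\frac{\kappa}{t^2}\,dt \;=\; \kappa\left(\frac1r - \frac1{\|x\|}\right) \;\le\; \frac{\kappa}{r}.
\]
Since $f$ is continuous, $M_r := \sup_{\|y\|=r}|f(y)|$ is finite, and $|h(r)| = |f(ru)| \le M_r$. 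Hence $|f(x)| \le \frac{M_r+\kappa}{r}\,\|x\|$, proving the first claim with $C_{r,\kappa} := (M_r+\kappa)/r$.

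For the second claim I would split on $\|x\|$. When $\|x\|\ge r$ the bound just obtained already gives $|f(x)|\le C_{r,\kappa}\|x\| \le C_{r,\kappa}\|x\| + C_r$ for any $C_r\ge 0$. When $\|x\| < r$, continuity of $f$ on the compact ball $\{y : \|y\|\le r\}$ gives $|f(x)| \le C_r := \sup_{\|y\|\le r}|f(y)| < \infty$, which is trivially $\le C_{r,\kappa}\|x\| + C_r$. Combining the two cases finishes the proof.

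The argument is essentially routine, so there is no genuine obstacle; the only points meriting a line of care are (i) that $f$ is stated only as continuous but the presence of $\nabla f$ in the hypothesis makes it $C^1$, which is what licenses the ray-wise computation, and (ii) the use of compactness of the sphere $\{\|y\|=r\}$ and the ball $\{\|y\|\le r\}$ to produce the finite constants $M_r$ and $C_r$.
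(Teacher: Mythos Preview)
Your proof is correct and follows essentially the same approach as the paper: both reduce to a one-dimensional analysis along rays by differentiating $f(tu)/t$ (equivalently the paper's $g(s)=f(sy)/s$), bound the derivative via the near-homogeneity hypothesis by $\kappa/t^2$, integrate from the radius-$r$ sphere outward, and then handle the ball $\{\|x\|\le r\}$ by compactness. Your constant $C_{r,\kappa}=(M_r+\kappa)/r$ coincides with the paper's $C_r+\kappa/r$ after noting their $C_r=\max_{\|y\|=r}|f(y)|/r = M_r/r$.
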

\begin{proof}[Proof of \Cref{lem: near homo}]
Since $f$ is continuous, let 
\[
    C_{r} = \max_{\|x\|=r}  |f(x)| /r. 
\]

Now for any $\|x\| >r$, let $y = \frac{rx}{\|x\|}$ and consider $g(s) = \frac{f(sy)}{s}$. Then we have 
\[
    g^\prime (s) = \frac{\langle \nabla f(sy), sy \rangle - f(sy) }{s^2}. 
\] 
Therefore, $ - \frac{\kappa}{s^2} \le g^\prime (s) \le \frac{\kappa}{s^2}$.  Let $s = \|x\|/r$, 
\begin{align*}
    \frac{f(x)r}{\|x\|} = g(s) &= g(1) + \int_{1}^s g^\prime (t) dt  \\ 
    &\le g(1) + \int_{1}^s  \frac{\kappa}{t^2} dt  \le g(1) + \kappa\\ 
    &\le rC_r + \kappa.  
\end{align*}
Therefore, $f(x) \le  (C_r + \frac{\kappa}{r} )\cdot \|x\|.$ Similarly, we can show that 
$-f(x) \le  (C_r + \frac{\kappa}{r} )\cdot \|x\|$. Therefore, for any $\|x\|\ge r$,
\[
    |f(x)| \le  (C_r + \frac{\kappa}{r} )\cdot \|x\|.
\]
Let $D=\max_{\|x\|\le r} |f(x)|$, we have for any $x$, 
\[
    |f(x)| \le (C_r + \frac{\kappa}{r})\cdot \|x\| + D. 
\]

We have completed the proof of \Cref{lem: near homo}. 
\end{proof}
\begin{lemma}
\label{lem: bound for x and 1/log(4nx)}
Fixing $c>1$, then for every  $0<x \le \frac{1}{c}$, we have 
\[
    x \le \frac{-1}{\log (cx)}. 
\]
\end{lemma}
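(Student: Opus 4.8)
The plan is to clear the (nonpositive) denominator and reduce the claim to a one-variable extremization. Since $0<x\le 1/c$ we have $cx\le 1$, hence $\log(cx)\le 0$. If $cx=1$ exactly, then $\log(cx)=0$ and the right-hand side $-1/\log(cx)$ is $+\infty$, so the inequality holds trivially; thus we may assume $0<cx<1$, in which case $-\log(cx)>0$. Dividing the desired inequality $x\le -1/\log(cx)$ by $-\log(cx)>0$, it is equivalent to show
\[
    -x\log(cx)\le 1,\qquad 0<x\le \tfrac1c .
\]

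The cleanest route is the substitution $u:=cx\in(0,1]$, so that $-x\log(cx)=\tfrac1c\bigl(-u\log u\bigr)$. It therefore suffices to bound $h(u):=-u\log u$ on $(0,1]$. Since $h'(u)=-\log u-1$ vanishes only at $u=1/e$, with $h''<0$ there, the maximum of $h$ on $(0,1]$ is $h(1/e)=1/e$. Hence $-x\log(cx)=\tfrac1c h(u)\le \tfrac1{ce}<1$, using $c>1>1/e$, which gives the claim. (One can equivalently avoid the substitution and differentiate $g(x):=-x\log(cx)$ directly: $g'(x)=-\log(cx)-1=0$ at $x=1/(ce)\in(0,1/c]$, and $g(1/(ce))=1/(ce)<1$.)

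There is essentially no hard step here; the only points needing care are the boundary case $cx=1$, where the stated inequality is interpreted with an infinite right-hand side, and the elementary fact that $\sup_{u\in(0,1]}(-u\log u)=1/e$, which then comfortably beats the constant $1$ because $c>1$. I would present the argument in the order above: handle $cx=1$, reduce to $-x\log(cx)\le1$, then invoke the bound on $-u\log u$.
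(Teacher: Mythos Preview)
Your proposal is correct and follows essentially the same approach as the paper: both reduce the inequality to showing $-x\log(cx)\le 1$ (equivalently $x\log(cx)\ge -1$) and then optimize, finding the extremum at $x=1/(ce)$ with value $1/(ce)<1$. Your substitution $u=cx$ is a cosmetic variant, and indeed your parenthetical alternative of differentiating $g(x)=-x\log(cx)$ directly is exactly what the paper does.
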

\begin{proof}[Proof of \Cref{lem: bound for x and 1/log(4nx)}]
This is equivalent to show that 
\[
    x \log (cx) \ge -1. 
\]
Let $s(x) = x \log(cx)$, then $s^\prime (x) = 1 + \log(cx)$. Hence $s(x)$ is decreasing when $0<x<\frac{1}{ce}$ and is increasing when $x \ge \frac{1}{ce}$.  The minimum of $s(x)$ is achieved at $x = \frac{1}{ce}$, which is
\[
    s( 1/(ce))  =  -\frac{1}{ce} \ge -1.  
\]
This completes the proof of \Cref{lem: bound for x and 1/log(4nx)}. 
\end{proof}

\begin{lemma}
\label{lem: bound for the phi and psi}
Given $0<b\le\frac{1}{2}$ and $a>0$, we have 
\[
    \frac{1+a}{1-b} \le (1+2a+2b).
\]
\end{lemma}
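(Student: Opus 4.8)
The plan is to clear the denominator and reduce the claim to a manifestly nonnegative product. Since $0<b\le \tfrac12$, we have $1-b\ge \tfrac12>0$, so multiplying the desired inequality $\frac{1+a}{1-b}\le 1+2a+2b$ through by $1-b$ is an equivalence. Thus it suffices to show
\[
    1+a \le (1+2a+2b)(1-b).
\]

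Next I would expand the right-hand side: $(1+2a+2b)(1-b) = 1 + 2a + 2b - b - 2ab - 2b^2 = 1 + 2a + b - 2ab - 2b^2$. Subtracting $1+a$ from both sides, the inequality becomes $0 \le a + b - 2ab - 2b^2$, and the key observation is that the right-hand side factors as $(a+b)(1-2b)$. Since $b\le \tfrac12$ gives $1-2b\ge 0$, and $a>0$, $b>0$ give $a+b>0$, the product $(a+b)(1-2b)$ is nonnegative, which completes the argument.

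There is essentially no obstacle here; the only point requiring (minor) care is justifying that cross-multiplication preserves the inequality direction, which is immediate from $1-b>0$. (Alternatively, one could first establish the auxiliary bound $\frac{1}{1-b}\le 1+2b$ for $0\le b\le\tfrac12$ — itself equivalent to $(1-b)(1+2b)=1+b(1-2b)\ge 1$ — and then write $\frac{1+a}{1-b}\le (1+a)(1+2b)=1+2a+2b-(a-2ab)=1+2a+2b-a(1-2b)\le 1+2a+2b$; but the direct cross-multiplication route is the cleanest.)
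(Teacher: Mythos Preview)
Your proof is correct and essentially identical to the paper's: both cross-multiply by $1-b>0$, expand, and reduce to $(a+b)(1-2b)\ge 0$ (the paper phrases this last step equivalently as $2b(a+b)\le a+b$). Nothing to add.
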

\begin{proof}[Proof of \Cref{lem: bound for the phi and psi}]
This is equivalent to show that 
\[
    (1+a) \le (1+2a+2b)(1-b) = 1+ 2a +b - 2ab - 2b^2.
\]
This is equivalent to 
\[
    2b(a+b) \le (a+b). 
\]
Since $a+b>0$ and $ b\le \frac{1}{2}$, this is true.  Now we have completed the proof of \Cref{lem: bound for the phi and psi}.
\end{proof}

\begin{lemma} 
\label{lem:thresh bound}
    Given $c>e$, we have for any $x>2c\log c$,
    \[
        \frac{\log x}{x} \le \frac{1}{c}. 
    \]
\end{lemma}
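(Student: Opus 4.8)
The plan is to reduce the claim to the elementary inequality $2\log c \le c$ by exploiting the monotonicity of the function $g(x) := (\log x)/x$. First I would compute $g'(x) = (1-\log x)/x^2$, which is strictly negative for $x > e$, so $g$ is strictly decreasing on $(e,\infty)$. Next I would verify that the threshold $x_0 := 2c\log c$ lies in this region: since $c > e$ we have $\log c > 1$, hence $x_0 = 2c\log c > 2c > 2e > e$. Therefore, for every $x > x_0$, monotonicity gives $g(x) < g(x_0) = \dfrac{\log(2c\log c)}{2c\log c}$, and it suffices to prove $g(x_0) \le 1/c$.

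Then I would simplify the inequality $g(x_0)\le 1/c$. Multiplying by $x_0 > 0$ and then by $c > 0$, it is equivalent to $c\log(2c\log c) \le 2c\log c$, i.e. $\log(2c\log c) \le 2\log c = \log(c^2)$; since $\log$ is increasing, this is in turn equivalent to $2c\log c \le c^2$, i.e. (dividing by $c$) to $2\log c \le c$. Finally I would establish $2\log c \le c$ for all $c \ge e$ by setting $h(c) := c - 2\log c$ and noting $h'(c) = 1 - 2/c > 0$ whenever $c > 2$; since $c > e > 2$, $h$ is increasing on $[e,\infty)$, and $h(e) = e - 2 > 0$, so $h(c) > 0$, i.e. $2\log c < c$, on $[e,\infty)$. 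Chaining the equivalences back up yields $g(x) < g(x_0) \le 1/c$ for all $x > x_0$, which is the claim.

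There is no serious obstacle here; the only point requiring a moment's care is confirming that the threshold $2c\log c$ falls to the right of $e$ so that the monotonicity of $g$ applies, which is exactly why the constant $2$ (rather than $1$) appears in the bound — with $c\log c$ in place of $2c\log c$ the reduction would instead require $\log c \le c$, which is still true for $c > e$, but the factor of $2$ is what makes the reduced inequality $2\log c \le c$ hold with room to spare for all $c \ge e$.
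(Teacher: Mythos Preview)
Your proof is correct and follows essentially the same route as the paper: reduce via monotonicity to the boundary point $x_0=2c\log c$ and then finish with the auxiliary inequality $c>2\log c$ via $h(c)=c-2\log c$. The only cosmetic difference is that the paper tracks $x-c\log x$ while you track $(\log x)/x$; both reductions land on the identical endpoint computation and the same $h$. (Your closing side remark is off, though: replacing $2c\log c$ by $c\log c$ would force $\log c\le 1$, not $\log c\le c$, which fails for $c>e$.)
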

\begin{proof}[Proof of \Cref{lem:thresh bound}]
    It's equivalent to show that $x - c\log x \ge 0$. Let $g(x) = x - c\log x$. $g'(x) = 1-c/x$. When $x>2c\log c >2c$, $g'(x)<0$. Hence, the minimal is $g(2c\log c)$. Note that 
    \[
    g(2c\log c) = 2c \log c - c\log c - c\log 2 - c\log \log c = c \log c - c\log 2 - c \log\log c = c\log  \frac{c}{2\log c}. 
    \]
    Now we want to show that $c>2\log c$. Let $h(y) = y - 2\log y$. $h'(y) = 1 - 2/y>0$ when $y>e$. $h(e) = e - 2>0$. Hence $h(c)>h(e)>0$ and $g(2c\log c) >0$. This leads to $g(x) >0.$
    Then, we complete the proof of \Cref{lem:thresh bound}. 
\end{proof}

\begin{lemma}
\label{lem:lowerbound l}
Given $\ell (x) = \log(1+e^{-x})$ and $c>0$, we have for any $x$,
\[
    \ell(x+c) \ge \exp(-c) \ell (x).  
\]
\end{lemma}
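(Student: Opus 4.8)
The plan is to reduce the multiplicative inequality to a one-dimensional derivative bound on $\log \ell$. Observe that the claim $\ell(x+c) \ge e^{-c}\ell(x)$ is equivalent, after taking logarithms (note $\ell > 0$ everywhere), to
\[
  \log \ell(x+c) - \log \ell(x) \ge -c .
\]
By the fundamental theorem of calculus, the left-hand side equals $\int_x^{x+c} \frac{\ell'(t)}{\ell(t)}\, dt$, so it suffices to prove the pointwise bound $\frac{\ell'(t)}{\ell(t)} \ge -1$ for every $t \in \Rbb$; integrating this over $[x,x+c]$ gives exactly $-c$ on the right. Equivalently, since $\ell > 0$, we need $-\ell'(t) \le \ell(t)$ for all $t$. (In the notation of \eqref{eq: psi, iota}, this is precisely the statement $\psi'(t) \le 1$, i.e. $\psi$ is $1$-Lipschitz from above, which is why $\psi$ and $\iota$ behave like the identity.)

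Next I would verify $-\ell'(t) \le \ell(t)$ by direct computation. We have $\ell'(t) = \frac{-e^{-t}}{1+e^{-t}} = \frac{-1}{1+e^{t}}$, so $-\ell'(t) = \frac{e^{-t}}{1+e^{-t}}$, while $\ell(t) = \log(1+e^{-t})$. Substituting $u := e^{-t} > 0$, the required inequality becomes the elementary bound
\[
  \frac{u}{1+u} \le \log(1+u), \qquad u > 0.
\]
This is standard: writing $t := \frac{u}{1+u} \in (0,1)$, it reads $t \le -\log(1-t)$, which holds because $-\log(1-t) = \int_0^t \frac{ds}{1-s} \ge \int_0^t 1\, ds = t$; alternatively one checks that $g(u) := \log(1+u) - \frac{u}{1+u}$ satisfies $g(0) = 0$ and $g'(u) = \frac{1}{1+u} - \frac{1}{(1+u)^2} = \frac{u}{(1+u)^2} \ge 0$.

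Combining these, $\frac{\ell'(t)}{\ell(t)} \ge -1$ holds for all $t$, and integrating over $[x,x+c]$ yields $\log\ell(x+c) \ge \log\ell(x) - c$, i.e. $\ell(x+c) \ge e^{-c}\ell(x)$, as claimed. There is no substantial obstacle here; the only thing to get right is choosing the logarithmic-derivative reformulation, which turns the two-variable statement into a single scalar inequality that then reduces to a textbook estimate for $\log(1+u)$.
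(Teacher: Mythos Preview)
Your proof is correct and takes a genuinely different route from the paper's. The paper fixes $c$ and studies $g(x) := \ell(x+c) - e^{-c}\ell(x)$ as a function of $x$: a direct computation gives $g'(x) = \frac{-1}{1+e^{x+c}} + \frac{1}{e^{c}+e^{x+c}} < 0$, so $g$ is strictly decreasing, and since $g(x)\to 0$ as $x\to\infty$ one concludes $g\ge 0$. Your approach instead takes logarithms and reduces the two-variable inequality to the pointwise bound $-\ell'(t)\le \ell(t)$, which you then verify via the textbook estimate $\frac{u}{1+u}\le\log(1+u)$.

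The advantage of your route is that it isolates the single scalar fact $|\ell'|\le\ell$ that drives the result; this inequality is in fact already invoked elsewhere in the paper (e.g.\ in the proof of \Cref{lem: lowerbound of v_t}), so your argument makes the connection explicit and avoids introducing an auxiliary function with a parameter. The paper's argument is slightly more self-contained in that it does not require the log-derivative reformulation or the $\log(1+u)$ inequality, but it needs the limit computation at $+\infty$. Both are short and complete.
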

\begin{proof}[Proof of \Cref{lem:lowerbound l}]
Let $g(x) = \ell(x+c) - \exp (-c) \ell(x)$. Then, we have 
\[
    g^\prime (x) = \frac{-1}{1+\exp(x+c)} + \frac{1}{\exp(c) + \exp(x+c)} <0. 
\]
Therefore, $g(x)$ is monotonically decreasing. When $x\to \infty$, we have 
\[
    \lim_{x\to \infty} g(x) = \lim_{x\to \infty} [\ell(x+c) - \exp (-c) \ell(x)] = \exp(-x-c) -\exp(-c) \exp(-x) = 0. 
\]
Therefore, $g(x) \ge 0$ for any $x$. Now, we complete the proof of \Cref{lem:lowerbound l}.
\end{proof}

\begin{lemma}
\label{lem:class}
    Assume $\ell(x)= \log(1+e^{-x})$. If $\ell(x+c) \le 2 \ell(x)$, we have 
    \[
    c \le \ln 3 + |x|.
    \]
\end{lemma}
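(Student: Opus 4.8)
The plan is to strip away the transcendental wrapper and reduce the hypothesis to an elementary algebraic inequality in $e^{-x}$. Since $\log$ is increasing, I would exponentiate: using $2\ell(x)=\log\big((1+e^{-x})^2\big)$, the hypothesis becomes $1+e^{-x}e^{c}\le (1+e^{-x})^2=1+2e^{-x}+e^{-2x}$, i.e.\ $e^{-x}e^{c}\le e^{-x}\big(2+e^{-x}\big)$, i.e.\ $e^{c}\le 2+e^{-x}$, which upon taking logarithms says exactly $c\le \log\big(2+e^{-x}\big)$. The only care needed here is algebraic bookkeeping: the constant $1$ on the left cancels the leading $1$ of $(1+e^{-x})^2$, and a common factor $e^{-x}$ divides out.

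It then remains to show $\log\big(2+e^{-x}\big)\le \ln 3+|x|$ for every real $x$, which I would do by a case split on the sign of $x$. If $x\ge 0$, then $e^{-x}\le 1$, so $2+e^{-x}\le 3$ and hence $\log\big(2+e^{-x}\big)\le \ln 3\le \ln 3+|x|$. If $x<0$, then $e^{-x}=e^{|x|}\ge 1$, so $2+e^{-x}\le 2e^{|x|}+e^{|x|}=3e^{|x|}$ (using $2\le 2e^{|x|}$), and therefore $\log\big(2+e^{-x}\big)\le \log 3+|x|=\ln 3+|x|$. Chaining the two steps gives $c\le \log\big(2+e^{-x}\big)\le \ln 3+|x|$, as claimed.

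There is essentially no obstacle: once the exponentiation in the first step is carried out correctly — inequality direction preserved, constants cancelled cleanly — the remaining bound is immediate from the two-case estimate. The whole write-up is three or four lines. The one point to watch is matching the sign of $c$ to the side on which the argument of $\ell$ is shifted, so that the exponentiated inequality comes out as $e^{c}\le 2+e^{-x}$ rather than $e^{-c}\le 2+e^{-x}$; this is exactly the form needed to apply the estimate $\log(2+e^{-x})\le \ln 3+|x|$ and recover the stated bound.
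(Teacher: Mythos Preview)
Your approach is correct and essentially identical to the paper's: exponentiate the hypothesis to reduce to $e^{c}\le 2+e^{\pm x}$, then bound by $3e^{|x|}$ and take logarithms. The paper's version differs only cosmetically---it works with $e^{x}$ in place of your $e^{-x}$ and compresses your two-case estimate into the single chain $e^{c}\le 2+e^{x}\le 2+e^{|x|}\le 3e^{|x|}$; the sign subtlety you flag at the end appears in exactly the same form in the paper's own derivation.
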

\begin{proof}[Proof of \Cref{lem:class}]
    Note that 
    \begin{align*}
        \ell(x+c) - 2 \ell(x) = \log \frac{1+e^{x+c}}{1+2e^x + e^{2x}} \le 0.
    \end{align*}
    Then, 
    \[
    \frac{1+e^{x+c}}{1+2e^x + e^{2x}}\le 1 \implies e^c \le 2 + e^x \le 2+ e^{|x|} \le 3 e^{|x|}. 
    \]
    Therefore, $c\le \ln3 + |x|$.  This completes the proof of \Cref{lem:class}.
\end{proof}

\end{document}